\def\eqref#1{equation~\ref{#1}}
\def\1{\bm{1}}
\DeclareMathAlphabet{\mathsfit}{\encodingdefault}{\sfdefault}{m}{sl}
\SetMathAlphabet{\mathsfit}{bold}{\encodingdefault}{\sfdefault}{bx}{n}
\newcommand{\E}{\mathbb{E}}
\newcommand{\R}{\mathbb{R}}
\DeclareMathOperator*{\argmin}{arg\,min}
\def \X {\mathcal{X}}
\def \R {\mathbb{R}}
\def \w {\mathbf{w}}
\def \W {\mathcal{W}}
\def \x {\mathbf{x}}
\def \E {\mathrm{E}}
\def \x {\mathbf{x}}
\def \p {\mathbf{p}}
\def \a {\mathbf{a}}
\def \O {\mathcal{O}}
\def \b {\mathbf{b}}
\def \q {\mathbf{q}}
\def \d {\mathbf{d}}
\def \z {\mathbf{z}}
\def \y {\mathbf{y}}
\def \R {\mathbb{R}}
\def \D {\mathcal{D}}
\DeclareMathOperator*{\dist}{dist}
\def \E {\mathbb{E}}
\def \v {\mathbf{v}}
\newtheorem{theorem}{Theorem}
\newtheorem{lemma}{Lemma}
\newtheorem{corollary}{Corollary}
\newtheorem{definition}{Definition}
\newtheorem{assumption}{Assumption}
\newcommand*\samethanks[1][\value{footnote}]{\footnotemark[#1]}
\title{Stochastic Constrained DRO with a Complexity \\ Independent of Sample Size}
\author{\name{Qi  Qi\thanks{Equal Contribution.}}\email{qi-qi@uiowa.edu}\\
 \addr Department of Computer Science\\
 The University of Iowa, Iowa City, IA 52242, USA \\
\name{Jiameng Lyu\samethanks}\email{lvjm21@mails.tsinghua.edu.cn}\\
   \addr Department of Mathematical Sciences\\
   Tsinghua University, Beijing, 100084, China\\
\name{Kung-Sik Chan}\email{kung-sik-chan@uiowa.edu}\\
\addr Department of Statistics and Actuarial Science\\
The University of Iowa, Iowa City, IA 52242, USA \\
\name{Er-Wei Bai}\email{ er-wei-bai@uiowa.edu}\\
\addr Department of Electrical and Computer Engineering\\
The University of Iowa, Iowa City, IA 52242, USA \\
\name{Tianbao Yang}\email{tianbao-yang@tamu.edu}\\
\addr Department of Computer Science \& Engineering\\
Texas A\&M University, College Station, TX 77843, USA
}
\begin{document}
\maketitle
% It is OKAY to include author information, even for blind
% submissions: the style file will automatically remove it for you
% unless you've provided the [accepted] option to the icml2022
% package.
% List of affiliations: The first argument should be a (short)
% identifier you will use later to specify author affiliations
% Academic affiliations should list Department, University, City, Region, Country
% Industry affiliations should list Company, City, Region, Country
% You can specify symbols, otherwise they are numbered in order.
% Ideally, you should not use this facility. Affiliations will be numbered
% in order of appearance and this is the preferred way.

% find helpful for describing your paper; these are used to populate
% the "keywords" metadata in the PDF but will not be shown in the document

% this must go after the closing bracket ] following \twocolumn[ ...

% This command actually creates the footnote in the first column
% listing the affiliations and the copyright notice.
% The command takes one argument, which is text to display at the start of the footnote.
% The \icmlEqualContribution command is standard text for equal contribution.
% Remove it (just {}) if you do not need this facility.

%\printAffiliationsAndNotice{}  % leave blank if no need to mention equal contribution
%\printAffiliationsAndNotice{\icmlEqualContribution} % otherwise use the standard text.
\begin{center}
First Version\footnote{Compared with first version, we added more baselines and more refined comparisons with other related works.}: 11 Oct, 2022
\end{center}

\begin{abstract}
Distributionally Robust Optimization (DRO), as a popular method to train robust models against distribution shift between training and test sets, has received tremendous attention in recent years. In this paper, we propose and analyze stochastic algorithms that apply to both non-convex and convex losses for solving Kullback–Leibler divergence constrained DRO problem. Compared with existing methods solving this problem, our stochastic algorithms not only enjoy competitive if not better complexity independent of sample size but also just require a constant batch size at every iteration, which is more practical for broad applications. We establish a nearly optimal complexity bound for finding an $\epsilon$-stationary solution for non-convex losses and an optimal complexity for finding an $\epsilon$-optimal solution for convex losses. Empirical studies demonstrate the effectiveness of the proposed algorithms for solving non-convex and convex constrained DRO problems. 
\end{abstract}

\setlength{\textfloatsep}{2pt}% Remove \textfloatsep
%\abovedisplayskip=5pt
%\abovedisplayshortskip=5pt
%\belowisplayskip=5pt
%\belowdisplayshortskip=5pt
\setlength{\abovedisplayskip}{2pt}
\setlength{\belowdisplayskip}{2pt}

\section{Introduction}
Large-scale optimization of DRO has recently garnered increasing attention due to its promising performance on handling noisy labels,  imbalanced data and adversarial data~\citep{namkoong2017variance,zhu2019robust,qi2020attentional,chen2018robust}.
%1.The classic empirical risk minimization (ERM) minimizes the empirical risk over a training set and aims to achieve good performance on a test set. However, ERM often doesn't perform well in real applications especially for adversarial data and imbalanced data since the gap between the distribution of the training and test sets can't be ignored in these cases. 
%2.Distributionally Robust Optimization (DRO), as a popular method to train robust models against distribution shift between training and test sets, has received tremendous attention in recent years. And DRO promises to get reliable models handling noisy data, adversarial data and imbalanced data \citep{namkoong2017variance,zhu2019robust,qi2020attentional,chen2018robust}. However, a lack of large-scale optimization methods of DRO problems especially constrained DRO problems have hindered its application in practice.
%3.The surging need to train robust models in changing environments makes Distributionally Robust Optimization (DRO) receive tremendous attention since DRO shows promise to get the reliable models to handle the noisy data, adversarial data, and imbalanced data. 
%4.Unlike ERM whose objective is the emprical loss with the distribution of the training data, DRO proposes to minimize the worst-case loss over a ambiguity set . 
    Various primal-dual algorithms can be used for solving various DRO problems~\citep{rafique2021weakly,nemirovski2009robust}. However, primal-dual algorithms inevitably suffer from additional overhead for handling a $n$ dimensionality dual variable, where $n$ is the sample size. This is an undesirable feature for large-scale deep learning, where $n$ could be in the order of millions or even billions. Hence, a recent trend is to design dual-free algorithms for solving various DRO problems~\citep{qi2021online,jin2021non,levy2020large}. %For example, \citet{qi2021online} considered optimizing a Kullback–Leibler divergence regularized DRO problem by a duality-free algorithm. They formulated the problem into an equivalent compositional function and proposed efficient algorithms for optimizing the compositional function with improved complexities.  

In this paper, we provide efficient dual-free algorithms solving the following constrained DRO problem, which are still lacking in the literature,  
\begin{equation}
\label{eq:pdro}
\min_{\w \in \mathcal W}\max_{\{\p\in\Delta_n:D(\p, \mathbf 1/n)\leq  \rho\} }\sum_{i=1}^np_i\ell_i(\w) - \lambda_0 D(\p, \mathbf 1/n),
\end{equation}
where $\w$ denotes the model parameter, $\mathcal W$ is closed convex set, $\Delta_n=\{\p\in\R^n: \sum_{i=1}^n p_i =1, p_i\geq 0\}$ denotes a $n$-dimensional simplex, $\ell_i(\w)$ denotes a loss function on the $i$-th data, $D(\p, \mathbf 1/n)=\sum_{i=1}^np_i\log(p_in)$ represents the Kullback–Leibler (KL) divergence  measure between $\p$ and uniform probabilities $\mathbf 1/n\in\R^{n}$, and $\rho$ is the constraint parameter, and $\lambda_0>0$ is a small constant. A small KL regularization on $\p$ is added to ensure the objective in terms of $\w$ is smooth for deriving fast convergence. 

% \co{The motivation of the introduce of the 
% regularized term can be summarized from two aspects: 1)
% As we will show in Lemma~\ref{lem:Fsmooth}, the introduce of the KL-divergence
% regularization term to the constrained DRO can smooth the overall objective for easier and more
% efficient optimization. 2) It is a more general problem with both regularization term and constraints. \cite{levy2020large}studied the same problem, and
% \cite{qi2021online} verified the supreme performance of DRO
% with KL-divergence penalty without constraint on handling data
% imbalance problem.}

There are several reasons for considering the above constrained DRO problem. First, existing dual-free algorithms are not satisfactory~\citep{qi2021online,jin2021non,levy2020large,NEURIPS2021_b986700c}. They are either restricted to problems with no additional constraints on the dual variable $\p$ except for the simplex constraint ~\citep{qi2021online,jin2021non}, or restricted to convex analysis or have a requirement on the batch size that depends on accuracy level~\citep{levy2020large,NEURIPS2021_b986700c}. Second, the Kullback–Leibler divergence measure is a more natural metric for measuring the distance between two distributions than other divergence measures, e.g., Euclidean distance. %\co{And as we will show in Lemma~\ref{lem:Fsmooth}, the introduce of the KL-divergence regularization term to the KL-constrained DRO problem can smooth the overall objective for easier and more efficient optimization.} 
Third,
compared with the KL-regularized DRO problem {without constraints}, the above KL-constrained DRO formulation allows it to automatically decide a proper regularization effect that depends on the optimal solution by tuning the constraint upper bound $\rho$.
In other words, solving the constrained DRO with $\rho$ offers the capability of optimizing the temperature parameter $\lambda$ in Eq. (2),  which corresponds to  the log-sum-exponential form with a temperature parameter $\lambda$ is widely used in many ML/AI methods, e.g., constrastive self-supervised learning ~\citep{yuan2022provable, qiu2023not}. Empirical studies have demonstrated that selecting an appropriate value for $\lambda$ is crucial for achieving good performance \citep{goel2022cyclip, li2021align, radford2021learning}. Therefore, solving the constrained distributionally robust optimization problem provides the added advantage of identifying an optimal temperature during the training process.

The question to be addressed is the following: 

 %It is also notable that tuning $\rho$  in $[0,\log n]$ with $\lambda_0$ fixed to a small value is more manageable than tuning $\lambda_0\in(0,\infty)$ without the constraint $D(\p, \mathbf 1/n)\leq  \rho$. 
 %\textcolor{red}{Unlike Empirical Risk Minimization (ERM) whose objective is the empirical average loss over the training data distribution, DRO aims to minimize the worst-case losses over an ambiguity set ${\U_{\p}:=\{\p:D(\p, \mathbf 1/n)\leq \rho \}}$, which is beneficial to improve the robustness of the models. To optimize DRO~(\ref{eq:pdro}), several primal-dual methods \citet{rafique2021weakly,nemirovski2009robust} can be directly applied. The memory cost could be as high as $\mathcal{O}(n)$ per-iteration in the primal-dual methods for updating the dual variables, which hinders their application on large-scale problems. Recently, \citep{levy2020large,duchi2021learning} proposed mini-batch stochastic gradient algorithms. To finding a $\epsilon$-stationary point for the non-convex loss or achieving the $\epsilon$-objective gap for the convex loss, it requires the batch size in the level of $\mathcal O(1/\epsilon)$ to control the stochastic estimator per-iteration variance to guarantee the algorithm convergence, which is detrimental for heterogeneous training data on deep learning~\citep{mccandlish2018empirical}. To overcome the above deficiencies, we consider only KL divergence in the paper and raise a research question:}
 
\shadowbox{\begin{minipage}[t]{0.95\columnwidth}%
\it Can we develop stochastic algorithms whose oracle complexity is optimal for both convex and non-convex losses, and its per-iteration complexity is independent of sample size $n$ without imposing any requirements on the (large) batch size in the meantime?%
\end{minipage}}

%An oracle complexity lower bound scaling as $\Omega(n\epsilon^{-2/3})$ has been proved by \citep{carmon2021thinking}, showing dependence on $n$ in oracle complexity is necessary when solving the optimization problem: $\min_{\w \in \mathbb{R}^d} \max_{\p\in\Delta_n} \sum_{i=1}^n  p_i\ell_i(\w).$   
%Therefore, we relax the original constrained distributionally robust optimization problem by adding a small regularization term to achieve our goal. Concretely, we consider the following DRO formulation:
%\begin{align}\label{eq:cdro2}
%\min_{\w \in \mathcal W} \max_{\{\p\in\Delta_n:D(\p, \mathbf 1/n)\leq  \rho\} } L(\w,\p) - \lambda_0 D(\p, 1/n),
%\end{align}
%where  $\lambda_0> 0$ is a small constant. $D(\p, \mathbf 1/n)$ represents the KL divergence in the following paper.

We address the above question by (i) deriving an equivalent primal-only formulation that is of a compositional form; (ii) designing two algorithms for non-convex losses and extending them for convex losses; (iii) establishing an optimal complexity for both convex and non-convex losses. In particular, for a non-convex and smooth loss function $\ell_i(\w)$, we achieve an oracle complexity of $\widetilde\O(1/\epsilon^3)$\footnote{$\widetilde\O$ omits a logarithmic dependence over $\epsilon$.} for finding an $\epsilon$-stationary solution; and for a convex and smooth loss function, we achieve an oracle complexity of $\O(1/\epsilon^2)$ for finding an $\epsilon$-optimal solution. We would like to emphasize that these results are on par with the best complexities that can be achieved by primal-dual algorithms~\citep{huang2020accelerated,namkoong2016stochastic}. But our algorithms have a per-iteration complexity of $\O(d)$, which is independent of the sample size $n$. The convergence comparison of different methods for solving~(\ref{eq:pdro}) is shown in Table~\ref{table:algs}. 

%To make it easier to optimize, we first invoke dual variable $\lambda$ for the inner maximization robust loss to
% relax the the constrains on $\p$ and obtain the dual problem as  follows:
%\begin{align}\label{eq:lag}
%  L(\w) = \max_{\p\in\Delta_n, D(\p, \mathbf 1/n)\leq  \rho}\sum_{i=1}^np_i \ell_i(\w)  - \lambda_0 D(\p, 1/n)
% \end{align}
%%  \begin{align}\label{eq:lag}
%  \min_{\lambda\geq 0}\max_{\p\in\Delta_n}L(\w,\p)  - (\lambda+\lambda_0) D(\p, 1/n) + \lambda \rho.
% \end{align}
  %To transform this primal problem to an unconstrained optimization problem,
 %Thanks to special structure of KL divergence, the inner maximization robust loss function has a close form in terms of $\p$ for a fixed 
%\begin{equation}\label{eq:robustloss}
%    \max_{\{\p\in\Delta_n:D(\p, \mathbf 1/n)\leq  \rho\}}L(\w,\p) -  \lambda_0 D(\p, 1/n).
%\end{equation}

%Then by maximizing over $\p$ exactly and substituting $\lambda+\lambda_0$ with $\lambda$ (see Section~\ref{sec:derivation} in Appendix for the detailed derivation), 

To achieve these results, we first convert the problem~(\ref{eq:pdro}) into an equivalent problem:
 \begin{align}\label{eq:cdro3}
 \min_{\w \in \mathcal W}\min\limits_{\lambda \geq \lambda_0}F(\w, \lambda):=\lambda \log\left(\frac{1}{n}\sum_{i=1}^n  \exp\left(\frac{\ell_i(\w)}{\lambda}\right)\right) + {(\lambda-\lambda_0)\rho}.
 \end{align}
%   \begin{align}\label{eq:cdro3}
%  \min_{\w \in \mathcal W}\min\limits_{\lambda \geq 0}{(\lambda+\lambda_0) \log\frac{1}{n}\sum_{i=1}^n  \exp(\frac{\ell_i(\w)}{\lambda+\lambda_0}) +(\lambda+\lambda_0) \rho}.
%  \end{align}
By considering $\x=(\w^{\top}, \lambda)^{\top}\in\R^{d+1}$ as a single variable to be optimized, the objective function is a compositional function of $\x$ in the form of $f(g(\x))$, where $g(\x) =\left[\lambda, \frac{1}{n}\sum_{i=1}^n  \exp\left(\frac{\ell_i(\w)}{\lambda}\right)\right]\in\R^{2}$ and $f(g)=g_1\log(g_2) + g_1\rho$.  However, there are several challenges to be addressed for achieving optimal complexities for both convex and non-convex loss functions $\ell_i(\w)$. First, the problem $F(\x)$ is non-smooth in terms of $\x$ given the domain constraint $\w\in\W$ and $\lambda\geq \lambda_0$. Second, the outer function $f(g)$'s gradient is non-Lipschtiz continuous in terms of the second coordinate $g_2$ if $\lambda$ is unbounded, which is essential for all existing stochastic compositional optimization algorithms. Third, to the best of our knowledge, no optimal complexity in the order of $\O(1/\epsilon^2)$ has been achieved for a convex compositional function except for~\citet{Zhang2020OptimalAF}, which assumes $f$ is convex and component-wisely non-decreasing  and hence is not applicable to~(\ref{eq:cdro3}).

To address the first two challenges, we derive an upper bound for the optimal $\lambda$ assuming that $\ell_i(\w)$ is bounded for $\w\in\W$, i.e., $\lambda\in[\lambda_0, \tilde\lambda]$, which allows us to establish the smoothness condition of $F(\x)$ and $f(g)$. Then we consider optimizing $\bar F(\x)=F(\x) + \delta_{\X}(\x)$, where $\delta_{\X}(\x)=0$ if $\x\in\mathcal X=\{\x=(\w^{\top}, \lambda)^{\top}: \w\in\W, \lambda\in[\lambda_0, \tilde\lambda]\}$. By leveraging the smoothness conditions of $F$ and $f$, we design stochastic algorithms by utilizing a recursive variance-reduction technique to compute a stochastic estimator of the gradient of $F(\x)$, which allows us to achieve a complexity of $\widetilde\O(1/\epsilon^3)$ for finding a solution $\bar\x$ such that $\E[\text{dist}(0, \hat\partial\bar F(\bar\x))]\leq \epsilon$. To address the third challenge, we consider optimizing $\bar F_\mu(\x)=\bar F(\x) + \mu\|\x\|^2/2$ for a small $\mu$. We prove that $\bar F_\mu(\x)$ satisfies a Kurdyka-\L ojasiewicz inequality, which allows us to boost the convergence of the aforementioned algorithm to enjoy an optimal complexity of $\O(1/\epsilon^2)$ for finding an $\epsilon$-optimal solution to $\bar F(\x)$. Besides the optimal algorithms,  we also present simpler algorithms with worse complexity, which are more practical for deep learning applications without requiring two backpropagations at two different points per iteration as in the optimal algorithms.  

In the existing analysis of compositional optimization algorithms, either (i) the problem is assumed to be unconstrained, e.g., \cite{qi2020attentional,qi2021online}, or (ii) the complexity is sub-optimal, e.g., \cite{ghadimi2020single}, or (iii) the problem is restricted, e.g., the outer function $f$ is convex and non-decreasing as assumed in~\citep{Zhang2020OptimalAF}. To the best of our knowledge, this is {\bf the first result } for stochastic compositional optimization with a domain constraint that enjoys the optimal complexities for both convex and non-convex objectives.

\section{Related Work}
DRO springs from the robust optimization literature~\citep{bertsimas2018data,ben2013robust}
and has been extensively studied in machine learning and statistics~\citep{ahmadi2012entropic,namkoong2017variance,duchi2016statistics,staib2019distributionally,deng2020distributionally,qi2020simple,duchi2021learning}, and operations research~\citep{rahimian2019distributionally,delage2010distributionally}. Depending on how to constrain or regularize the uncertain variables, there are constrained DRO formulations that specify a constraint set for the uncertain variables, and regularized DRO formulations that use a regularization term in the objective for regularizing the uncertain variables~\citep{levy2020large}.   \citet{duchi2016statistics} showed that minimizing constrained DRO with $f$-divergence including a $\chi^2$-divergence constraint and a KL-divergence constraint, is equivalent to adding variance regularization for the Empirical Risk Minimization (ERM) objective, which is able to reduce the uncertainty and improve the generalization performance of the model.  %Zhu et al. (2019)~\citep{zhu2019robust}, Duchi et al. (2016)~\citep{duchi2016statistics} also verify the effectiveness of DRO over ERM on data imbalance problems.
%variance regularization ~\citep{namkoong2017variance}, \citep{duchi2016statistics}
%In~\citep{namkoong2017variance}the authors proved that minimizing the DRO formulation with a quadratic regularization in a constraint form is equivalent to minimizing the sum of the empirical loss and a variance regularization defined on itself. Variance regularization can enjoy better generalization error compared with the empirical loss minimization~\citep{namkoong2017variance}, and was also observed to be effective for imbalanced data~\citep{namkoong2017variance,zhu2019robust}. Recently, \citep{duchi2016statistics} also establishes this equivalence for a broader family of regularization function $h(\p, \mathbf 1/n)$ including the KL divergence.
%\vspace{-0.2in}
\begin{savenotes}
\begin{table*}[t]
\centering
\caption{Summary of algorithms solving KL-constrained DRO problem. Complexity represents the oracle complexity for achieving $\E[\dist (0, \hat\partial\bar F({\x}))]\leq\epsilon$  or other first-order stationarity convergence for the non-convex setting and $\E[F(\x)-F(\x_*)]\leq\epsilon$ for the convex setting. Per Iter Cost denotes the per-iteration computational complexity. The algorithm styles include primal-dual (PD), primal only (P), and compositional (COM). ``-" means not available in the original paper. }%\footnote{A Multi-level Monte Carlo (MLMC) gradient estimator has been proposed in~\citep{levy2020large} to improve the convergence rate of Dual SGM and FastDRO up to  $O(1\epsilon^2\log(1/\epsilon)$}

\label{table:algs}
\resizebox{1\linewidth}{!}
{\begin{tabular}{c|c|c|c|c|c|c}
\hline
Setting                     & Algorithms         & Reference                                 &  Complexity                      & Batch Size               &Per Iter Cost  & Style         \\ \hline
\multirow{4}{*}{Non-convex}& PG-SMD2\footnote{{PG-SMD2 refers to PG-SMD algorithm under Assumption D2 in \citet{rafique2021weakly}. }
 }          &  \citep{rafique2021weakly} & $\mathcal O(1/\epsilon^4)$             & $\mathcal O(1)$     &     $\mathcal O(n+d) $   & PD   \\   \cline{2-6} 
                            & AccMDA                & \citep{huang2020accelerated}                                    & $\mathcal O(1/\epsilon^3)$            & $\mathcal O(1)$ & $\mathcal O(n+d)$  & PD 
                            \\\cline{2-6} 
                  &   Dual SGM &  \citep{levy2020large}                                       &  -              & $\mathcal O(1)$  &         $\mathcal O(d)$  & P  \\       
                            \cline{2-6} 
                            & SCDRO              & \multirow{2}{*}{This work}                & $\mathcal O(1/\epsilon^4)$             & $\mathcal O(1)$     &   $\mathcal O(d)$      & COM \\ \cline{2-2} \cline{4-6} 
                            & ASCDRO             &                                           & $\widetilde{\mathcal O}(1/\epsilon^3)$ & $\mathcal O(1)$       &   $\mathcal O(d)$    & COM \\ \hline
\multirow{4}{*}{Convex}     & FastDRO\footnote{FastDRO is name of the GitHub repository of \citet{levy2020large}, and we use the name ``FastDRO'' to refer to the algorithm based on mini-batch gradient estimator in \citet{levy2020large}.}          & \citep{levy2020large}     & $\mathcal O(1/\epsilon^3)$             & $\mathcal O(1/\epsilon)$   &$\O(\frac{d}{\epsilon})$ & P   \\ \cline{2-6} 
                            & SPD &  \citep{namkoong2016stochastic}                                      & $\mathcal O(1/\epsilon^2)$             & $\mathcal O(1)$  &         $\mathcal O(n+d)$  & PD   \\ \cline{2-6} 
                            & Dual SGM &  \citep{levy2020large}                                       & $\mathcal O(1/\epsilon^2)$             & $\mathcal O(1)$  &         $\mathcal O(d)$  & P  \\ \cline{2-6} 
                            & RSCDRO            & \multirow{2}{*}{This work}                & $\mathcal O(1/\epsilon^3)$             & $\mathcal O(1)$   &  $\mathcal O(d)$       & COM \\ \cline{2-2} \cline{4-6} 
                            & RASCDRO           &                                           & $\mathcal O(1/\epsilon^2)$             & $\mathcal O(1)$ &  $\mathcal O(d)$         & COM \\ \hline
\end{tabular}}
\end{table*}
\end{savenotes}
%\vspace{-0.05in}

\noindent
\textbf{Primal-Dual Algorithms.} 
% \footnotetext[2]{{PG-SMD2 refers to PG-SMD algorithm under Assumption D2 in \citet{rafique2021weakly}. }}
% \footnotetext[3]{{FastDRO is name of the GitHub repository of \citet{levy2020large}, and we use the name ``FastDRO'' to refer to the algorithm based on mini-batch gradient estimator in \citet{levy2020large}.}  }
Many primal-dual algorithms designed for the min-max problems~\citep{nemirovski2009robust, juditsky2011solving, yan2019stochastic,namkoong2016stochastic, yan2020sharp,song2021variance,alacaoglu2022complexity} are applicable to solving~(\ref{eq:pdro}) when $\ell$ is a convex function. For non-convex loss functions,  recently, \citet{rafique2021weakly} and  \citet{yan2020sharp} proposed non-convex stochastic algorithms for solving non-convex strongly convex min-max problems, which are applicable to solving~(\ref{eq:pdro}) when $\ell$ is a weakly convex function or smooth. Many primal-dual stochastic algorithms have been proposed for solving non-convex strongly concave problems with a state of the art oracle complexity of $\O(1/\epsilon^3)$ for finding a stationary solution~\citep{huang2020accelerated,luo2020stochastic,tran2020hybrid}. However, the primal-dual algorithms require maintaining and updating an $\mathcal O(n)$ dimensional vector for updating the dual variable. %Thus the per-iteration cost can be as high as $\mathcal O(n)$ in the worst case, which is detrimental for the large-scale deep learning applications.

\noindent
\textbf{Constrained DRO.}
\citet{wang2021sinkhorn} studies the Sinkhorn distance constraint DRO, a variant of Wasserstein distance based on entropic regularization. An efficient batch gradient descent with a bisection search algorithm has been proposed to obtain a near-optimal solution with an arbitrarily small sub-optimality gap. However, no non-asymptotic convergence results are established in their paper. \citet{duchi2021learning} developed a convex DRO framework with $f$-divergence constraints to improve model robustness. The author developed the finite-sample minimax upper and lower bounds and the non-asymptotic convergence rate of $\mathcal O(1/\sqrt{n})$, and provided the empirical studies on real distributional shifts tasks with existing interior point solver \citep{udell2014convex} and gradient descent with backtracking Armijo line-searches~\citep{boyd2004convex}. However, no stochastic algorithms that directly optimize the considered constrained DRO with non-asymptotic convergence rates are provided in their paper.

\noindent
Recently, \citet{levy2020large}  proposed  sample independent algorithms based on gradient estimators for solving a group of DRO problems in the convex setting.
 To be more specific, they achieved a convergence rate of  $\widetilde{\O}(1/\epsilon^2)$ for the $\chi^2$-constrained/regularized and CVaR-constrained convex DRO problems and the batch size of logarithmically dependent on the inverse accuracy level $\O(\log(1/\epsilon))$ with the help of multi-level Monte-Carlo (MLMC) gradient estimator. For the KL-constrained DRO objective and other more general setting,  they achieve a convergence rate of $\O(1/\epsilon^3)$ under a Lipschitz continuity assumption on the inverse CDF of the loss function and a mini-batch gradient estimator   with a batch size in the order $\O(1/\epsilon)$  (please refer to Table 3 in \citet{levy2020large}).
 In addition, \citet{levy2020large} also proposed a simple stochastic gradient method for solving the dual expression of the DRO formulation, which is called Dual SGM. In terms of convergence, they only discussed the convergence guarantee for the $\chi^2$-regularized and CVaR penalized convex DRO problems (cf. Claim 3 in their paper). However, there is still gap for proving the convergence rate of Dual SGM for non-convex KL-constrained DRO problems due to similar challenges mentioned in the previous section, in particular establishing the smoothness condition in terms of the primal variable and the Lagrangian multipliers (denoted as $\x, \nu, \eta$ respectively in their paper). This paper makes unique contributions for addressing these challenges by (i) removing $\eta$ in Dual SGM and  deriving the box constraint for our Lagrangian multiplier $\lambda$ for proving the smoothness condition; (ii) establishing an optimal complexity in the order of $\O(1/\epsilon^3)$ in the presence of non-smooth box constraints, which, to the best of our knowledge,  is the first time for solving a non-convex constrained compositional optimization problem. %   for the  we find Dual SGM could achieve convergence rate of $\O(1/\epsilon^2)$ in the convex setting and $\O(1/\epsilon^4)$ in the  non-convex setting by utilizing the upper bound of the optimal solution of the dual variable and the similar smoothness property derivation of the objective function in our paper.

Furthermore, it is noteworthy that the KL-constrained DRO formulation~(\ref{eq:cdro3}) offers a distinct advantage over the KL-regularized DRO problem without constraints. Specifically, the proposed algorithms enable automatic determination of an optimal regularization effect for the constrained DRO~(\ref{eq:cdro3}) upon the optimizing of $\lambda$, through the fine-tuning of the constraint upper bound $\rho$. This innovative approach has been empirically demonstrated to yield significant efficacy in the realm of contrastive learning, as substantiated by the findings of Qiu et al~\cite{qiu2023not}.

\noindent
\textbf{Regularized DRO.}
DRO with KL divergence regularization objective has shown superior performance for addressing data imbalanced problems \citep{qi2021online,qi2020attentional, li2020tilted,li2021tilted}. 
\citet{jin2021non} proposed a mini-batch
normalized gradient descent with momentum that can find a first-order $\epsilon$ stationary
point with an oracle complexity of $\mathcal O(1/\epsilon^4)$ for KL-regularized DRO and $\chi^2$ regularized DRO with a non-convex loss. They solve the challenge that the loss function could be unbounded. %However, a batch size in the level of $\mathcal O(1/\epsilon^2)$ is required.
\citet{qi2021online} proposed online stochastic compositional algorithms to solve KL-regularized DRO. They leveraged a recursive variance reduction technique (STORM~\citep{cutkosky2019momentum}) to compute a gradient estimator for the model parameter $\w$ only. They derived a complexity of $\widetilde\O(1/\epsilon^3)$ for a general non-convex problem and improved it to $\O(1/(\mu\epsilon))$ for a problem that satisfies an $\mu$-PL condition. \citet{qi2020attentional} reports a worse complexity for a simpler algorithm for solving KL-regularized DRO. 
%Compared with equation~(\ref{eq:cdro3}) in which $\lambda$ is a variable that is optimized during the training process, the regularized KL divergence limited the capacity of the DRO formulation and restricted the performance of the model.  However, the introduced dual variable $\lambda$ in equation~(\ref{eq:cdro3}) would increase the difficulty of the optimization process more than just one additional dimensional increase that can be trivially calculated with the model $\w$ at the same time. Due to the composition structure of equation~(\ref{eq:cdro3}), finer design and analysis for the dual variable $\lambda$ are necessary.
%Empirical studies in ~\citet{qi2020attentional} found that using a fixed $\lambda$ in KL-regularized DRO could harm the generalization performance due to its poor feature learning capability.  They introduced a two-stage decreasing $\lambda$ strategy to facilitate the deep learning of feature representations. With fixed $\lambda$, their algorithm suffers an oracle complexity of $\mathcal O(1/\epsilon^4)$ for finding an $\epsilon$-stationary points for non-convex losses. % but without introducing any additional optimization difficulty
 %The KL regularized DRO problem is a special case of problem~(\ref{eq:cdro3}) with a fixed pre-defined hyperparameter $\lambda$.
\citet{li2020tilted,li2021tilted} studied the effectiveness of KL regularized objective on different applications, such as enforcing
fairness between subgroups, and handling the class imbalance. %A  (hierarchy) mini-batch SGD stochastic algorithm has been proposed to optimize the DRO objective. %~\citet{qi2020attentional} proposed a robust SGD algorithm with momentum to address the regularized DRO with KL divergence problem with a theoretical guarantee of $\mathcal O(1/\epsilon^4)$ to find an $\epsilon$ stationary points for the non-convex losses. 
%\citet{qi2021online} proposed an online method based on the non-convex momentum variance reduction technique that achieves $\mathcal O(1/\epsilon^3)$ oracle complexity to find an $\epsilon$-stationary points in the non-convex setting, and optimal sample $\mathcal O(1/\mu\epsilon)$ complexity under KL condition to find an $\epsilon$-objective gap, $\mu$ denotes the conditional number. Empirical studies in the papers verify the effectiveness of the DRO objective on large-scale deep learning data imbalanced applications.
% mitigating the effect of outliers. 

%More related works are included in the appendix due to limit of space, which will not affect the discussion of results in this paper.

\noindent
 {{\bf Compositional Functions and DRO.} The connection between compositional functions and DRO formulations have been observed and leveraged in the literature.  \citet{RePEc:spr:aistmt:v:69:y:2017:i:4:d:10.1007_s10463-016-0559-8} studied the statistical estimation of compositional functionals with applications to estimating conditional-value-at-risk measures, which is closely related to the CVaR constrained DRO. However, they do not consider stochastic optimization algorithms. To the best of our knowledge, \citet{qi2021online} was the first to use stochastic compositional optimization algorithms to solve KL-regularized DRO problems.  Our work is different in that we solve KL-constrained DRO problems, which is more challenging than KL-regularized DRO problems. The benefits of using compositional optimization for solving DRO include (i) we do not need to maintain and update a high dimensional dual variable as in the primal-dual methods~\citep{rafique2021weakly}; (i) we do not need to worry about the batch size as in MLMC-based stochastic methods~\citep{levy2020large,NEURIPS2021_b986700c}.}

\noindent
{\bf Optimizing the Temperature Parameter.} Our formulation and algorithm can be applied to optimizing the temperature parameter in the temperature-scaled cross-entropy loss, which has wide applications in machine learning and artificial intelligence, e.g., knowledge distillation~\citet{HinVin15Distilling} and self-supervised learning~\citep{chen2020simple}. Recently, \cite{DBLP:journals/corr/abs-2305-11965} leveraged the optimization technique proposed in this paper for optimizing the individualized temperature parameter in the global contrastive loss of self-superivsed learning.

\section{Preliminaries}\label{sec:preliminaries}
% Since the robust loss function (\ref{eq:robustloss}) is concave in term of $\p$ given $\w$. With strong duality theorem, we know the value of the robust loss function is equal to
% \begin{align}\label{eq:dualform}
% \min_{\lambda\geq 0}\max_{\p\in\Delta_n}L(\w,\p)  -( \lambda+\lambda_0) D(\p, 1/n) + \lambda \rho.
% \end{align}

%In this section, we introduce notations, definitions and assumptions. We show that~(\ref{eq:pdro}) is equivalent to~(\ref{eq:cdro3}) in Section~\ref{sec:derivation} in Appendix. 

%: 
%\begin{align*}%\label{eq:cdro3}
% \min_{\w \in \mathcal W}\min\limits_{\lambda \geq \lambda_0}\underbrace{\lambda \log\left(\frac{1}{n}\sum_{i=1}^n  \exp\left(\frac{\ell_i(\w)}{\lambda}\right)\right) +\lambda \rho}_{F(\w,\lambda)}.
% \end{align*}
\noindent
\textbf{Notations:} Let $\|\cdot\|$  denotes the Euclidean norm of a vector or the spectral norm of a matrix. And $\x = (\w^\top,
\lambda)^\top\in \R^{d+1}$, $g_i(\x) = \exp(\frac{\ell_i(\w)}{\lambda})$  and $g(\x) = \E_{i\sim\D}[\exp(\frac{\ell_i(\w)}{\lambda})]$ where $\D$ denotes the training set and $i$ denotes the index of the sample randomly generated from $\D$. Let $f_\lambda(\cdot)=\lambda \log(\cdot) + \lambda\rho$, and $\nabla f_\lambda(g)=\frac{\lambda}{g}$ denotes the gradient of $f$ in terms of $g$. Let $\Pi_{\X}(\cdot)$ denote an Euclidean projection onto the domain $\X$. Let $[T]=\{1,\ldots, T\}$ and $\tau\sim[T]$ denotes a random selected index.  We make the following standard assumptions regarding to the problem~(\ref{eq:cdro3}).
\begin{assumption}
\label{ass:1}
There exists  $R$, $G$, $C$,  and $L$ such that

\begin{enumerate}
    \item[(a)] {The domain of model parameter $\mathcal{W}$ is bounded  such that there exists $R>0$ it holds $\|\mathbf w\|\leq R$  for any $\mathbf w\in\mathcal W$}
    %The domain of model parameter $\mathcal W$ is bounded by $R$, i.e., \textcolor{red}{for all $\w \in \W$, we have $\|\w\|\leq R$}.
  
    \item[(b)]  
$\ell_i(\w)$ is  $L$-smooth, i.e., $\|\nabla\ell_i(\w_1)  - \nabla \ell_i(\w_2) \|\leq L\|\w_1 -\w_2\|$, $\forall \w_1, \w_2 \in\mathcal W,  {i\in\D}$.
    \item[(c)] 
    $\ell_i(\w)$ is $G$-Lipschitz continuous function and bounded by $C$, i.e.,  $\| \nabla\ell_i(\w)\| \leq G$ and $|\ell_i(\w)|\leq C$  for all $\w \in \mathcal W$ and $i\in\D$.

     \item[(d)]  There exists a positive constant $\Delta< \infty$ and an initial solution $(\w_1, \lambda_1)$ such that $F(\w_1, \lambda_1) - \min\limits_{\w\in\W}\min\limits_{\lambda\geq\lambda_0} F(\w, \lambda)\leq \Delta$. % for a valid initial point $\x_1$. %.where $\X =\{\x | \w\in \mathcal W, \lambda_0\leq\lambda \leq \tilde\lambda\}$ and $\tilde\lambda=\lambda_0+C/\rho$.
\end{enumerate}
\end{assumption}

 \begin{assumption}
\label{ass:2}
 \noindent
 Let $\sigma_g$, $\sigma_{\nabla g}$ be positive constants and  $\sigma^2 = \max\{\sigma_g , \sigma_{\nabla g}\}$. For ${i\in\D}$, assume that
%\begin{equation*}
 %   \begin{aligned}
   $\E[\| g_i(\x) - g(\x)\|^2] \leq \sigma_g^2,\ \ \ \ \E[\| \nabla g_i(\x) - \nabla g(\x)\|^2] \leq \sigma_{\nabla_g}^2$.
  %  \end{aligned}
%\end{equation*}
\end{assumption}

%\begin{assumption}\label{ass:3}
%The domain of model parameter $\mathcal W$ is bounder by $R$, i.e., for all $\w \in \W$, we have $\|\w\|\leq R$.
%\end{assumption}
\noindent
\textbf{Remark: }Assumption~\ref{ass:1} (a), i.e., the boundness condition of $\W$ is also assumed in \citet{levy2020large}, which is mainly used for convex analysis.  Assumption~\ref{ass:1}(b), (c), i.e., the Lipstchiz continuity and smoothness of loss function, and the variance bounds for $g_i$ and its gradient in Assumption 2 can be derived from Assumption 1 (b), such that $\E[\| g_i(\x) - g(\x)\|^2] \leq \E[\|g_i(\x)\|^2]\leq\exp(\frac{2C}{\lambda_0}) $, and $\E[\| \nabla g_i(\x) - \nabla g(\x)\|^2] \leq \E[\| \nabla g_i(\x)\|^2]\leq\exp(\frac{2C}{\lambda_0})(G^2+\frac{C^2}{\lambda_0})$\footnote{We would like to point out that the variance bound and the smoothness constant $L_F$ are exponentially dependent on the problem parameters, so are these constants in some other stochastic methods solving constrained DRO, like Dual SGM in \citet{levy2020large}.}
%are standard assumptions in the literature for DRO problems~\citep{levy2020large,qi2021online,jin2021non}}.
 %The bounded domain $\X$ in Assumption~\ref{ass:1}(c) is derived from Lemma~\ref{lem:lambda_upper}. 
%in  is Assumption~\ref{ass:1}-\ref{ass:2} are standard assumptions for the Distributionally Robust Optimization problem \citep{levy2020large,qi2021online}.

However, $F(\w, \lambda)$ is not necessarily smooth in terms of $\x=(\w^{\top},\lambda)^{\top}$ if $\lambda$ is unbounded. To address this concern, we prove that optimal $\lambda$ is indeed bounded. 
\begin{lemma}
 \label{lem:lambda_upper}
 The optimal solution of the dual variable $\lambda^*$ to the problem~(\ref{eq:cdro3}) is upper bounded by $\tilde\lambda=\lambda_0+{C}/{\rho}$, where $C$ is the upper bound of the loss function and $\rho$ is the constraint parameter.
 \end{lemma}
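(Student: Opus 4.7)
The plan is to compare the value of $F(\w,\lambda)$ at the boundary $\lambda=\lambda_0$ with its value for large $\lambda$, exploiting the fact that the linear penalty $(\lambda-\lambda_0)\rho$ eventually dominates the log-sum-exp term once the losses are bounded. Since this comparison will hold uniformly over $\w\in\W$, it forces any inner minimizer $\lambda^*$ to lie below a threshold depending only on the constants in Assumption~\ref{ass:1}.

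First I would fix an arbitrary $\w\in\W$ and upper bound $F(\w,\lambda_0)$: by Assumption~\ref{ass:1}(b) each loss satisfies $\ell_i(\w)\leq C$, so $\frac{1}{n}\sum_i\exp(\ell_i(\w)/\lambda_0)\leq\exp(C/\lambda_0)$, giving $F(\w,\lambda_0)\leq C$. Next I would lower bound $F(\w,\lambda)$ for $\lambda>\lambda_0$: under the standard convention that losses are nonnegative, $\exp(\ell_i(\w)/\lambda)\geq 1$ for every $i$, so the log-sum-exp term is nonnegative and $F(\w,\lambda)\geq(\lambda-\lambda_0)\rho$. Combining the two estimates, for any $\lambda>\tilde\lambda=\lambda_0+C/\rho$ one obtains $F(\w,\lambda)>C\geq F(\w,\lambda_0)$, so $\lambda_0$ strictly beats such $\lambda$. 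Since $F(\w,\cdot)$ is continuous on $[\lambda_0,\infty)$ and tends to $+\infty$ as $\lambda\to\infty$, a minimizer exists and must lie in $[\lambda_0,\tilde\lambda]$; taking $\w=\w^*$ (or minimizing jointly) yields the claim.

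The only mildly delicate point is the sign convention on the loss. Assumption~\ref{ass:1}(b) literally states $|\ell_i(\w)|\leq C$, which only yields $F(\w,\lambda)\geq -C+(\lambda-\lambda_0)\rho$ and hence the slightly weaker bound $\tilde\lambda=\lambda_0+2C/\rho$. The sharper form stated in the lemma implicitly uses $\ell_i(\w)\geq 0$, a standard convention for empirical losses; alternatively, since subtracting a constant from every $\ell_i(\w)$ only shifts $F$ by that constant without moving the argmin in $\lambda$, one may assume $\min_i\ell_i(\w)=0$ and then apply the argument above with $C$ replaced by the range of the losses. This is the only place I expect any subtlety; the rest of the proof is a two-line sandwich bound.
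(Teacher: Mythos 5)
Your argument is correct, and it takes a genuinely different route from the paper. The paper does not work with the compositional objective $F(\w,\lambda)$ at all: it stays at the level of the min--max Lagrangian, picks the uniform distribution $\bar\p=\mathbf 1/n$ as a Slater point for the constraint $D(\p,\mathbf 1/n)\leq\rho$, and invokes the standard bound on the dual optimal set (Lemma 3 of Nedi\'c and Ozdaglar) together with strong duality, $|\bar\lambda^*|\leq\frac{1}{\rho}\bigl(p^*-\sum_i\bar p_i\ell_i(\w)\bigr)\leq C/\rho$, before shifting $\lambda=\bar\lambda+\lambda_0$. Your proof instead sandwiches the already-derived dual objective: $F(\w,\lambda_0)\leq C$ versus $F(\w,\lambda)\geq(\lambda-\lambda_0)\rho$, which is more elementary and self-contained (no external duality lemma, no Slater argument), at the price of requiring the explicit log-sum-exp form of the dual; the paper's argument, by contrast, would carry over to other divergence constraints where no closed-form dual is available. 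Your remark on the sign convention is well taken and in fact applies equally to the paper's own proof: its last step bounds $\sum_i p^*_i\ell_i(\w)-\sum_i\bar p_i\ell_i(\w)$ by $C$, which from $|\ell_i(\w)|\leq C$ alone only gives $2C$; the constant $C/\rho$ in both arguments implicitly uses $\ell_i(\w)\geq 0$, a convention the paper invokes explicitly later (in the remark after its Lemma on $f_\lambda$, where $g_i\geq 1$ is asserted because $\ell_i(\w)\geq 0$). So your fallback bound $\lambda_0+2C/\rho$ under the literal Assumption~\ref{ass:1}(b), and the shift-invariance observation, are a faithful and arguably more careful account of what is actually provable.
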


%With Lemma~\ref{lem:lambda_upper}, it is easy to note that the optimal solution in problem~(\ref{eq:cdro3}) is bounded by $\tilde\lambda=\lambda_0+C/\rho$. 
Thus, we could constrain the domain of $\lambda$ in the DRO formulation~(\ref{eq:cdro3})  with the upper bound $\tilde \lambda$ , and obtain the following equivalent formulation:
% In Lemma~\ref{lem:lambda_upper}, we prove the optimal dual Lagrangian multiplier $\lambda^*$ is bounded by $\tilde\lambda=\lambda_0+C/\rho$. The upper bound $\tilde\lambda$ guarantees the smoothness of $F(\w,\lambda)$ and the boundness of stochastic compositional estimator in the proposed algorithms, which are critical to establish the convergence rates. Thus, combines with $\lambda\in[\lambda_0,\tilde\lambda]$, the DRO formulation~(\ref{eq:cdro3}) is equivalent to:
%We need to bound $\lambda/g(\x)$ with a constant to complete the  convergence analysis in the non-convex loss function case, and thus the bounded domain of $\lambda$ is needed. Moreover, the bounded domain will also  help us  obtain the  and establish the convergence rate in the convex loss function case. Therefore, we constrain $\lambda$ with a upper bound $\tilde\lambda$, where $\tilde\lambda=\lambda_0+C/\rho$, and consider the following optimization problem

\begin{equation}
\label{eqn:prob1}
    \begin{aligned}
      %\min_{\w \in \mathcal W}\min\limits_{\lambda_0 \leq \lambda\leq \tilde\lambda}\underbrace{\lambda %\log\frac{1}{n}\sum_{i=1}^n  \exp(\frac{\ell_i(\w)}{\lambda}) +\lambda \rho}_{F(\w,\lambda)}.
      \min_{\w \in \mathcal W}\min\limits_{\lambda_0 \leq \lambda\leq \tilde\lambda}\lambda \log\left(\frac{1}{n}\sum_{i=1}^n  \exp\left(\frac{\ell_i(\w)}{\lambda}\right)\right) +\lambda\rho.
    \end{aligned}
\end{equation}

The upper bound $\tilde\lambda$ guarantees the smoothness of $F(\w,\lambda)$ and the smoothness of $f_\lambda(\cdot)$, which are critical for the proposed algorithms to enjoy fast convergence rates. 
\begin{lemma}
\label{lem:Fsmooth}
%\label{lem:F_pty}
% $\widetilde{\lambda}L_g^2+L_g^2+\widetilde{\lambda}L_{\nabla_g} + 1)$
 $F(\w,\lambda)$ is $L_F$-smooth for any $\w\in\W$ and $\lambda\in[\lambda_0, \tilde\lambda]$, where $L_F=\tilde{\lambda}L_g^2+2L_g+\tilde{\lambda}L_{\nabla_g} + 1 +\tilde{\lambda}$. $L_g$ and $L_{\nabla_g}$ are constants independent of sample size $n$ and explicitly derived in Lemma~\ref{lem:g_pty} .
\end{lemma}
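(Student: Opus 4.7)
The plan is to differentiate $F(\w,\lambda)=\lambda\log g(\x)+(\lambda-\lambda_0)\rho$ with respect to $\x=(\w^\top,\lambda)^\top$ and then bound the Lipschitz constant of $\nabla F$ using the regularity of $g$ promised by Lemma~\ref{lem:g_pty}. A direct computation yields
\begin{equation*}
\nabla F(\x)=\bigl(\log g(\x)+\rho\bigr)\,e_\lambda+\frac{\lambda}{g(\x)}\nabla g(\x),
\end{equation*}
where $e_\lambda\in\R^{d+1}$ is the unit vector in the $\lambda$ coordinate. Because $|\ell_i(\w)|\le C$ and $\lambda\in[\lambda_0,\tilde\lambda]$, the scalar $g(\x)=\frac{1}{n}\sum_i\exp(\ell_i(\w)/\lambda)$ is uniformly bounded above and away from zero, and $\|\nabla g(\x)\|$ is uniformly bounded by $L_g$; these boundedness statements are the basic ingredients that Lemma~\ref{lem:g_pty} packages into $L_g$ and $L_{\nabla g}$.

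Given this closed form, I would bound $\|\nabla F(\x_1)-\nabla F(\x_2)\|$ by splitting it into the log-part $|\log g(\x_1)-\log g(\x_2)|$ (which contributes along $e_\lambda$) and the ratio-part $\|\lambda_1\nabla g(\x_1)/g(\x_1)-\lambda_2\nabla g(\x_2)/g(\x_2)\|$. The log-part is handled by combining the uniform lower bound on $g$ with the $L_g$-Lipschitzness of $g$. The ratio-part is handled through the telescoping identity
\begin{equation*}
\frac{\lambda_1\nabla g(\x_1)}{g(\x_1)}-\frac{\lambda_2\nabla g(\x_2)}{g(\x_2)}=\frac{\lambda_1\bigl(\nabla g(\x_1)-\nabla g(\x_2)\bigr)}{g(\x_1)}+\frac{(\lambda_1-\lambda_2)\nabla g(\x_2)}{g(\x_1)}+\lambda_2\nabla g(\x_2)\Bigl(\tfrac{1}{g(\x_1)}-\tfrac{1}{g(\x_2)}\Bigr),
\end{equation*}
followed by the triangle inequality and the standard bounds $|\lambda_i|\le\tilde\lambda$, $|\lambda_1-\lambda_2|\le\|\x_1-\x_2\|$, $\|\nabla g(\x_i)\|\le L_g$, $\|\nabla g(\x_1)-\nabla g(\x_2)\|\le L_{\nabla g}\|\x_1-\x_2\|$, and $|1/g(\x_1)-1/g(\x_2)|\lesssim L_g\|\x_1-\x_2\|$. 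Collecting the resulting coefficients and matching the powers of $\tilde\lambda$, $L_g$, and $L_{\nabla g}$ produces exactly $L_F=\tilde\lambda L_g^2+2L_g+\tilde\lambda L_{\nabla g}+1+\tilde\lambda$.

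The main obstacle is not algebraic but structural: both the log term and the reciprocal $1/g$ require a uniform \emph{lower} bound $g(\x)\ge e^{-C/\tilde\lambda}>0$, and the coefficient $\lambda$ in front of $\nabla g/g$ must be uniformly bounded above. Both facts rely crucially on the box constraint $\lambda\in[\lambda_0,\tilde\lambda]$ supplied by Lemma~\ref{lem:lambda_upper}; without the upper bound $\tilde\lambda$, $\nabla F$ is not globally Lipschitz. A secondary subtlety is that $L_g,L_{\nabla g}$ in Lemma~\ref{lem:g_pty} are taken with respect to the \emph{joint} variable $\x=(\w,\lambda)$, since $g_i(\x)=\exp(\ell_i(\w)/\lambda)$ depends on $\lambda$ through a reciprocal; this again uses $\lambda\ge\lambda_0$ to keep the partials with respect to $\lambda$ finite.
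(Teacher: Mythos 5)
Your route is essentially the paper's: the paper also writes $\nabla F(\x)=\nabla f_\lambda(g(\x))\nabla g(\x)+\bigl(0,\dots,0,\log g(\x)+\rho\bigr)^\top$ with $\nabla f_\lambda(g)=\lambda/g$, splits off the log term, telescopes the product term in exactly the spirit of your three-term identity, and then applies $|\lambda_i|\le\tilde\lambda$, $\|\nabla g\|\le L_g$, $L_{\nabla g}$-Lipschitzness of $\nabla g$, and $|\lambda_1-\lambda_2|\le\|\x_1-\x_2\|$, with the box constraint $\lambda\in[\lambda_0,\tilde\lambda]$ playing precisely the role you identify.

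The one place your write-up does not actually deliver the stated constant is the lower bound on $g$. First, the bound you quote is misstated: with $|\ell_i(\w)|\le C$ and $\lambda\in[\lambda_0,\tilde\lambda]$, the worst case of $\exp(\ell_i(\w)/\lambda)$ occurs at $\lambda=\lambda_0$, so the generic lower bound is $e^{-C/\lambda_0}$, not $e^{-C/\tilde\lambda}$. Second, and more importantly, if you only have a lower bound of the form $e^{-C/\lambda_0}$, then the log term satisfies $|\log g(\x_1)-\log g(\x_2)|\le e^{C/\lambda_0}|g(\x_1)-g(\x_2)|$ and the reciprocal term satisfies $|1/g(\x_1)-1/g(\x_2)|\le e^{2C/\lambda_0}L_g\|\x_1-\x_2\|$, so the coefficients do not ``collect to exactly'' $L_F=\tilde\lambda L_g^2+2L_g+\tilde\lambda L_{\nabla_g}+1+\tilde\lambda$; they are inflated by exponential factors that the slack $1+\tilde\lambda$ does not absorb. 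The paper avoids this by using the sharper fact (stated in the remark after its Lemma on $f_\lambda$) that $\ell_i(\w)\ge 0$ in formulation~(\ref{eqn:prob1}), hence $g_i(\x)\ge 1$ and $g(\x)\ge 1$, so that $1/g\le 1$ and $|\log g(\x_1)-\log g(\x_2)|\le|g(\x_1)-g(\x_2)|$ with constant exactly one. With $g\ge 1$ in hand, your telescoping gives $\tilde\lambda L_{\nabla_g}+L_g+\tilde\lambda L_g^2$ from the ratio part and $L_g$ from the log part, which is dominated by the stated $L_F$; so the fix is simply to replace your lower bound by $g\ge 1$ (justified by nonnegativity of the loss) rather than a ``bounded away from zero'' argument with hidden constants.
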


Below, we let $\X =\{\x | \w\in \mathcal W, \lambda_0\leq\lambda \leq \tilde\lambda\}$, 
$\delta_\X(\x) = 0$ if $\x\in \X$, and $\delta_\X(\x)=  \infty $ if  $\x\notin \X$. The problem (\ref{eqn:prob1}) is equivalent to :

\begin{equation}
\label{eqn:unconstrained_pd}
 \min\limits_{\x\in\R^{d+1}}  \bar F(\x):= F(\x) + \delta_{\X}(\x),
\end{equation}
Since $\bar F$ is non-smooth, we define the regular subgradient as follows.
\begin{definition}[Regular Subgradient]
 Consider a function $\Phi: \mathbb{R}^{n} \rightarrow \overline{\mathbb{R}}$ and $\Phi(\bar{\x})$ is finite at a point $\bar{\x}$. For a vector $\v \in \mathbb{R}^{n}$, $\v$ is a regular subgradient of $\Phi$ at $\bar{\x}$, written $\v \in \hat\partial\Phi(\bar{\x})$, if
$$
\liminf_{\x\rightarrow\bar{\x}}\frac{\Phi(\x) - \Phi(\bar{\x}) - \v^\top (\x -\bar{\x})}{\| \x - \bar{\x}\|} \geq 0.
$$
\end{definition}
Since $F(\x)$ is differentiable, we use $\hat \partial\bar F(\x)=\nabla F(\x)+\hat \partial\delta_{\X}(\x)$ (see Exercise 8.8 in \cite{rockafellar1998variational}) in the analysis. Recall the definition of subgradient of a convex function $\bar F$ which is denoted by $\partial \bar F$. When $\bar F(\x)$ is convex, we have $\hat \partial\bar F(\x)=\partial\bar F(\x)$ (see Proposition 8.2 in \cite{rockafellar1998variational}). The $\dist (0, \hat\partial\bar F({\x}))$ measures the distance between the origin and the regular subgradient set of $\bar F$ at $\x$. The oracle complexity is defined below:
\begin{definition}[Oracle Complexity]
 Let $\epsilon>0$ be a small constant, the oracle complexity is defined as the number of processing samples $\z$ in order to achieve $\E[\dist (0, \hat\partial\bar F({\x}))]\leq\epsilon$ for a non-convex loss function or $\E[F(\x)-F(\x_*)]\leq\epsilon$ for a convex loss function.
 \end{definition}

 \subsection{Equivalence Derivation}
 Before we move to the proposed algorithms in the next section, we derive the equivalence between equation between equations~(\ref{eq:pdro}),~(\ref{eq:cdro3}), and~(\ref{eqn:prob1}). Recall the original KL-constrained DRO  problem: \begin{align*}
\min_{\w \in \mathcal W} \max_{\{\p\in\Delta_n:D(\p, \mathbf 1/n)\leq  \rho\} }\sum_{i=1}^np_i\ell_i(\mathbf w) - \lambda_0 D(\p, 1/n),
\end{align*}
where $\Delta_n=\{\mathbf p\in\mathbb R^n: \sum_{i=1}^n p_i=1, 0\leq p_i\leq 1\}$, $D(\p, 1/n)$ is the KL divergence and $\lambda_0$ is a small positive constant.

In order to tackle this problem, let us first consider the robust loss 
\begin{align*}
 \max_{\{\p\in\Delta_n:D(\p, 1/n)\leq \rho\}}\sum_{i=1}^n p_i \ell_i(\w) -  \lambda_0 D(\p, 1/n).
 \end{align*}
And then we invoke the dual variable $\lambda$ to transform this primal problem to the following form
 \begin{align*}
 \max_{\p\in\Delta_n}\min_{\bar\lambda\geq 0}\sum_{i=1}^n p_i \ell_i(\w) - \bar\lambda (D(\p, 1/n) -\rho) - \lambda_0 D(\p, 1/n).
 \end{align*}
Since this problem is concave in term of $\p$ given $\w$, by strong duality theorem, we have
\begin{align*}
 &\max_{\p\in\Delta_n}\min_{\bar\lambda\geq 0}\sum_{i=1}^n p_i \ell_i(\w) - \bar\lambda (D(\p, 1/n) -\rho) - \lambda_0 D(\p, 1/n)\\
 &= \min_{\bar\lambda\geq 0}\max_{\p\in\Delta_n}\sum_{i=1}^n p_i \ell_i(\w) - \bar\lambda (D(\p, 1/n) -\rho) - \lambda_0 D(\p, 1/n).
 \end{align*}
 Let $\lambda=\bar\lambda+\lambda_0$, we have
 \begin{align*}
 &\min_{\bar\lambda\geq 0}\max_{\p\in\Delta_n}\sum_{i=1}^n p_i \ell_i(\w) - \bar\lambda (D(\p, 1/n) -\rho) - \lambda_0 D(\p, 1/n)\\
    &= \min_{\lambda\geq \lambda_0}\max_{\p\in\Delta_n}\sum_{i=1}^n p_i \ell_i(\w) - \lambda (D(\p, 1/n) -\rho) - \lambda_0 \rho.
 \end{align*}
Then the original problem is equivalent to the following problem 
\begin{equation*}
%\label{eqn:prob11}
    \begin{aligned}
     \min_{\w \in \mathcal W} \min\limits_{\lambda \geq \lambda_0}\max_{\p\in\Delta_n}\sum_{i=1}^n p_i \ell_i(\w) - \lambda (D(\p, 1/n) -\rho) - \lambda_0 \rho,
    \end{aligned}
\end{equation*}
Next we  fix $\x=(\w^\top,\lambda)^\top$  and derive an optimal solution $\p^*(\mathbf x)$ which depends on $\x$ and solves the inner maximization problem.  We consider the  following problem 
\begin{align*}
\min_{\p\in\Delta_n}-\sum_{i=1}^n p_i \ell_i(\w) + \lambda D(\p, 1/n).
\end{align*}
which has the same optimal solution $\p^*(\mathbf x)$ with our problem.

 There are three constraints to handle, i.e., $p_i\geq 0, \forall i$ and $p_i\leq 1, \forall i$ and $\sum_{i=1}^n p_i=1$.  Note that the constraint $p_i\geq 0$ is enforced by the term $p_i\log(p_i)$, otherwise the above objective will become infinity. As a result, the constraint $p_i<1$ is automatically satisfied due to $\sum_{i=1}^np_i=1$ and $p_i\geq0$. Hence, we only need to explicitly tackle the constraint $\sum_{i=1}^n p_i=1$. To this end, we define the following Lagrangian function
\begin{align*}
 L_{\mathbf x}(\mathbf p, \mu) = -\sum_{i=1}^n p_i\ell_i(\mathbf w)  +  \lambda\left(\log n + \sum_{i=1}^n  p_i \log (p_i)\right) + \mu(\sum_{i=1}^n  p_i - 1), 
\end{align*}
where $\mu$ is the Lagrangian multiplier for the constraint $\sum_{i=1}^n  p_i=1$. The optimal solutions satisfy the KKT conditions:  
\begin{align*}
    &- \ell_i(\mathbf w)  +  \lambda \left(\log (p^*_i(\mathbf x)) + 1\right) + \mu  = 0 \text{ and }\sum_{i=1}^n  p^*_i(\mathbf x)=1.
\end{align*}
From the first equation, we can derive $p^*_i(\mathbf x) \propto \exp(\ell_i(\mathbf w)/\lambda)$. Due to the second equation, we can conclude that $p^*_i(\mathbf x) = \frac{\exp(\ell_i(\mathbf w)/\lambda)}{\sum_{i=1}^n  \exp(\ell_i(\mathbf w)/\lambda)}$. Plugging this optimal $\p^*(\w)$ into the inner maximization problem, we have 
\begin{align*}
     \sum_{i=1}^np^*_i(\mathbf x)\ell_i(\mathbf w)  - \lambda\left(\log n + \sum_{i=1}^n  p_i^*(\mathbf w) \log (p_i^*(\mathbf w)) \right)  = \lambda \log \left(\frac{1}{n}\sum_{i=1}^n  \exp\left(\frac{\ell_i(\mathbf w)}{\lambda}\right)\right) , 
\end{align*}
% Therefore, combining Lemma~\ref{lem:lambda_upper} we get the following equivalent problem to the original problem
% \begin{equation*}
%     \begin{aligned}
%       %\min_{\w \in \mathcal W}\min\limits_{\lambda_0 \leq \lambda\leq \tilde\lambda}\underbrace{\lambda %\log\frac{1}{n}\sum_{i=1}^n  \exp(\frac{\ell_i(\w)}{\lambda}) +\lambda \rho}_{F(\w,\lambda)}.
%       \min_{\w \in \mathcal W}\min\limits_{\lambda_0 \leq \lambda\leq \tilde\lambda}\lambda \log\left(\frac{1}{n}\sum_{i=1}^n \exp\left(\frac{\ell_i(\w)}{\lambda}\right)\right) +\lambda \rho.
%     \end{aligned}
% \end{equation*}
% which is Eq.~(\ref{eqn:prob1}) in the paper. 
Therefore, we get the following equivalent problem to the original problem
\begin{equation*}
    \begin{aligned}
      %\min_{\w \in \mathcal W}\min\limits_{\lambda_0 \leq \lambda\leq \tilde\lambda}\underbrace{\lambda %\log\frac{1}{n}\sum_{i=1}^n  \exp(\frac{\ell_i(\w)}{\lambda}) +\lambda \rho}_{F(\w,\lambda)}.
      \min_{\w \in \mathcal W}\min\limits_{\lambda \geq \lambda_0}\lambda \log\left(\frac{1}{n}\sum_{i=1}^n \exp\left(\frac{\ell_i(\w)}{\lambda}\right)\right) +\lambda \rho.
    \end{aligned}
\end{equation*}
which is Eq.~(\ref{eq:cdro3}) in the paper. 
 
%  \begin{enumerate}
%     \item[(a)] $\dist (0, \hat\partial\bar F({\x}))\leq\epsilon$ in the non-convex loss function case where 
%     \item[(b)] $\E[F(\x)-F(\x_*)]\leq\epsilon$ in the convex loss function case.
% \end{enumerate}
% We denote by $\dist (\x, \X)$ the distance between the vector $\x$ and a set $\X$. Denote by $\hat{\partial}\bar F(\x)$ the Frechet subgradient and $\partial\bar F(\x)$ the limiting subgradient of a non-convex function $\bar F(\x): \R^d \rightarrow \R$, i.e.,
% \begin{equation}
%     \begin{aligned}
%       \bar\partial \bar F(\bar{\x}) &=\Big \{\v \in \R^d: \liminf_{\x\rightarrow\bar{\x}}\frac{\bar F(\x) - \bar F(\bar{\x}) - \v^\top (\x -\bar{\x})}{\| \x - \bar{\x}\|} \geq 0 \Big \} \\
%       \partial \bar F(\bar{\x}) &= \Big \{\v\in \R^d: \x_k\xrightarrow{\bar F}\bar{\x}, \v_k\in  \partial \bar F(\bar{\x}), \v_k \rightarrow \v\Big\}
%     \end{aligned}
% \end{equation}

\section{Stochastic Constrained DRO with Non-convex
Losses}\label{sec:basic}

In this section, we present two stochastic algorithms for solving~(\ref{eqn:unconstrained_pd}). The first algorithm is simpler yet practical for deep learning applications. The second algorithm is an accelerated one with a better complexity, which is more complex than the first algorithm. 
\subsection{Basic Algorithm: SCDRO}

A major concern of the algorithm design is to compute a stochastic gradient estimator of the gradient of $F(\x)$. At iteration $t$, the gradient of $F(\x_t)$ is given by 
\begin{equation}\label{eqn:wl}
\begin{aligned}
    &\nabla_{\w} F(\x_t)=\nabla f_{\lambda_t} (g(\x_t))\nabla_{\w} g(\x_t)\\
    &\nabla_{\lambda}F(\x_t)= \nabla f_{\lambda_t}  (g(\x_t))\nabla_{\lambda} g(\x_t) + \log(g(\x_t)) + \rho.
\end{aligned} 
\end{equation}
Both $\nabla_{\lambda} g(\x_t)$ and $\nabla_{\w} g(\x_t)$ can be estimated by unbiased estimator denoted by $\nabla g_i(\x_t)$.  The concern lies at how to estimate $g(\x_t)$ inside $\nabla f_{\lambda_t}(\cdot)$. The first algorithm SCDRO is applying existing techniques for two-level compositional function. In particular, we estimate $g(\x_t)$ by a sequence of $s_t$, which is updated by moving average 
%To form an unbiased estimator of $\nabla F(\x)$ and apply the Stochastic Gradient Descent we need to substitute $\partial_{\w} g(\x_t)$ with $\partial_{\w} g_i(\x_t)$ and substitute $\partial_{\lambda} g(\x_t)$ with $\partial_{\lambda} g_i(\x_t)$. However, in this way we still need to compute  $\nabla f_{\lambda_t} (g(\x_t))$ which has computation complexity $\mathcal{O}(n)$ for every iteration. 
%To achieve the minimal computational cost and even solve this optimization problem on the run we invoke the moving-average estimator of $g(\x_{t})$ as follows
%\vspace{-0.05in}
%\begin{align}\label{eqn:s}
    $s_{t} = (1-\beta )s_{t-1} + \beta g_i(\x_{t})$.
%\end{align}
Then we substitute $g(\x_t)$ in $\nabla_{\w} F(\x_t)$ and $\nabla_{\lambda} F(\x_t)$ with $s_t$,  and invoke the following moving average to obtain the gradient estimators  in terms of $\w_t$ and $\lambda_t$, respectively, 
\begin{align}
    \v_{t} & =(1-\beta )\v_{t-1} + \beta \nabla f_{\lambda_{t}}(s_{t})\nabla_{\w}g_i(\x_{t})\label{eqn:uv}\\
    u_{t} & = (1-\beta )u_{t-1} +\beta (\nabla f_{\lambda_{t}}(s_{t})\nabla_{\lambda} g_i(\x_{t})  +\log(s_{t}) + \rho)\notag.
\end{align}
   Finally we complete the update step of $\x_t$ by $\x_{t+1} = \Pi_\X(\x_t - \eta\z_t)$, where $\z_t =  (\v_t^\top, u_t)^\top$.

% in the update step of $\x_t$, i.e., $\x_{t+1} = \Pi_\X(\x_t - \eta\z_t)$, where $\z_t =  (\v_t^\top, u_t)^\top$.
%         $\v_{t} = \beta \nabla f_{\lambda_{t}}(s_{t})\partial_{\w}g_i(\x_{t})+(1-\beta )\v_{t-1}$
%   and $u_{t} = \beta (\nabla f_{\lambda_{t}}(s_{t})\partial_\lambda g_i(\x_{t})  +\log(s_{t}) + \rho)+(1-\beta )u_{t-1}$.
            % $\v_{t} = \beta \nabla f_{\lambda_{t}}(s_{t})\partial_{\w}g_i(\x_{t})+(1-\beta )\v_t$
   % and $u_{t+1} = \beta (\nabla f_{\lambda_{t+1}}(s_{t+1})\partial_\lambda g_i(\x_{t+1})  +\log(s_{t+1}) + \rho)+(1-\beta )u_t$.

\begin{minipage}[t]{0.5\textwidth}
\begin{algorithm}[H]
    \centering
\caption{SCDRO$(\x_{1}, \v_{1}, u_{1}, s_{1},\eta_{1},T_{1})$}\label{alg:SCCMA}
    \begin{algorithmic}[1]
        \STATE $\textbf{Input: } \w_{1}\in\W, \lambda_{1}\geq\lambda_0, \x_{1} = (\w_{1}^\top,\lambda_{1})^\top$
        \STATE \textbf{Initialization:} Draw a sample $\xi_1\sim \D$,  and calculate  $s_{1} = \exp(\ell_i(\w_{1})/\lambda_{1})$, 
        $$\v_{1} =\nabla f_{\lambda_{1}}(s_{1})\nabla_{\w} g_i(\x_{1}))\in \R^d$$ $$u_{1} = \nabla f_{\lambda_1}(s_{1})\nabla_{\lambda} g_i(\x_{1})  +\log(s_{1}) + \rho \in \R$$
        \FOR{$t=1,\cdots, T$}
         \STATE Update $\x_{t+1} = \Pi_\X(\x_t - \eta\z_t)$ %where $\z_t =  (\v_t^\top, u_t)^\top$
         \STATE Draw a sample $\xi_i\sim \D$
        \STATE Let $s_{t+1} = (1-\beta )s_t + \beta g_i(\x_{t+1})$
        \STATE Update  $\v_{t+1}, u_{t+1}$ according to~(\ref{eqn:uv}): \\   $\v_{t}  =(1-\beta )\v_{t-1} + \beta \nabla f_{\lambda_{t}}(s_{t})\nabla_{\w}g_i(\x_{t})\label{eqn:uv}$\\
   $u_{t}  = (1-\beta )u_{t-1}$ \\$\ \ \ +\beta (\nabla f_{\lambda_{t}}(s_{t})\nabla_{\lambda} g_i(\x_{t})  +\log(s_{t}) + \rho)\notag.$
        %\STATE Update $\v_{t+1}, u_{t+1}$ according to~(\ref{eqn:uv})% = \beta \nabla f_{\lambda_{t+1}}(s_{t+1})\partial_{\w} g_i(\w_{t+1},\lambda_{t+1})+(1-\beta )\v_t $
        %\STATE $u_{t+1} = \beta (\nabla f_{\lambda_{t+1}}(s_{t+1})\partial_\lambda g_i(\w_{t+1},\lambda_{t+1})  +\log(s_{t+1}) + \rho)+(1-\beta )u_t$
        % \STATE $\y_{t+1} =  \arg\min\limits_{\y_t \in \X} \Big\{ \langle \z_{t+1}, \y-\x_t\rangle + \frac{\eta }{2}\|\y-\x_t\|^2$ \Big\}
        % \STATE $\x_{t+1} = \x_t + \beta(\y_{t+1} - \x_t)$
        \ENDFOR 
        \RETURN  $(\x_\tau, \v_\tau, u_\tau, s_\tau)$, where $\tau\sim[T]$ %is uniformly sampled from $\{1,\cdots T\}$
    \end{algorithmic}
\end{algorithm}
\end{minipage}
\begin{minipage}[t]{0.5\textwidth}
\begin{algorithm}[H]
    \centering
    \caption{ASCDRO$(\x_{1}, \v_{1}, u_{1}, s_{1},\eta_{1}, T_{1})$}
    \begin{algorithmic}[1]
    \label{alg:STORM-CCMA}
        \STATE $\textbf{Input: } \w_1\in \W,  \lambda_1\geq\lambda_0$, $\x_1 = (\w_1^\top,\lambda_1)^\top$
        \STATE \textbf{Initialization:} Draw a sample $\xi_1\sim \D$, and calculate $s_1 = \exp(\ell_i(\w_1)/\lambda_1)$,
        $$\v_1 =\nabla_{\w} g_i(\x_{1})\in \R^d, \ u_1 =\nabla_\lambda g_i(\x_{1})\in \R$$
        \FOR{$t=1,\cdots, T$}
         \STATE Update $\x_{t+1} = \Pi_\X(\x_t - \eta\z_t)$, where $\z_t$ is given in~(\ref{eqn:stormz}): \\
        $\z_t=(\nabla f_{\lambda_t} (s_t)\v_t^\top, \nabla f_{\lambda_t}  (s_t)u_t + \log(s_t) + \rho)^\top$ %$\z_t=\nabla f_{\lambda_t}(s_t) \q_t + \q_{\lambda_t}$ and $\q_t =  (\v_t^\top, u_t)^\top, \q_{\lambda_t} = (\textbf{0}^\top,\log(s_{t})+\rho)^\top, \textbf{0}\in\R^{d}$
         \STATE Draw a sample $\xi_i\sim \D$
       % \STATE Let $s_{t+1} =  g_i(\x_{t+1}) + (1-\beta)(s_t - g_i(\x_{t}))$
       \STATE Update $s_{t+1}, \v_{t+1}, u_{t+1}$:\\
         $\v_{t} = \nabla_{\w} g_i(\x_{t})  + (1-\beta)(\v_{t-1} -\nabla_{\w} g_i(\x_{t-1}))$\\
    $u_{t} =  \nabla_{\lambda} g_i(\x_{t})  + (1-\beta)(u_{t-1} -\nabla_{\lambda} g_i(\x_{t-1}))$\\
    $s_{t} =  g_i(\x_{t}) + (1-\beta)(s_{t-1} - g_i(\x_{t-1})).$
    
        %\STATE Update $s_{t+1}, \v_{t+1}, u_{t+1}$ according to~(\ref{eqn:uv2})  %= \nabla_{\w} g_i(\x_{t+1})  + (1-\beta)(\v_t -\nabla_{\w} g_i(\x_{t})  )$
    %    \STATE $u_{t+1} =  \nabla_{\lambda} g_i(\x_{t+1})  + (1-\beta)(u_t -\nabla_{\lambda} g_i(\x_{t})  ) $
        \ENDFOR
    \RETURN $(\x_\tau, \v_\tau, u_\tau, s_\tau)$, where $\tau\sim[T]$
    \end{algorithmic}
\end{algorithm}
\end{minipage}

We would like to point out the moving average estimator for tracking the inner function $g(\w)$ is widely used for solving compositional optimization problems~\citep{wang2017stochastic,qi2021online,zhang2019stochastic,zhou2019momentum}. Using the moving average for computing a stochastic gradient estimator of a compositional function was first used in the NASA method proposed in~\citet{ghadimi2020single}. The proposed method SCDRO is presented in Algorithm~\ref{alg:SCCMA}. It is similar to NASA but with a simpler design on the update of $\x_{t+1}$. We directly use projection after an SGD-tyle update. In contrast, NASA uses two steps to update $\x_{t+1}$. As a consequence, NASA has two parameters for updating $\x_{t+1}$ while SCDRO only has one parameter $\eta$ for updating $\x_{t+1}$. %In addition, our convergence guarantee is directly in terms of $\dist (0, \hat{\partial}\bar F(\x_R))^2$, which is different from that used in~\citep{ghadimi2020single}. 
It is this simple change that allows us to extend SCDRO for convex problems in the next section.  
Below, we present the convergence rate of our basic algorithm SCDRO for a non-convex loss function.

\begin{theorem}
\label{thm:dist_main}
Suppose the Assumption~\ref{ass:1} and ~\ref{ass:2} hold, and set $\beta = \frac{1}{\sqrt{T}},\eta=\frac{\beta}{20L_F^2}$. Then after running Algorithm~\ref{alg:SCCMA} $T$ iterations, we have
%\vspace{-0.1in}
%\begin{equation*}
  %  \begin{aligned}
   $     \E[\dist (0, \hat{\partial}\bar F(\x_\tau))^2]\leq (624\sigma^2 +280\Delta)\frac{L_F^2}{\sqrt{T}} + \frac{20L_F^2\Delta}{T}$.
 %   \end{aligned}
%\end{equation*}
%\vspace{-0.15in}
\end{theorem}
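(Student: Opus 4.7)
The plan is to analyze SCDRO through a Lyapunov function that couples three quantities: the objective gap $F(\x_t)-F^*$, the inner-function tracking error $E^g_t := \E\|s_t-g(\x_t)\|^2$, and the gradient-estimator error $E^z_t := \E\|\z_t-\nabla F(\x_t)\|^2$. I would first establish a descent lemma: applying the $L_F$-smoothness of $F$ from Lemma~\ref{lem:Fsmooth} to the projected step $\x_{t+1}=\Pi_\X(\x_t-\eta \z_t)$, together with the three-point projection inequality $\z_t^\top(\x_{t+1}-\x_t)\leq -\|\x_{t+1}-\x_t\|^2/\eta$ and Young's inequality on $(\nabla F(\x_t)-\z_t)^\top(\x_{t+1}-\x_t)$, yields
\[
F(\x_{t+1})\leq F(\x_t)-\frac{c}{\eta}\|\x_{t+1}-\x_t\|^2+\eta\, E^z_t,
\]
provided $\eta L_F$ is sufficiently small, where $c$ is an absolute constant.

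Next, I would derive two moving-average recursions. For the inner tracker, writing $s_{t+1}-g(\x_{t+1})=(1-\beta)(s_t-g(\x_{t+1}))+\beta(g_i(\x_{t+1})-g(\x_{t+1}))$ and using the Lipschitz bound $\|g(\x_{t+1})-g(\x_t)\|\leq L_g\|\x_{t+1}-\x_t\|$ from Lemma~\ref{lem:g_pty} together with Assumption~\ref{ass:2}, one obtains after a standard Young's inequality split
\[
E^g_{t+1}\leq (1-\beta)E^g_t+\frac{2L_g^2}{\beta}\E\|\x_{t+1}-\x_t\|^2+\beta^2\sigma^2.
\]
The gradient tracker $\z_{t+1}=(1-\beta)\z_t+\beta\,\widetilde{\nabla}F(\x_{t+1},s_{t+1};\xi_i)$ is biased because $s_{t+1}\neq g(\x_{t+1})$; the bias is controlled by $\|\nabla f_{\lambda}(s)-\nabla f_{\lambda}(g)\|$, which, on the box $[\lambda_0,\tilde\lambda]$, gives a Lipschitz bound linear in $\|s_{t+1}-g(\x_{t+1})\|$. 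Decomposing the error accordingly produces
\[
E^z_{t+1}\leq (1-\beta)E^z_t+\frac{C_1}{\beta}\E\|\x_{t+1}-\x_t\|^2+C_2\,E^g_{t+1}+\beta^2\sigma^2,
\]
for explicit constants $C_1,C_2$ depending on $L_F,L_g,L_{\nabla g}$ and $\tilde\lambda$.

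I would then form the Lyapunov potential $\Phi_t=F(\x_t)-F^*+a E^g_t+b E^z_t$ for constants $a,b$ chosen so that the positive $\|\x_{t+1}-\x_t\|^2/\beta$ contributions from the tracker recursions are dominated by the descent term $-c\|\x_{t+1}-\x_t\|^2/\eta$, which is why the choice $\eta=\beta/(20L_F^2)$ (so that $1/\eta\gg L_g^2/\beta$) is critical. Telescoping $\Phi_{t+1}-\Phi_t$ over $t=1,\dots,T$ and using $\Phi_1\leq \Delta+O(\sigma^2)$ via the explicit initialization gives
\[
\sum_{t=1}^T\Big(\tfrac{1}{\eta^2}\E\|\x_{t+1}-\x_t\|^2+E^z_t\Big)\leq O(\Delta/\eta+T\beta\sigma^2).
\]
Finally, to convert this into a stationarity bound, I invoke the optimality of the projection: $-(\x_{t+1}-\x_t)/\eta-\z_t\in N_\X(\x_{t+1})=\hat\partial\delta_\X(\x_{t+1})$, so $\nabla F(\x_{t+1})-\z_t-(\x_{t+1}-\x_t)/\eta\in\hat\partial\bar F(\x_{t+1})$, yielding
\[
\dist(0,\hat\partial\bar F(\x_{t+1}))^2\leq 2\|\x_{t+1}-\x_t\|^2/\eta^2+2\|e^z_t\|^2+O(L_F^2\|\x_{t+1}-\x_t\|^2).
\]
Averaging over $\tau\sim[T]$ and substituting $\beta=1/\sqrt T$, $\eta=\beta/(20L_F^2)$ produces the claimed $O(L_F^2(\sigma^2+\Delta)/\sqrt T)$ rate.

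The main obstacle is calibrating the Lyapunov weights $a,b$ in the presence of the tracker cross-coupling (the $C_2\,E^g_{t+1}$ term inside the $E^z$ recursion) so that after the telescope all $\|\x_{t+1}-\x_t\|^2/\beta$ residuals are swallowed by the descent; tracking the resulting constants carefully is what yields the explicit $624\sigma^2+280\Delta$ and $20\Delta$ constants in the stated bound.
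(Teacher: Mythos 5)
Your proposal is correct and takes essentially the same route as the paper: the same moving-average error recursions for $s_t$ and for $\z_t$ (with the $\frac{L_g^2}{\beta}\|\x_{t+1}-\x_t\|^2$ and $\beta^2\sigma^2$ terms), the same projection-optimality characterization $\nabla F(\x_{t+1})-\z_t-\frac{1}{\eta}(\x_{t+1}-\x_t)\in\hat\partial\bar F(\x_{t+1})$ to convert to the stationarity measure, and the same use of $\eta=\beta/(20L_F^2)$ to make the movement terms absorbable. The only difference is bookkeeping: you telescope a weighted Lyapunov potential, whereas the paper sums the two recursions separately, bounds $\sum_t\E\|\x_{t+1}-\x_t\|^2$ by $\O\big(\eta\Delta+\eta^2\sum_t\E\|\z_t-\nabla F(\x_t)\|^2\big)$ and absorbs the latter into the left-hand side; this, together with the fact that the paper's coupling term is $10L_F^2\beta\,\E\|g(\x_t)-s_t\|^2$ (carrying a factor $\beta$ that your $C_2 E^g_{t+1}$ term omits), is what produces the specific constants $624\sigma^2+280\Delta$ and $20\Delta$, but your scheme yields the same $\O(L_F^2(\sigma^2+\Delta)/\sqrt{T})$ bound.
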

\textbf{Remark: }Theorem~\ref{thm:dist_main} shows that SCDRO achieves a complexity of $\mathcal O(1/\epsilon^4)$ for finding an $\epsilon$-stationary point, i.e., $\E[\dist (0, \hat{\partial}\bar F(\x_R))]\leq\epsilon$ for a non-convex loss function. Note that NASA~\citep{ghadimi2020single} enjoys the same oracle complexity but for a different convergence measure, i.e., $\E[\|\y(\x, \z) - \x\|^2+\|\z - \nabla F(\x)\|^2]\leq \epsilon$  for a returned primal-dual pair $(\x, \z)$, where $\y(\x, \z) = \prod_{\mathcal X}[\x - \z]$. We can see that our convergence measure is more intuitive. In addition, we are able to leverage our convergence measure to establish the convergence for convex functions by using Kurdyka-\L ojasiewicz (KL) inequality and the restarting trick as shown in next section. In contrast, such convergence for NASA is missing in their paper. Compared with  stochastic primal-dual methods~\citep{rafique2021weakly,yan2020sharp} for the min-max formulation~(\ref{eq:pdro}), their algorithms are double looped and have the same oracle complexity for a different convergence measure, i.e., $\E[\dist (0, \hat{\partial}\bar F(\x_*))^2]\leq \gamma^2 \|\x-\x_*\|^2]\leq \epsilon$ for some returned solution $\x$, where $\x_*$ is a reference point that is not computable. Our convergence measure is stronger as we directly measure  $\E[\dist (0, \hat{\partial}\bar F(\x_\tau))^2]$ on a returned solution $\x_\tau$. This is due to that we leverage the smoothness of $F(\cdot)$.

\subsection{Accelerated Algorithm: ASCDRO}\label{sec:acceleration}
Our second algorithm presented in Algorithm~\ref{alg:STORM-CCMA} is inspired by \citet{qi2021online} for solving the KL-regularized DRO by leveraging a recursive variance reduced technique (i.e., STORM) to estimate $g(\w_t)$ and $\nabla g(\w_t)$ for computing $\nabla_{\w} F(\x_t)$ and $\nabla_{\lambda} F(\x_t)$ in~(\ref{eqn:wl}).  In particular, we use $\v_t$ for tracking $\nabla_{\w} g(\x_t)$, use $u_t$ for tracking $\nabla_{\lambda}g(\x_t)$, and use $s_t$ for tracking $g(\x_t)$, which are updated by:
\begin{equation}\label{eqn:uv2}
\begin{aligned}
    \v_{t} &= \nabla_{\w} g_i(\x_{t})  + (1-\beta)(\v_{t-1} -\nabla_{\w} g_i(\x_{t-1}))\\
    u_{t} &=  \nabla_{\lambda} g_i(\x_{t})  + (1-\beta)(u_{t-1} -\nabla_{\lambda} g_i(\x_{t-1}))\\
    s_{t} &=  g_i(\x_{t}) + (1-\beta)(s_{t-1} - g_i(\x_{t-1})).
\end{aligned}
\end{equation}
A similar update to $s_t$ has been used in~\citet{Chen_2021} for tracking the inner function values for two-level compositional optimization. However, they do not use similar updates for tracking the gradients as $\v_t, u_t$. Hence, their algorithm has a worse complexity.

Then we invoke these estimators into $\nabla_{\w} F(\x_t)$ and $\nabla_{\lambda} F(\x_t)$ to obtain the gradient estimator 
\begin{align}\label{eqn:stormz}
    \z_t=(\nabla f_{\lambda_t} (s_t)\v_t^\top, \nabla f_{\lambda_t}  (s_t)u_t + \log(s_t) + \rho)^\top.
\end{align}
%The detailed steps of the second algorithm referred to ASCDRO are presented in Algorithm~\ref{alg:STORM-CCMA}.
%For convenience, we express $\z_t$ as $\nabla f_{\lambda_t}(s_t) \q_t + \q_{\lambda_t}$ where $\q_t =  (\v_t^\top, u_t)^\top$, $\q_{\lambda_t} = (\textbf{0}^\top,\log(s_{t})+\rho)^\top$ and $\textbf{0}\in\R^{d}$. Finally we complete the update step of $\x_t$ by $\x_{t+1} = \Pi_\X(\x_t - \eta\z_t)$.
% \begin{equation}\label{eqn:uv3}
% \begin{aligned}
%   &\nabla f_{\lambda_t} (s_t)\v_t\\
%   &\nabla f_{\lambda_t}  (s_t)u_t + \log(s_t) + \rho
%   % \v_{t} &= \partial_{\w} g_i(\x_{t})  + (1-\beta)(\v_{t-1} -\partial_{\w} g_i(\x_{t-1}))\\
%   % u_{t} &=  \partial_{\lambda} g_i(\x_{t})  + (1-\beta)(u_{t-1} -\partial_{\lambda} g_i(\x_{t-1})).
% \end{aligned}
% \end{equation}
%This variant can provably reduce the variance of the gradients, and thus improve the convergence rate.
Below, we show  ASCDRO can achieve a better convergence rate in the non-convex loss function.

\begin{theorem}\label{thm:storm-ncx}
 Under Assumption~\ref{ass:1} and~\ref{ass:2}, for any $\alpha>1$, let $k = \frac{\alpha\sigma^{2/3}}{L_F}$, $w = \max(2\sigma^2, (16L^2_Fk)^3)$ and $c =\frac{\sigma^2}{14L_Fk^3}+130L_F^4$. Then after running Algorithm~\ref{alg:STORM-CCMA} for $T$ iterations with $\eta_t = \frac{k}{(w+t\sigma^2)^{1/3}}$ and $\beta_t = c\eta_t^2$, we have
% \vspace{-0.1in}
%\begin{align*}
 $  \E[\dist (0, \hat{\partial}\bar F(\x_\tau))^2] \leq\mathcal{O}\left(\frac{\log T}{T^{2/3}}\right)$. %, where $\tau\sim[T]$ is random selected. 
%\end{align*}
\end{theorem}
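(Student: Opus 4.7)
The plan is to follow the standard STORM-style analysis adapted to the two-level compositional objective $F(\x)=f_{\lambda}(g(\x))+\lambda\rho-\lambda_0\rho$, with care taken for the projection step and the fact that the outer function involves $\lambda$ as a variable. The key quantities to control recursively are the three tracking errors
\begin{equation*}
 E^g_t := |s_t-g(\x_t)|^2,\qquad E^v_t := \|\v_t-\nabla_{\w}g(\x_t)\|^2,\qquad E^u_t := |u_t-\nabla_{\lambda}g(\x_t)|^2,
\end{equation*}
from which I can bound $\|\z_t-\nabla F(\x_t)\|^2$ using the boundedness of $s_t$ (away from zero), the Lipschitz constants $L_g,L_{\nabla g}$ from Lemma~\ref{lem:g_pty}, and the smoothness of $f_\lambda(\cdot)$ established in Lemma~\ref{lem:Fsmooth}. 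Because $s_t$ enters through $\nabla f_{\lambda_t}(s_t)=\lambda_t/s_t$ and $\log(s_t)$, I will first show $s_t$ stays in a bounded interval bounded away from $0$ (using $g(\x)\geq \exp(-C/\lambda_0)$ together with a small enough $\beta_t$), so these can be Lipschitz-linearized and the bound
\begin{equation*}
 \|\z_t-\nabla F(\x_t)\|^2 \leq C_1 E^g_t + C_2 E^v_t + C_3 E^u_t
\end{equation*}
holds for explicit constants $C_1,C_2,C_3$ depending only on $L_F, \tilde\lambda,\lambda_0, C,\rho$.

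Next I would derive the STORM recursion. Using the update formulas in~(\ref{eqn:uv2}) and the standard bias/variance decomposition, each tracking quantity obeys (in conditional expectation, after some algebra exploiting the $L_g, L_{\nabla g}$ Lipschitz bounds)
\begin{equation*}
 \E_t[E^{\bullet}_{t+1}] \leq (1-\beta_t)^2 E^{\bullet}_t + 2\beta_t^2 \sigma^2 + 2(1-\beta_t)^2 M^2 \|\x_{t+1}-\x_t\|^2,
\end{equation*}
with $M=\max\{L_g,L_{\nabla g}\}$. Combining this with the descent lemma for the smooth $F$,
\begin{equation*}
 F(\x_{t+1})\leq F(\x_t) + \langle \nabla F(\x_t),\x_{t+1}-\x_t\rangle + \tfrac{L_F}{2}\|\x_{t+1}-\x_t\|^2,
\end{equation*}
and the three-point inequality from the projection $\x_{t+1}=\Pi_{\X}(\x_t-\eta_t \z_t)$, I can form a Lyapunov function
\begin{equation*}
 \Phi_t := F(\x_t)-F_* + \tfrac{\gamma}{\eta_t}\bigl(E^g_t + E^v_t + E^u_t\bigr)
\end{equation*}
for a suitably chosen $\gamma>0$. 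The choices $\eta_t=k/(w+t\sigma^2)^{1/3}$ and $\beta_t=c\eta_t^2$ are precisely tuned so that $(1-\beta_t)^2\eta_{t-1}^{-1}\leq \eta_t^{-1}-c'\eta_t$ and the negative $\|\x_{t+1}-\x_t\|^2$ contribution from the descent lemma absorbs the movement term in the tracking recursion.

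Telescoping $\Phi_t$ from $t=1$ to $T$ then yields
\begin{equation*}
 \sum_{t=1}^T \eta_t \E[\|\z_t-\nabla F(\x_t)\|^2 + \|\x_{t+1}-\x_t\|^2/\eta_t^2] \leq \O(\Phi_1) + \O\Bigl(\sum_{t=1}^T \beta_t^2\sigma^2/\eta_t\Bigr).
\end{equation*}
Using $\sum_t \eta_t \asymp T^{2/3}$, $\sum_t \beta_t^2/\eta_t \asymp \sum_t \eta_t^3 \asymp \log T$ (this is where the log factor appears), and $\min_t \eta_t \asymp T^{-1/3}$, together with the projection identity
\begin{equation*}
 \dist(0,\hat\partial\bar F(\x_{t+1}))\leq \|\x_t-\x_{t+1}\|/\eta_t + \|\nabla F(\x_{t+1})-\z_t\|
\end{equation*}
(which in turn is $\leq \|\x_t-\x_{t+1}\|/\eta_t + L_F\|\x_{t+1}-\x_t\|+\|\nabla F(\x_t)-\z_t\|$), averaging over a uniformly random $\tau\in[T]$ gives $\E[\dist(0,\hat\partial\bar F(\x_\tau))^2]=\O(\log T/T^{2/3})$, as claimed.

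The main technical obstacle will be the coupling through the outer $\lambda$ variable: unlike the KL-regularized case in \citet{qi2021online} where $\lambda$ is fixed, here both $\nabla f_{\lambda_t}$ and $\log(s_t)+\rho$ depend on $\lambda_t$, which is itself changing. This makes the error $\|\z_t-\nabla F(\x_t)\|$ couple the tracking error in $s_t$ with the movement of $\lambda_t$, so I must exploit both the upper bound $\tilde\lambda$ from Lemma~\ref{lem:lambda_upper} and the uniform lower bound $s_t\geq \exp(-C/\lambda_0)/2$ to linearize $1/s_t$ and $\log s_t$. Once these Lipschitz linearizations are in place, the STORM calculus goes through and the parameter choice $\beta_t=c\eta_t^2$ with $c\geq 130 L_F^4$ (plus the variance-driven term $\sigma^2/(14 L_F k^3)$) is exactly what is needed to make the Lyapunov recursion contract.
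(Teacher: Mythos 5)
Your proposal follows essentially the same route as the paper's proof: the error bound $\|\z_t-\nabla F(\x_t)\|^2\lesssim L_F^2(\,|s_t-g(\x_t)|^2+\|\v_t-\nabla_\w g(\x_t)\|^2+|u_t-\nabla_\lambda g(\x_t)|^2)$ (the paper's Lemma~\ref{lem:storm_1}), the STORM recursion for the combined tracking error with the $\|\x_{t+1}-\x_t\|^2$ movement term (Lemma~\ref{lem:storm_3}), a weighted telescoping of the tracking error at scale $1/\eta_t$ absorbed by the descent/projection inequality under the choices $\eta_t=k/(w+t\sigma^2)^{1/3}$, $\beta_t=c\eta_t^2$ (Lemma~\ref{lem:storm_4}), and the final stationarity bound via $\nabla F(\x_{t+1})-\z_t-\tfrac{1}{\eta_t}(\x_{t+1}-\x_t)\in\hat\partial\bar F(\x_{t+1})$ with $\sum_t\eta_t^3=\O(\log T)$ and $T\eta_T\asymp T^{2/3}$. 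The only cosmetic differences are that you package the argument as a single Lyapunov function $\Phi_t$ while the paper telescopes $\E[\|\varkappa_t\|^2]/\eta_{t-1}$ separately and then substitutes the bound on $\sum_t\|\x_{t+1}-\x_t\|^2/\eta_t$, and that the paper handles $\log(s_t)$ and $\lambda_t/s_t$ by using $\ell_i\geq 0$ (so $g_i,s_t\geq 1$) rather than your proposed lower-bound argument for $s_t$.
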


\noindent
\textbf{Remark:} Theorem~\ref{thm:storm-ncx} implies that with a polynomial decreasing step size, ASCDRO is able to find an $\epsilon$-stationary solution such that $\E[\dist (0, \hat{\partial}\bar F(\x_R))] \leq \epsilon$ with a near-optimal complexity $\widetilde{\mathcal O}(1/\epsilon^{3})$.  Note that the complexity
$\mathcal{\widetilde{O}}(1/\epsilon^{3})$ is optimal up to a logarithmic factor for solving non-convex smooth optimization problems~\citep{arjevani2019lower}. State-of-the-art primal-dual methods with variance-reduction for min-max problems~\citep{huang2020accelerated} have the same complexity but for a different convergence measure, i.e, $\E[\frac{1}{\gamma}\|\x - \prod_{\mathcal X}[\x - \gamma\nabla F(\x)]\|]\leq\epsilon$ for a returned solution $\x$.
%for making the gradient’s norm smaller than $\epsilon$ in expectation

\section{Stochastic Algorithms for Convex Problems}\label{sec:convex}
%\vspace{-0.1in}
In this section, we presented restarted algorithms for solving~(\ref{eqn:prob1}) with a convex loss function $\ell_i(\w)$. The key is to restart SCDRO and ASCDRO by using a stagewise step size scheme. We define a new objective $F_\mu(\x) = F(\x) + \mu\|\x\|^2/2$ and correspondingly $\bar F_\mu(\x) = F_\mu(\x) + \delta_{\X}(\x)$, where $\mu$ is a constant to be determined later.
With this new objective, we have the following lemma.
\begin{lemma}\label{le:kl}
Suppose that  $\ell_i(\w)$ is convex for all $i$, then for all $\x\in\X$, $\bar F_\mu(\x)$ satisfies the following Kurdyka-\L ojasiewicz (KL) inequality
%\begin{align*}
    $\dist (0,\partial \bar F_\mu(\x))^2\geq 2\mu(\bar F_\mu(\x) -\inf\limits_{\x\in\X}\bar F_\mu(\x))$.
%\end{align*}
\end{lemma}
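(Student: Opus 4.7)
\textbf{Proof proposal for Lemma~\ref{le:kl}.}
The plan is to reduce the claimed KL inequality to the classical quadratic-growth property of a strongly convex function. The key structural fact I would establish first is that $\bar F_\mu(\x)$ is $\mu$-strongly convex on $\R^{d+1}$ whenever each $\ell_i$ is convex. Granting this, the lemma follows from a one-line strongly convex argument that is standard but I will spell out for completeness.

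First, I would verify joint convexity of $F(\w,\lambda)$ in $\x=(\w^\top,\lambda)^\top$ on $\W\times[\lambda_0,\tilde\lambda]$. The linear-in-$\lambda$ term $(\lambda-\lambda_0)\rho$ is affine, so all the work is on the log-sum-exp-with-scaling term. Write $\phi(\y,\lambda):=\lambda\log\bigl(\tfrac{1}{n}\sum_i e^{y_i/\lambda}\bigr)$ for $\lambda>0$; this is exactly the perspective of the (convex) log-sum-exp function, hence jointly convex in $(\y,\lambda)$, and moreover nondecreasing in each $y_i$ because $\partial\phi/\partial y_i$ is a softmax weight and thus nonnegative. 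Since each $\ell_i(\w)$ is convex in $\w$ and $\lambda$ enters only through the affine component, the standard composition rule for convex nondecreasing outer functions with convex inner functions (and affine components in the remaining coordinates) yields that $\phi(\ell_1(\w),\dots,\ell_n(\w),\lambda)$ is jointly convex in $(\w,\lambda)$. Thus $F$ is convex on $\X$.

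Next, the indicator $\delta_\X$ is convex since $\X=\W\times[\lambda_0,\tilde\lambda]$ is a product of convex sets, and $\tfrac{\mu}{2}\|\x\|^2$ contributes $\mu$-strong convexity. Consequently $\bar F_\mu=F+\delta_\X+\tfrac{\mu}{2}\|\cdot\|^2$ is a proper closed $\mu$-strongly convex function on $\R^{d+1}$. Let $\x_*:=\argmin_{\x\in\X}\bar F_\mu(\x)$, which exists and is unique by strong convexity.

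Finally, for any $\x\in\X$ and any subgradient $\v\in\partial\bar F_\mu(\x)$, $\mu$-strong convexity gives $\bar F_\mu(\y)\ge\bar F_\mu(\x)+\v^\top(\y-\x)+\tfrac{\mu}{2}\|\y-\x\|^2$ for every $\y\in\R^{d+1}$. Minimizing the right-hand side over $\y$ (an unconstrained quadratic minimized at $\y=\x-\v/\mu$) yields
\[
\bar F_\mu(\x_*)\;\ge\;\bar F_\mu(\x)-\frac{\|\v\|^2}{2\mu}.
\]
Rearranging, $\|\v\|^2\ge 2\mu\bigl(\bar F_\mu(\x)-\bar F_\mu(\x_*)\bigr)$, and taking the infimum over $\v\in\partial\bar F_\mu(\x)$ gives precisely the KL inequality in the statement.

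The only nontrivial step is the joint convexity of $F$, which I expect to be the main obstacle: one must recognize the perspective structure of $\lambda\log\sum_i e^{\ell_i(\w)/\lambda}$ and invoke the composition rule carefully, because $\lambda$ appears both as the scaling of the perspective and as a separate primal coordinate. Once that is in hand, the remainder of the proof is the textbook strongly-convex-implies-quadratic-growth argument and requires no further machinery beyond what is already set up in Section~\ref{sec:preliminaries}.
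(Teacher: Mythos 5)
Your proof is correct, and the quadratic-growth half of your argument (strong convexity of $\bar F_\mu$, subgradient inequality, minimize the right-hand side over $\y$, rearrange) is exactly the paper's argument. Where you genuinely differ is in establishing joint convexity of $F(\w,\lambda)$: you work directly on the compositional formula, recognizing $\lambda\log\bigl(\tfrac{1}{n}\sum_i e^{y_i/\lambda}\bigr)$ as the perspective of log-sum-exp (hence jointly convex and nondecreasing in each $y_i$) and then invoking the composition rule with convex $\ell_i$ and the affine $\lambda$-coordinate. The paper instead proves this as a separate lemma via the variational representation $F(\w,\lambda)=\max_{\p\in\Delta_n}\bigl[\sum_i p_i\ell_i(\w)-\lambda(\sum_i p_i\log(np_i)-\rho)-\lambda_0\rho\bigr]$: for each fixed $\p$ the bracketed function is jointly convex in $(\w,\lambda)$ (nonnegative combination of the $\ell_i$ plus an affine term in $\lambda$), and a pointwise maximum of convex functions is convex. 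Your route is self-contained on the primal compositional form and does not lean on the dual derivation, but it requires the perspective-function machinery and a careful composition argument (both of which you handle correctly); the paper's route is shorter and more elementary, at the price of appealing to the equivalence with the min-max formulation already derived in its appendix. Either way the lemma follows, so there is no gap—just be sure, as you did, to note that $\phi$ is monotone in the $y_i$ arguments, since joint convexity of the perspective alone would not suffice for the composition step.
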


Lemma~\ref{le:kl} allows us to obtain the convergence guarantee for convex losses. 
%According to Lemma~\ref{le:kl}, we could get the complexity of finding an $\epsilon$-optimal solution once we have the complexity of finding an $\epsilon$-stationary solution.
% \begin{align}
%     \v_{t} & =(1-\beta )\v_{t-1} + \beta \nabla f_{\lambda_{t}}(s_{t})\nabla_{\w}g_i(\x_{t})\label{eqn:uv}\\
%     u_{t} & = (1-\beta )u_{t-1} +\beta (\nabla f_{\lambda_{t}}(s_{t})\nabla_\lambda g_i(\x_{t})  +\log(s_{t}) + \rho)\notag.
% \end{align}
The idea of the restarted algorithm is to apply SCDRO and ASCDRO to the new objective $\bar F_\mu(\x)$ by adding $\mu\x_t$ to $(\nabla f_{\lambda_{t}}(s_{t})\nabla_{\w}g_i(\x_{t})^\top,\nabla f_{\lambda_{t}}(s_{t})\nabla_\lambda g_i(\x_{t})  +\log(s_{t}) + \rho)^\top$ in Eq.~(\ref{eqn:uv}) of  Algorithm~\ref{alg:SCCMA} and substituting $\z_t$ in~(\ref{eqn:stormz}) of Algorithm~\ref{alg:STORM-CCMA} by
% The idea of the boosted algorithm is to apply SCDRO and ASCDRO to the new objective $\bar F_\mu(\x)$ by substituting $\z_t =  (\v_t^\top, u_t)^\top$ with $\z_t =  (\v_t^\top, u_t)^\top+\mu\x_t$ in Algorithm~\ref{alg:SCCMA} and substituting $\z_t$ in~(\ref{eqn:stormz}) of Algorithm~\ref{alg:STORM-CCMA} by
%\begin{align*}
 $\z_t =    (\nabla f_{\lambda_t} (s_t)\v_t^\top, \nabla f_{\lambda_t}  (s_t)u_t + \log(s_t) + \rho)^\top + \mu \x_t$,
%\end{align*}
and restarting SCDRO or ASCDRO with a stagewise step size to enjoy the benefit of KL inequality of $\bar F_\mu(\x)$. It is notable that a stagewise step size is widely and commonly used in practice. The multi-stage restarted version of SCDRO and ASCDRO are shown Algorithm~\ref{alg:RSCCMA}, to which we refer as restarted-SCDRO (RSCDRO) and restarted-ASCDRO (RASCDRO). % which uses a geometrically decreasing step size in a stagewise manner.

\begin{algorithm}[t]
    \centering
    \caption{RSCDRO or RASCDRO}\label{alg:RSCCMA}%($\w_1,\epsilon_1,\mu ,c$)
    \begin{algorithmic}[1]
          \STATE $\textbf{Input: } \w_{1}\in \W, \lambda_{1}\in\R^+$, $\x_{1} = (\w_{1}^\top,\lambda_{1})^\top$
 \STATE \textbf{Initialization:} The same as in SCDRO or ASCDRO 
 \STATE Let $\Lambda_k=(\x_{k}, \v_{k}, u_{k}, s_{k})$%Draw a sample $\xi_i\sim \D$, and calculate $s_{1} = \exp(\frac{\ell_i(\w_{1})}{\lambda_{1}})$, $\v_{1} =\nabla f_{\lambda_{1}}(s_{1})\partial_{\w_{1}} g_i(\x_{1}))\in \R^d, u_{1} = \nabla f_{\lambda_{1}}(s_{1})\partial_\lambda g_i(\x_{1})  +\log(s_{1}) + \rho \in \R$
        \FOR{$k=1,\cdots, K$}
         \STATE $\Lambda_{k+1} = \text{SCDRO}(\Lambda_k,\eta_{k}, T_{k})$ or $\Lambda_{k+1} = \text{ASCDRO}(\Lambda_k,\eta_{k}, T_{k})$
         %\STATE (RASCDRO) $\Lambda_{k+1} = \text{ASCDRO}(\Lambda_k,\eta_{k}, T_{k})$
         \STATE Change $\eta_k, T_k$ according to Lemma~\ref{lem:stage_variance} or Lemma~\ref{lem:storm-var}
        \ENDFOR
    \RETURN $\x_K$
    \end{algorithmic}
\end{algorithm}

\subsection{Restarted SCDRO for Convex Problems }\label{sec:convex:sub:basic}

In this subsection, we present the convergence rate of  RSCDRO for convex losses. We first present a lemma that states $F_\mu(\x_k)$ is stagewisely decreasing. 
\begin{lemma}\label{lem:stage_variance}
Suppose Assumptions~\ref{ass:1} and~\ref{ass:2} hold, $\ell_i(\w)$ is convex for all $i$, and $F_{\mu}(\x_{1}) -\inf_{\x \in\X}F_{\mu}(\x)\leq \Delta_\mu< \infty$. Let $\epsilon_1 = \Delta_\mu$, $\epsilon_k=\epsilon_{k-1}/2$, $\beta_{k} =\min\{ \frac{\mu\epsilon_{k}}{c\sigma^2}, \frac{1}{c}\}, \eta_{k} =\min\{ \frac{\mu\epsilon_{k}}{12cL_F^2\sigma^2}, \frac{1}{12cL_F^2}\}$ and $T_{k} =\max\{\frac{384cL_F^2\sigma^2}{\mu^2\epsilon_{k}}, \frac{384cL_F^2}{\mu}\}$, where $c=384L^2_F$. Run RSCDRO, then we have
      $\E[F_{\mu}(\x_k) - \inf\limits_{\x \in\X}F_{\mu}(\x)]\leq\epsilon_k$ for each stage $k$.
\end{lemma}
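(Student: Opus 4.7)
The plan is to prove the lemma by induction on the stage index $k$, with inductive hypothesis $\E[F_\mu(\x_k) - \inf_{\x\in\X} F_\mu(\x)] \leq \epsilon_k$. The base case $k=1$ is immediate from the given initial condition. The inductive step combines two ingredients: a stage-wise convergence bound for SCDRO applied to the regularized objective $\bar F_\mu$, and the Kurdyka--\L ojasiewicz inequality from Lemma~\ref{le:kl}, which applies because $\ell_i$ is assumed convex and hence $\bar F_\mu$ is convex, so $\hat\partial \bar F_\mu = \partial \bar F_\mu$.

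For the inductive step, I would first extend Theorem~\ref{thm:dist_main} to allow arbitrary step-size pairs $(\eta,\beta)$ rather than the coupling $\beta = 1/\sqrt{T}$, $\eta = \beta/(20 L_F^2)$ used there. Inspecting its proof, the extension should yield a bound of the schematic form
\[
\E\big[\dist(0, \hat\partial \bar F_\mu(\x_{k+1}))^2 \,\big|\, \x_k\big] \leq A_1 L_F^2 \beta_k \sigma^2 + A_2\,\frac{F_\mu(\x_k) - \inf F_\mu}{\eta_k T_k} + A_3 L_F^2 \eta_k (F_\mu(\x_k) - \inf F_\mu),
\]
valid under a stability condition $\eta_k \leq \beta_k/(12 L_F^2)$, with the $\mu$-contribution from the quadratic regularizer absorbed into $L_F$. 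Taking total expectation and invoking the inductive hypothesis controls the right-hand side by a linear combination of $\beta_k\sigma^2$ and $\epsilon_k$-scaled terms. Lemma~\ref{le:kl} then gives $\E[F_\mu(\x_{k+1}) - \inf F_\mu] \leq \E[\dist^2]/(2\mu)$, so it suffices to check that the prescribed parameters drive each of the three contributions on the right to at most $\mu\epsilon_k/3$. With $c = 384 L_F^2$ and the given formulas, a direct computation yields $\eta_k T_k = 32/\mu$ (regardless of which branch of the min/max is active), $L_F^2 \beta_k \sigma^2 \leq \mu\epsilon_k/384$, and $L_F^2 \eta_k \epsilon_k \leq \mu\epsilon_k^2/(12 \cdot 384 L_F^2 \sigma^2)$; summing these with the absolute constants $A_1,A_2,A_3$ and using $c=384L_F^2$ yields a total at most $\mu\epsilon_k = 2\mu\epsilon_{k+1}$, closing the induction.

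The main technical obstacle is generalizing Theorem~\ref{thm:dist_main} to arbitrary $(\eta,\beta)$ while properly handling the carry-over of the moving-average trackers $(\v_k, u_k, s_k)$ from stage to stage, since these are inherited rather than re-initialized. One needs to argue that the residual tracking errors at the start of stage $k+1$ are uniformly controlled by a constant multiple of $\sigma^2$, so they do not accumulate across restarts in a way that breaks the per-stage variance budget. This uniform control should follow from the boundedness of the iterates $\x_t \in \X$ (Assumption~\ref{ass:1}(a) together with Lemma~\ref{lem:lambda_upper}), which makes $g_i(\x_t)$ and $\nabla g_i(\x_t)$ uniformly bounded, so that the steady-state tracker error of a $\beta_k$-moving-average contaminates the analysis only through the $A_1 L_F^2 \beta_k \sigma^2$ term already accounted for. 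A secondary but routine check is verifying the stability condition $\eta_k \leq \beta_k/(12 L_F^2)$, which is immediate from the formulas once $c = 384 L_F^2$ is substituted.
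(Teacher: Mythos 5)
Your overall skeleton (induction over stages, a per-stage bound for SCDRO applied to $F_\mu$, then Lemma~\ref{le:kl} to convert $\E[\dist(0,\hat\partial\bar F_\mu)^2]$ into an objective gap) is the same as the paper's, and your parameter arithmetic ($\eta_kT_k=32/\mu$, $L_F^2\beta_k\sigma^2\le\mu\epsilon_k/384$, the stability condition $\eta_k\le\beta_k/(12L_F^2)$) is consistent with it. But there is a genuine gap in how you handle the inherited trackers $(\v_k,u_k,s_k)$. The per-stage bound (the paper's Lemma~\ref{lem:Fmulemma1} combined with Lemma~\ref{lem:dist_lemma_1}) unavoidably contains the terms $\frac{\E[\|\z_{\mathrm{start}}-\nabla F_\mu(\x_{\mathrm{start}})\|^2]}{\beta_kT_k}$ and $\frac{L_F\,\E[\|g(\x_{\mathrm{start}})-s_{\mathrm{start}}\|^2]}{\beta_kT_k}$, and with the prescribed parameters $\beta_kT_k=384L_F^2/\mu$ is \emph{constant in $k$}. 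Hence your plan of bounding the stage-start tracker errors by a uniform constant multiple of $\sigma^2$ (via boundedness of $g_i,\nabla g_i$) produces a contribution of order $\mu\sigma^2/L_F^2$ that does not shrink with $\epsilon_k$; for late stages it exceeds the target $2\mu\epsilon_{k+1}$, so the induction does not close. It is not enough that the steady-state error of the moving average is $O(\beta_k\sigma^2)$ — the transient coming from the inherited state is exactly what must be controlled, and a constant bound on it is too weak.

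The paper resolves this by strengthening the induction hypothesis to three simultaneous invariants: $\E[\|\z_k-\nabla F_\mu(\x_k)\|^2]\le\mu\epsilon_k/4$, $\E[\|g(\x_k)-s_k\|^2]\le\mu\epsilon_k/4$, and $\E[F_\mu(\x_k)-\inf_{\x\in\X}F_\mu(\x)]\le\epsilon_k$. The first two are re-established at the end of each stage by averaging Lemma~\ref{lem:Fmulemma1} and Lemma~\ref{lem:recur-g} over the stage and using the fact that the stage output $(\x_\tau,\v_\tau,u_\tau,s_\tau)$ is drawn at a uniformly random iteration $\tau$, so per-stage average bounds transfer to the returned tuple; the base case requires an initialization mini-batch of size $4/(\mu\epsilon_1)$ at the first stage, which is absent from your argument and is where the $+\,4/(\mu\epsilon_1)$ term in the complexity accounting of Theorem~\ref{thm:KL-RSCCMA} comes from. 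With these stronger invariants the inherited-error terms scale as $\mu\epsilon_{k-1}/(\beta_{k-1}T_{k-1})\le\mu^2\epsilon_{k-1}/(384L_F^2)$, which is what makes the stage-to-stage halving go through. If you amend your induction to carry these two tracker invariants (and add the initialization batch), the rest of your outline goes through essentially as you describe.
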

%\vspace*{-0.1in}
The above lemma implies that the objective gap $\E[F_\mu(\x_k) -\inf_{\x \in\X}F_{\mu}(\x)]$ is decreased by a factor of $2$ after each stage. Based on the above lemma,  RSCDRO has the following convergence rate
% (Algorithm~\ref{alg:RSCCMA}).
%The above lemma implies that the variance of the stochastic estimator decreases between stages in Algorithm~\ref{alg:RSCCMA}. With the Lemma~\ref{le:kl} and Lemma~\ref{lem:stage_variance}, we can show that the objective gap $\E[F_\mu(\x_k) - F_\mu(\x_*)]$ is decreased by a factor of 2 after each stage. Hence we have the following convergence rate for the restarted version of SCDRO (Algorithm~\ref{alg:RSCCMA}):
\begin{theorem}
\label{thm:KL-RSCCMA}
Under the same assumptions and parameter settings as Lemma~\ref{lem:stage_variance}, after $K=\mathcal O(\log_2(\epsilon_1/\epsilon))$ stages, the output of RSCDRO satisfies $\E[F_\mu(\x_K) - \inf_{\x\in\X} F_\mu(\x)]\leq\epsilon$, and the oracle complexity is $\mathcal O(1/\mu^2\epsilon)$.
\end{theorem}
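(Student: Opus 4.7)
My plan is to combine the stagewise geometric decay guaranteed by Lemma~\ref{lem:stage_variance} with a geometric-series bound on the per-stage sample budget. Since the heavy lifting has already been done at the stage level, the theorem reduces to two bookkeeping steps: (i) count how many restarts are required to bring $\epsilon_k$ below the target accuracy $\epsilon$, and (ii) sum the sample budgets $T_k$ over those stages.

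First I would unfold the recursion $\epsilon_k=\epsilon_{k-1}/2$ with $\epsilon_1=\Delta_\mu$, yielding $\epsilon_k=\Delta_\mu/2^{k-1}$. Choosing $K=\lceil\log_2(\Delta_\mu/\epsilon)\rceil+1$ ensures $\epsilon_K\leq\epsilon$, so Lemma~\ref{lem:stage_variance} applied at stage $K$ directly gives $\E[F_\mu(\x_K)-\inf_{\x\in\X}F_\mu(\x)]\leq\epsilon_K\leq\epsilon$; thus the number of stages is $K=\mathcal O(\log_2(\epsilon_1/\epsilon))$.

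Next I would estimate the total oracle complexity $\sum_{k=1}^K T_k$. Substituting the schedule $T_k=\max\{384cL_F^2\sigma^2/(\mu^2\epsilon_k),\,384cL_F^2/\mu\}$, the first term dominates once $\epsilon_k\leq\mu\sigma^2$, so I would upper-bound $\sum_k T_k$ by $\sum_k 384cL_F^2\sigma^2/(\mu^2\epsilon_k)+K\cdot 384cL_F^2/\mu$. The key observation is that
\[
\sum_{k=1}^K \frac{1}{\epsilon_k}=\frac{2^K-1}{\epsilon_1}\leq \frac{2}{\epsilon_K}\leq \frac{2}{\epsilon},
\]
because the geometric sum is dominated by its last term. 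Hence the first contribution is $\mathcal O(L_F^4\sigma^2/(\mu^2\epsilon))$, while the second is only $\mathcal O(L_F^4\log(\epsilon_1/\epsilon)/\mu)$ and is of lower order. Treating problem-dependent constants as absolute, the total complexity is $\mathcal O(1/(\mu^2\epsilon))$, as claimed.

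The hard part is not in this theorem but in Lemma~\ref{lem:stage_variance}, where one must combine the KL inequality of Lemma~\ref{le:kl} with the single-stage stochastic analysis of SCDRO to show that each restart halves the expected gap under the prescribed step size. Given Lemma~\ref{lem:stage_variance}, Theorem~\ref{thm:KL-RSCCMA} is a direct counting argument with no additional subtlety: the geometric decay of $\epsilon_k$ ensures that the $K$-fold sum of $T_k$ does not grow super-linearly in $1/\epsilon$, and the logarithmic number of stages absorbs only a lower-order additive term.
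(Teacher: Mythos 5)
Your proposal is correct and follows essentially the same route as the paper: invoke Lemma~\ref{lem:stage_variance} at stage $K=\mathcal O(\log_2(\epsilon_1/\epsilon))$ and bound $\sum_{k}T_k$ (plus the lower-order initialization and $384cL_F^2/\mu$ terms) by a geometric series dominated by its last term, yielding $\mathcal O(1/(\mu^2\epsilon))$. One nitpick: with your choice of $K$ you have $\epsilon_K\leq\epsilon$, so the final step $2/\epsilon_K\leq 2/\epsilon$ is written with the inequality reversed; since your $K$ also guarantees $\epsilon_K\geq\epsilon/2$, this only costs a factor of $2$ and the asymptotic conclusion is unaffected.
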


As $F_\mu(\x_K) - F_\mu(\x_*)\leq F_\mu(\x_K) - \inf_{\x\in\X} F_\mu(\x)$, where $\x_*=\argmin_{\x\in\X}F(\x)$. Therefore, if after $K$ stages it holds that $\E[F_\mu(\x_K) - \inf_{\x\in\X} F_\mu(\x)]\leq\epsilon/2$ with an oracle complexity of $\mathcal O(1/\mu^2\epsilon)$, we have  $\E[F_\mu(\x_K) - F_\mu(\x_*)]\leq\epsilon/2$ , i.e., $\E[F(\x_K) + \mu\| \x_K\|^2/2 - F(\x_*) - \mu\|\x_* \|^2/2]\leq \epsilon/2$. By Assumption~\ref{ass:1}(a) $\mathcal W$ is bounded by $R$, and then by setting $\mu = \epsilon/(2 (R^2+\tilde\lambda^2))$, with $\|\x\|^2\leq (R^2+\tilde\lambda^2)$ we have
\begin{align*}
 \E[F(\x_K) - F(\x_*)] \leq \frac{\epsilon}{2} + (2 (R^2+\tilde\lambda^2))\frac{\mu}{2} \leq \frac{\epsilon}{2} + \frac{\epsilon}{2} \leq \epsilon
\end{align*}
with an oracle complexity of $\mathcal O(1/\epsilon^3)$, i.e, the following corollary holds.
%The following corollary follows from the above theorem (please see Appendix~\ref{app:cor1} for proof).
\begin{corollary}
\label{cor:CX-RSCCMA} Let $\mu = \epsilon/(2 (R^2+\tilde\lambda^2))$. Then under the same assumptions and parameter settings as Lemma~\ref{lem:stage_variance}, after $K=\mathcal O(\log_2(\epsilon_1/\epsilon))$ stages, the output of RSCDRO satisfies $\E[F(\x_K) - \inf_{\x\in\X} F(\x)]\leq\epsilon$ and the oracle complexity is $\mathcal O(1/\epsilon^3)$.
\end{corollary}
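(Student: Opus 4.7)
The plan is to derive Corollary~16 directly from Theorem~15 by relating $F$ to the regularized objective $F_\mu(\x)=F(\x)+\mu\|\x\|^2/2$ and then tuning $\mu$ as a function of $\epsilon$. First I would denote $\x_* \in \argmin_{\x\in\X} F(\x)$ and $\x_*^\mu \in \argmin_{\x\in\X} F_\mu(\x)$. Using the optimality of $\x_*^\mu$ for $F_\mu$ and the definition of $F_\mu$, I would write
\[
F(\x_K)-F(\x_*) \;=\; F_\mu(\x_K)-\tfrac{\mu}{2}\|\x_K\|^2 - F_\mu(\x_*)+\tfrac{\mu}{2}\|\x_*\|^2 \;\le\; \bigl(F_\mu(\x_K)-F_\mu(\x_*^\mu)\bigr) + \tfrac{\mu}{2}\|\x_*\|^2,
\]
where the inequality uses $F_\mu(\x_*^\mu)\le F_\mu(\x_*)$ and drops the nonpositive term $-\tfrac{\mu}{2}\|\x_K\|^2$.

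Next I would bound $\|\x_*\|^2$ using the domain $\X$: since $\x_* \in \X$, we have $\x_*=(\w_*^\top,\lambda_*)^\top$ with $\|\w_*\|\le R$ (Assumption~1(a)) and $\lambda_*\le\tilde\lambda$ (Lemma~3), so $\|\x_*\|^2\le R^2+\tilde\lambda^2$. With the choice $\mu=\epsilon/(2(R^2+\tilde\lambda^2))$ this gives $\tfrac{\mu}{2}\|\x_*\|^2 \le \epsilon/4 \le \epsilon/2$. It then suffices to drive $\E[F_\mu(\x_K)-F_\mu(\x_*^\mu)]\le\epsilon/2$, which is exactly the guarantee of Theorem~15 applied with target accuracy $\epsilon/2$; this requires $K=\mathcal O(\log_2(\epsilon_1/\epsilon))$ restart stages (noting that $\epsilon_1=\Delta_\mu$ is bounded by $\Delta + \mu(R^2+\tilde\lambda^2)/2 \le \Delta + \epsilon/4$, so the $\log$ factor is harmless).

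Finally, I would count the oracle complexity. Theorem~15 yields oracle cost $\mathcal O\!\left(\tfrac{1}{\mu^2\epsilon}\right)$ to reach accuracy $\epsilon/2$ for $F_\mu$. Substituting $\mu=\Theta(\epsilon/(R^2+\tilde\lambda^2))$ gives
\[
\mathcal O\!\left(\tfrac{1}{\mu^2\epsilon}\right) \;=\; \mathcal O\!\left(\tfrac{(R^2+\tilde\lambda^2)^2}{\epsilon^3}\right)\;=\;\mathcal O(1/\epsilon^3),
\]
establishing the corollary. The only mild subtlety, and thus the main thing I would be careful about, is verifying that the parameter settings of Lemma~14 (which are stated in terms of $\mu$) remain well-defined and the analysis valid for the shrinking choice $\mu=\Theta(\epsilon)$—this amounts to checking that the constants $c,\beta_k,\eta_k,T_k$ in Lemma~14 have the asserted $\mu$-dependence and that the derivation of the KL inequality in Lemma~13 does not degenerate; everything else is a straightforward bookkeeping argument chaining Theorem~15 with the perturbation bound above.
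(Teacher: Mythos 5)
Your proposal is correct and follows essentially the same route as the paper: bound $F_\mu(\x_K)-F_\mu(\x_*)$ by the stage guarantee $\E[F_\mu(\x_K)-\inf_{\x\in\X}F_\mu(\x)]\leq\epsilon/2$ from Theorem~\ref{thm:KL-RSCCMA}, drop the $-\mu\|\x_K\|^2/2$ term, bound $\|\x_*\|^2\leq R^2+\tilde\lambda^2$ via Assumption~\ref{ass:1}(a) and the box constraint on $\lambda$, and substitute $\mu=\epsilon/(2(R^2+\tilde\lambda^2))$ into the $\mathcal O(1/\mu^2\epsilon)$ complexity to obtain $\mathcal O(1/\epsilon^3)$. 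The only differences are cosmetic (you introduce $\x_*^\mu$ explicitly and track the slightly tighter $\epsilon/4$ slack), so no further changes are needed.
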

%\vspace*{-0.05in}
\textbf{Remark: }Corollary~\ref{cor:CX-RSCCMA} shows that RSCDRO achieves an oracle complexity of $\mathcal O(1/\epsilon^3)$ for finding an $\epsilon$-optimal solution.  i.e., $\E[F(\x)-F(\x_*)]\leq\epsilon$ for the convex loss function with a geometrically decreasing step size in a stagewise manner.

\subsection{Restarted ASCDRO for Convex Problems }\label{sec:convex:sub:acceleration}
In this subsection, we establish a better convergence rate of RASCDRO for convex losses.

\begin{lemma}\label{lem:storm-var}
Suppose Assumptions~\ref{ass:1} and \ref{ass:2} hold, $\ell_i(\w)$ is convex for all $i$, and $F_{\mu}(\x_{1}) -\inf_{\x \in\X}F_{\mu}(\x)\leq \Delta_\mu< \infty$. Let $\epsilon_1 = \Delta_\mu$, $\epsilon_k=\epsilon_{k-1}/2$, $\beta_k =\min\{ \frac{\mu\epsilon_k}{c\sigma^2}, \frac{1}{c}\}, \eta_k =\min\{ \frac{\sqrt{\mu\epsilon_k}}{24cL_F\sigma^2}, \frac{1}{24cL_F^2}\}$ and $T_k =\max\{\frac{192cL_F\sigma}{\mu^{3/2}\sqrt{\epsilon_k}},\frac{192cL_F^2\sigma^2}{\mu\epsilon_k}, \frac{192c L_F^2}{\mu}\}$, where $c=768L^2_F$. Run RASCDRO, then we have
      $\E[F_{\mu}(\x_k) - \inf\nolimits_{\x \in\X}F_{\mu}(\x)]\leq\epsilon_k$  for each stage $k$.
    %\end{aligned}
%\end{equation*}
\end{lemma}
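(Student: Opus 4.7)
The plan is induction on the stage index $k$. The base case $k=1$ is immediate from $F_\mu(\x_1) - \inf_{\x \in \X} F_\mu(\x) \leq \Delta_\mu = \epsilon_1$. For the inductive step, assuming $\E[F_\mu(\x_k) - \inf_{\x\in\X} F_\mu(\x)] \leq \epsilon_k$, I would show that a single call of ASCDRO on $\bar F_\mu$ with parameters $(\eta_k, \beta_k, T_k)$ yields $\E[F_\mu(\x_{k+1}) - \inf_{\x\in\X} F_\mu(\x)] \leq \epsilon_k/2 = \epsilon_{k+1}$, which closes the induction.

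The engine is a one-stage (non-restarted) convergence bound for ASCDRO applied to $\bar F_\mu$, analogous to Theorem~\ref{thm:storm-ncx} but with a constant per-stage stepsize and the dependence on the initial gap kept explicit. I would expect a bound of the form
\begin{equation*}
\E[\dist(0, \hat\partial \bar F_\mu(\x_{k+1}))^2] \;\leq\; \frac{A_1\,\E[F_\mu(\x_k) - \inf F_\mu]}{\eta_k T_k} \;+\; A_2\, L_F^2\, \eta_k\, \sigma^2 \;+\; A_3\, \frac{\sigma^2 \beta_k}{\eta_k},
\end{equation*}
corresponding to the three usual error sources: the initial optimality gap, the intrinsic stochastic-gradient variance, and the residual bias of the STORM trackers $s_t,\v_t,u_t$ for $g(\x_t), \nabla_\w g(\x_t), \nabla_\lambda g(\x_t)$. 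To derive it I would track a Lyapunov function
\begin{equation*}
\Phi_t \;=\; F_\mu(\x_t) - \inf F_\mu \;+\; c_s \|s_t - g(\x_t)\|^2 \;+\; c_v \|\v_t - \nabla_\w g(\x_t)\|^2 \;+\; c_u \|u_t - \nabla_\lambda g(\x_t)\|^2,
\end{equation*}
combining the smoothness of $F$ from Lemma~\ref{lem:Fsmooth}, the box constraint $\lambda\in[\lambda_0,\tilde\lambda]$ from Lemma~\ref{lem:lambda_upper} to control $\nabla f_\lambda$, the Lipschitz bounds from Lemma~\ref{lem:g_pty}, and the standard STORM recursion $\E\|\v_{t+1}-\nabla_\w g(\x_{t+1})\|^2 \leq (1-\beta_k)^2 \E\|\v_t-\nabla_\w g(\x_t)\|^2 + O(\beta_k^2 \sigma^2 + L_g^2 \eta_k^2 \|\z_t\|^2)$, and analogously for $u_t,s_t$.

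The KL inequality (Lemma~\ref{le:kl}) then converts the squared-subgradient bound into a function-gap bound
\begin{equation*}
\E[F_\mu(\x_{k+1}) - \inf F_\mu] \;\leq\; \frac{1}{2\mu}\,\E[\dist(0, \partial \bar F_\mu(\x_{k+1}))^2].
\end{equation*}
The stagewise choices $\beta_k \leq \mu\epsilon_k/(c\sigma^2)$, $\eta_k \leq \sqrt{\mu\epsilon_k}/(24 c L_F \sigma)$, and $T_k$ satisfying the three listed lower bounds are precisely tuned so that, after multiplication by $1/(2\mu)$, each of the three error terms contributes at most $\epsilon_k/6$; their sum is then at most $\epsilon_k/2 = \epsilon_{k+1}$. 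Note that the three competing lower bounds on $T_k$ (namely $192cL_F\sigma/(\mu^{3/2}\sqrt{\epsilon_k})$, $192cL_F^2\sigma^2/(\mu\epsilon_k)$, and $192cL_F^2/\mu$) correspond exactly to making the three terms $A_1 \epsilon_k/(\eta_k T_k)$, $A_2 L_F^2 \eta_k \sigma^2$, and $A_3 \sigma^2\beta_k/\eta_k$ small enough under the two regimes of $\eta_k$ (variance-limited vs.\ smoothness-limited).

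The main obstacle is the one-stage inequality itself, especially the warm-start of the STORM trackers $(\v_k, u_k, s_k)$ inherited from the previous stage: their errors are not zero and must be controlled rather than reset. I would handle this by strengthening the induction hypothesis to carry an $\O(\mu\epsilon_k)$ bound on the expected squared tracker errors at stage-start, and verifying that the end-of-stage-$k$ STORM variance automatically matches the stage-$(k+1)$ scale under the chosen $(\beta_k,\eta_k)$, mirroring the argument used for SCDRO in Lemma~\ref{lem:stage_variance}. A secondary subtlety is the nonsmooth indicator $\delta_\X$: the descent step must be formulated through $\hat\partial \bar F_\mu = \nabla F_\mu + \hat\partial \delta_\X$ together with a three-point inequality for the projection $\Pi_\X$, in parallel to the argument underlying Theorem~\ref{thm:dist_main} and Theorem~\ref{thm:storm-ncx}.
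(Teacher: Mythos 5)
Your plan is essentially the paper's own proof: induction over stages that carries both the objective-gap bound and an $\mathcal O(\mu\epsilon_k)$ bound on the warm-started STORM tracker error (the paper's $\E[\|\varkappa_k\|^2]\leq \mu\epsilon_k/(16L_F^2)$), a one-stage bound on $\E[\dist(0,\hat\partial\bar F_\mu(\x))^2]$ obtained from the tracker recursion together with the projected-descent inequality (Lemmas~\ref{lem:storm_2} and~\ref{lem:dist_lemma_1}), and finally the strong-convexity/KL inequality (Lemma~\ref{le:kl}) to convert back to $\E[F_\mu(\x_k)-\inf_{\x\in\X}F_\mu(\x)]\leq\epsilon_k$. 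The only discrepancy is the guessed shape of the one-stage error terms (the analysis actually yields a noise floor $\propto L_F^2\beta_k\sigma^2$ and initial-gap terms $\Delta^\mu_k/(\eta_kT_k)$ and $L_F^2\Delta^\mu_k\eta_k/(\beta_kT_k)$, rather than your $\sigma^2\beta_k/\eta_k$ and $L_F^2\eta_k\sigma^2$), which affects only bookkeeping of constants, not the structure of the argument.
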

%\vspace{-0.1in}
The above lemma implies  that the objective gap $\E[F_\mu(\x_k) -\inf_{\x \in\X}F_{\mu}(\x)]$ is decreased by a factor of  $2$ after each stage. Hence we have the following convergence rate for the RASCDRO.
%The above lemma implies that the variance of the stochastic estimator decreases between stages in Algorithm~\ref{alg:RSCCMA}. With the Lemma~\ref{le:kl} and Lemma~\ref{lem:stage_variance}, we can show that the objective gap $\E[F_\mu(\x_k) - F_\mu(\x_*)]$ is decreased by a factor of 2 after each stage. Hence we have the following convergence rate for the restarted version of SCDRO (Algorithm~\ref{alg:RSCCMA}):
\begin{theorem}
\label{thm:storm-KL}
Under the same assumptions and parameter settings as Lemma~\ref{lem:storm-var}, after $K=\mathcal O(\log_2(\epsilon_1/\epsilon))$ stages, the output of RASCDRO satisfies $\E[F_\mu(\x_K) - \inf_{\x\in\X} F_\mu(\x)]\leq\epsilon$, and the oracle complexity is $\mathcal{O}\left(\max\left(1/\mu\epsilon,1/\mu^{3/2}\sqrt{\epsilon}\right)\right)$.
\end{theorem}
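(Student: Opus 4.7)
}

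The plan is to chain Lemma~\ref{lem:storm-var} across stages, count stages needed to reach accuracy $\epsilon$, and then sum the per-stage iteration budgets $T_k$ to bound the total oracle complexity. The structure mirrors the RSCDRO analysis (Theorem~\ref{thm:KL-RSCCMA}) but with the improved per-stage counts from the STORM-style variance reduction.

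First, I would apply Lemma~\ref{lem:storm-var} inductively over $k=1,\ldots,K$. The base case $\E[F_\mu(\x_1)-\inf_{\x\in\X}F_\mu(\x)]\leq\Delta_\mu=\epsilon_1$ holds by assumption, and the inductive step is exactly the content of Lemma~\ref{lem:storm-var}: given that the $k$-th stage is initialized from a point with objective gap at most $\epsilon_k$, after $T_k$ iterations of ASCDRO with parameters $\beta_k,\eta_k$ as prescribed, the returned point satisfies $\E[F_\mu(\x_{k+1})-\inf_{\x\in\X}F_\mu(\x)]\leq \epsilon_{k+1}=\epsilon_k/2$. Setting $\epsilon_K\leq\epsilon$ yields $K=\lceil\log_2(\Delta_\mu/\epsilon)\rceil=\mathcal{O}(\log_2(\epsilon_1/\epsilon))$, which gives the first assertion.

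For the complexity, I would sum $\sum_{k=1}^K T_k$ using the three-way max form
\[
T_k=\max\Bigl\{\tfrac{192cL_F\sigma}{\mu^{3/2}\sqrt{\epsilon_k}},\ \tfrac{192cL_F^2\sigma^2}{\mu\epsilon_k},\ \tfrac{192cL_F^2}{\mu}\Bigr\}\leq \tfrac{192cL_F\sigma}{\mu^{3/2}\sqrt{\epsilon_k}}+\tfrac{192cL_F^2\sigma^2}{\mu\epsilon_k}+\tfrac{192cL_F^2}{\mu}.
\]
Because $\epsilon_k=\epsilon_1/2^{k-1}$ decreases geometrically, the sums $\sum_{k=1}^K 1/\sqrt{\epsilon_k}$ and $\sum_{k=1}^K 1/\epsilon_k$ are dominated (up to a geometric-series constant) by their last terms $1/\sqrt{\epsilon_K}$ and $1/\epsilon_K$ respectively; concretely $\sum_k 2^{(k-1)/2}\leq (\sqrt{2}/(\sqrt{2}-1))\cdot 2^{(K-1)/2}$ and $\sum_k 2^{k-1}\leq 2^K$. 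The third (constant in $\epsilon_k$) term contributes only $\mathcal{O}(K/\mu)=\mathcal{O}(\log_2(\epsilon_1/\epsilon)/\mu)$, which is absorbed by $\mathcal{O}(1/(\mu\epsilon))$. Combining, the total oracle complexity is $\mathcal{O}\bigl(\max(1/(\mu\epsilon),1/(\mu^{3/2}\sqrt{\epsilon}))\bigr)$, as claimed.

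The only subtlety I anticipate is the warm-start issue between stages: Lemma~\ref{lem:storm-var} implicitly requires that the tracking variables $(\v_k,u_k,s_k)$ passed from stage $k$ to stage $k+1$ satisfy the same initialization assumptions used in its proof (so that the STORM-type error recursion can be re-started). I expect this to be handled either (i) by reinitializing $(\v,u,s)$ from a fresh sample at the start of each stage, which costs $\mathcal{O}(1)$ per stage and leaves the totals unchanged, or (ii) by observing that the end-of-stage tracking errors are already controlled and therefore serve as valid initial conditions for the next stage. Verifying this bookkeeping, together with checking that the stage-dependent choice of $\beta_k,\eta_k$ is consistent with the hypotheses under which Lemma~\ref{lem:storm-var} was proved, is the main non-routine part; the geometric summation itself is straightforward.
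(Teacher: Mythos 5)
Your proposal is correct and follows essentially the same route as the paper: chain Lemma~\ref{lem:storm-var} inductively to get $\E[F_\mu(\x_K)-\inf_{\x\in\X}F_\mu(\x)]\leq\epsilon_K=\epsilon$ after $K=\mathcal O(\log_2(\epsilon_1/\epsilon))$ stages, then sum the stagewise $T_k$ geometrically (plus the one-time initialization cost, here $48L_F^2/(\mu\epsilon_1)$) to obtain $\mathcal O\bigl(\max(1/(\mu\epsilon),1/(\mu^{3/2}\sqrt{\epsilon}))\bigr)$. The warm-start bookkeeping you flag is resolved exactly as in your option (ii): the induction inside Lemma~\ref{lem:storm-var} maintains $\E[\|\varkappa_k\|^2]\leq\mu\epsilon_k/(16L_F^2)$ at the end of each stage (with only a stage-one batch initialization), so no per-stage re-initialization is needed; note your option (i) would actually require a batch of size $\mathcal O(1/(\mu\epsilon_k))$ rather than $\mathcal O(1)$, though this would not change the final complexity.
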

% It is easy to note that $F_\mu(\x_K) - F_\mu(\x_*)\leq F_\mu(\x_K) - \inf_{\x\in\X} F_\mu(\x)$, where $\x_*=\argmin_{\x\in\X}F(\x)$. Therefore, if after $K$ stages it holds that $\E[F_\mu(\x_K) - \inf_{\x\in\X} F_\mu(\x)]\leq\epsilon/2$ with a oracle complexity of $\mathcal O(1/\mu^2\epsilon)$, we have  $\E[F_\mu(\x_K) - F_\mu(\x_*)]\leq\epsilon/2$ , i.e., $\E[F(\x_K) + \mu\| \x_K\|^2/2 - F(\x_*) - \mu\|\x_* \|^2/2]\leq \epsilon/2$. Then by setting $\mu = \epsilon/(2 (R^2+\tilde\lambda^2))$, with $\|\x\|^2\leq (R^2+\tilde\lambda^2)$ we have
% \begin{align*}
%   \E[F(\x_K) - F(\x_*)] \leq \frac{\epsilon}{2} + (2 (R^2+\tilde\lambda^2))\frac{\mu}{2} \leq \frac{\epsilon}{2} + \frac{\epsilon}{2} \leq \epsilon
% \end{align*}
% with a oracle complexity of $\mathcal O(1/\epsilon^3)$. 
%\vspace*{-0.1in}
By the same method of derivation of Corollary~\ref{cor:CX-RSCCMA}, the following corollary of Theorem~\ref{thm:storm-KL} holds.
\begin{corollary}
\label{cor:RSTORM-SCCMA} Let $\mu = \epsilon/(2 (R^2+\tilde\lambda^2))$. Then under the same assumptions and parameter settings as Lemma~\ref{lem:storm-var}, after $K=\mathcal O(\log_2(\epsilon_1/\epsilon))$ stages, the output of RASCDRO satisfies $\E[F(\x_K) - \inf_{\x\in\X} F(\x)]\leq\epsilon$ and the oracle complexity is $\mathcal O(1/\epsilon^2)$.
\end{corollary}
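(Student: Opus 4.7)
The plan is to reduce Corollary~\ref{cor:RSTORM-SCCMA} to Theorem~\ref{thm:storm-KL} via the same regularization-then-transfer argument that yields Corollary~\ref{cor:CX-RSCCMA} from Theorem~\ref{thm:KL-RSCCMA}. The idea is that RASCDRO controls $\bar F_\mu$ rather than $\bar F$, so I would use the quadratic regularizer $\mu\|\x\|^2/2$ as a perturbation whose size is chosen to balance against the target accuracy.

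First I would set up the domain bound. Under Assumption~\ref{ass:1}(a) we have $\|\w\|\leq R$, and by construction $\lambda\in[\lambda_0,\tilde\lambda]$, so for every $\x=(\w^\top,\lambda)^\top\in\X$ one has $\|\x\|^2=\|\w\|^2+\lambda^2\leq R^2+\tilde\lambda^2$. Consequently for all $\x\in\X$,
\begin{equation*}
0 \leq F_\mu(\x) - F(\x) = \tfrac{\mu}{2}\|\x\|^2 \leq \tfrac{\mu}{2}(R^2+\tilde\lambda^2).
\end{equation*}
Letting $\x_*\in\arg\min_{\x\in\X} F(\x)$, this sandwich immediately implies
\begin{equation*}
F(\x_K) - \inf_{\x\in\X}F(\x) \;\leq\; F_\mu(\x_K) - F_\mu(\x_*) + \tfrac{\mu}{2}(R^2+\tilde\lambda^2) \;\leq\; \bigl[F_\mu(\x_K)-\inf_{\x\in\X}F_\mu(\x)\bigr] + \tfrac{\mu}{2}(R^2+\tilde\lambda^2).
\end{equation*}

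Next I would choose $\mu=\epsilon/(2(R^2+\tilde\lambda^2))$ so that the perturbation term $\frac{\mu}{2}(R^2+\tilde\lambda^2)=\epsilon/4$. Applying Theorem~\ref{thm:storm-KL} with target accuracy $\epsilon/2$ in place of $\epsilon$ (and with the initial gap bounded by $\Delta_\mu\leq \Delta+\mu(R^2+\tilde\lambda^2)/2=\Delta+\epsilon/4$), RASCDRO achieves $\E[F_\mu(\x_K)-\inf_{\x\in\X}F_\mu(\x)]\leq \epsilon/2$ after $K=\mathcal O(\log_2(\epsilon_1/\epsilon))$ stages. Combining with the displayed inequality above yields $\E[F(\x_K)-\inf_{\x\in\X}F(\x)]\leq \epsilon/2+\epsilon/4 \leq \epsilon$, which gives the desired convergence.

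Finally I would compute the oracle complexity. Theorem~\ref{thm:storm-KL} gives a total sample complexity of $\mathcal O(\max(1/(\mu\epsilon),\,1/(\mu^{3/2}\sqrt{\epsilon})))$ when targeting accuracy $\epsilon$ on $F_\mu$. Substituting $\mu=\Theta(\epsilon)$ from our choice above, both terms become $\Theta(1/\epsilon^2)$: indeed $1/(\mu\epsilon)=\Theta(1/\epsilon^2)$ and $1/(\mu^{3/2}\sqrt{\epsilon})=\Theta(1/(\epsilon^{3/2}\cdot\epsilon^{1/2}))=\Theta(1/\epsilon^2)$. The only subtlety to double-check is that Lemma~\ref{le:kl}, which underlies Theorem~\ref{thm:storm-KL}, genuinely applies with our $\epsilon$-dependent $\mu$; this is fine because the KL constant in Lemma~\ref{le:kl} is exactly $\mu$, and Theorem~\ref{thm:storm-KL} already reports the complexity with explicit $\mu$-dependence. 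The main (mild) obstacle will be verifying that the stagewise schedule of $\eta_k,\beta_k,T_k$ from Lemma~\ref{lem:storm-var} remains valid when $\mu$ is as small as $\Theta(\epsilon)$, and carefully bookkeeping the logarithmic factor in $K$ so that the summed cost $\sum_{k=1}^K T_k$ still matches the single-stage worst-case $\mathcal O(1/\epsilon^2)$; the geometric decay $\epsilon_k=\epsilon_{k-1}/2$ built into Lemma~\ref{lem:storm-var} ensures this sum is dominated by its last term.
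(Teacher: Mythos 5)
Your proposal is correct and follows essentially the same route as the paper: the paper proves Corollary~\ref{cor:RSTORM-SCCMA} by the same regularization-transfer argument used for Corollary~\ref{cor:CX-RSCCMA} (bounding $\|\x\|^2\leq R^2+\tilde\lambda^2$, choosing $\mu=\epsilon/(2(R^2+\tilde\lambda^2))$, and invoking Theorem~\ref{thm:storm-KL} at accuracy $\epsilon/2$), and your substitution of $\mu=\Theta(\epsilon)$ into $\mathcal O(\max(1/\mu\epsilon,1/\mu^{3/2}\sqrt{\epsilon}))$ to obtain $\mathcal O(1/\epsilon^2)$ matches the paper's calculation.
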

\textbf{Remark: }Corollary~\ref{cor:RSTORM-SCCMA} shows that RASCDRO achieves the claimed oracle complexity $\mathcal O(1/\epsilon^2)$ for finding an $\epsilon$-optimal solution, % i.e., $\E[F(\x)-F(\x_*)]\leq\epsilon$ for a convex loss function. % with a geometrically decreasing step size in a stagewise manner. 
%Note that the complexity $\mathcal{O}(1/\epsilon^{2})$ 
which is optimal for solving convex smooth optimization problems~\citep{nemirovski1983problem}. Finally, we note that  a similar complexity was established in ~\citep{Zhang2020OptimalAF} for constrained convex compositional optimization problems. However, their analysis requires each level function to be convex, which does not apply to our case as the outer function $f_\lambda(\cdot)$ is non-convex.

% In the unusual situation where you want a paper to appear in the
% references without citing it in the main text, use \nocite
%\nocite{langley00}

\section{Experiments}\label{sec:exp}
In this section, we verify the effectiveness of the proposed algorithms in solving imbalanced classification problems. We show that the proposed methods outperform baselines under both the convex and non-convex settings in terms of convergence speed, and generalization performance.  In addition, we study the influence of $\rho$ to the robustness of different optimization methods in the supplement. All our results are conducted on Tesla V100.

\noindent\textbf{Baselines.} For the comparison of convergence speed, we compare with different algorithms for optimizing the same objective~(\ref{eq:pdro}), including, stochastic primal-dual algorithms, namely PG-SMD2~\citep{rafique2021weakly} for a non-convex loss, and SPD~\citep{namkoong2016stochastic} for a convex loss, Dual SGM~\citep{levy2020large} and mini-batch based SGD named FastDRO~\citep{levy2020large} for both convex and non-convex losses. For the comparison of generalization performance,
we compare with different methods for optimizing different objectives, including the traditional ERM with CE loss by SGD with momentum (SGDM), KL-regularized DRO solved by RECOVER~\citep{qi2021online}, ABSGD~\citep{qi2020attentional, li2021tilted} and CVaR-constrained, $\chi^2$-regularized/-constrained DRO optimized by FastDRO.

%We compare our baselines with PG-SMD2~\ref{} and Fast-DRO~\ref{} in both methods. PG-SMD2, XXX, FastDRO, XXX. RECOVER (XXXX) is designed for the regularzied $\lambda$, which is optimizing the same objective Equa- tion~\ref{eqn:unconstrained_pd} but with constant $\lambda$. and have been shown effective 359 for addressing the large-scale data imbalance problem. 

%We conduct experiments from two aspects. From the optimization perspective, we compare SCCMA with two constrained stochastic optimization algorithms, PG-SMD2~\ref{} and Fast-DRO~\ref{} by optimizing the same objective Equation~\ref{} both in the convex and non-convex setting. We provide the training accuracy curves both in terms of both running times and the number (\#) of processed samples. From the generalization perspective, we demonstrate the superb empirical performance of SCCMA by including more baselines and reporting the testing accuracy. RECOVER~\ref{} and ABSGD~\ref{} are optimizing the same objective Equation~\ref{} but with constant $\lambda$ and have been shown effective for addressing the large-scale data imbalance problem. 
%In addition, we also verify the robustness of SCCMA for the constraint parameter $\rho$.
\noindent\textbf{Datasets.}
We conduct experiments on four imbalanced datasets, namely CIFAR10-ST, CIFAR100-ST \citep{qi2020simple}, ImageNet-LT~\citep{liu2019large}, and iNaturalist2018~\citep{inaturalist18}. 
The original CIFAR10, CIFAR100 are balanced data, where CIFAR10 (resp. CIFAR100) has 10 (resp. 100) classes and each class has 5K (resp. 500) training images. For constructing CIFAR10-ST and CIFAR100-ST, we artificially construct imbalanced training data, where we only keep the last 100 images of each class for the first half classes, and keep other classes and the test data unchanged. ImageNet-LT is a long-tailed subset of the original ImageNet-2012 by sampling a subset following the Pareto distribution with the power value 6. It has 115.8K images from 1000 categories, which include 4980 for head class and 5 images for tail class. iNaturalist 2018 is a real-world dataset whose class-frequency follows a heavy-tail distribution. It contains 437K images from 8142 classes.

\noindent
\textbf{Models.}
For a non-convex setting (deep model), we learn ResNet20 for CIFAR10-ST, CIFAR100-ST, and ResNet50 for ImageNet-LT and iNaturalist2018, respectively.
On CIFAR10-ST, CIFAR100-ST, we optimize the network from scratch by different algorithms. For the large-scale ImageNet-LT and iNaturalist2018 datasets, we optimize the last block of the feature layers and the classifier weight with other layers frozen of a pretrained ResNet50 model. This is a common training strategy in the literature~\citep{kang2019decoupling, qi2020attentional}. For a convex setting (linear model), we freeze the feature layers of the pretrained models, and only fine-tune the last classifier weight.  The pretrained models for ImageNet-LT, CIFAR10-ST, CIFAR100-ST are trained from scratch by optimizing the standard cross-entropy (CE) loss using SGD with momentum 0.9 for 90 epochs. The pretrained ResNet50 model for iNaturalist2018 is from the released model by~\citet{kang2019decoupling}.  

\begin{figure*}[t]
\centering
    \includegraphics[width =0.23\linewidth]{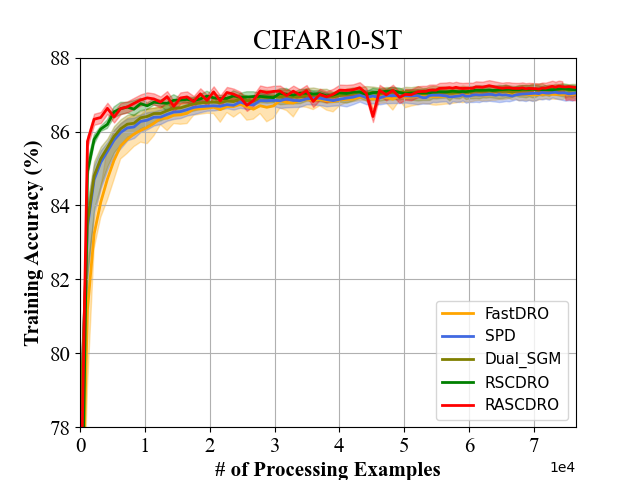}  \hspace*{-0.1in}  \
    \includegraphics[width =0.23\linewidth]{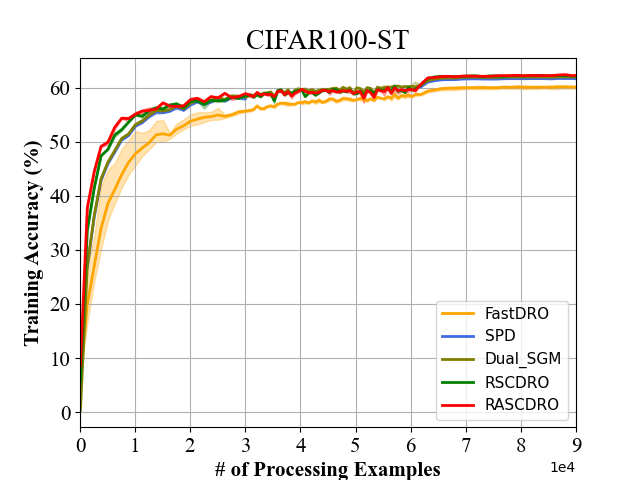} \hspace*{-0.1in}   \
    \includegraphics[width =0.23\linewidth]{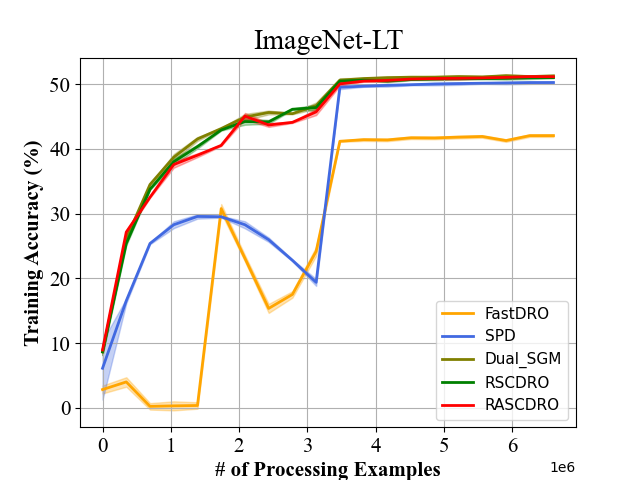}  \hspace*{-0.1in}   \
    \includegraphics[width =0.23\linewidth]{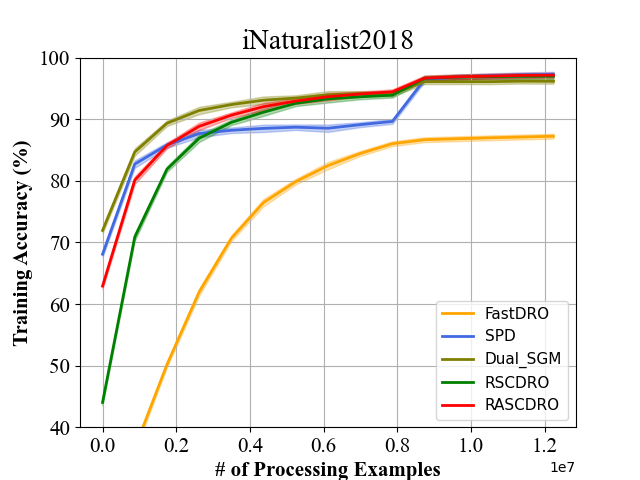}\hspace*{-0.1in}   \
    
   \includegraphics[width =0.23\linewidth]{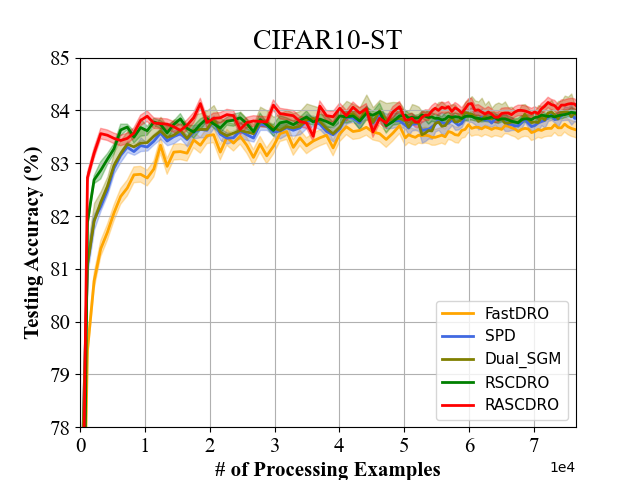}  \hspace*{-0.1in}  \ 
   \includegraphics[width =0.23\linewidth]{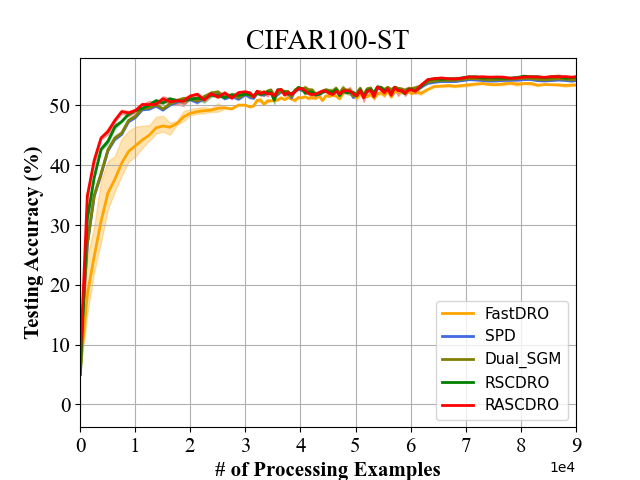} \hspace*{-0.1in}  \ 
    \includegraphics[width =0.23\linewidth]{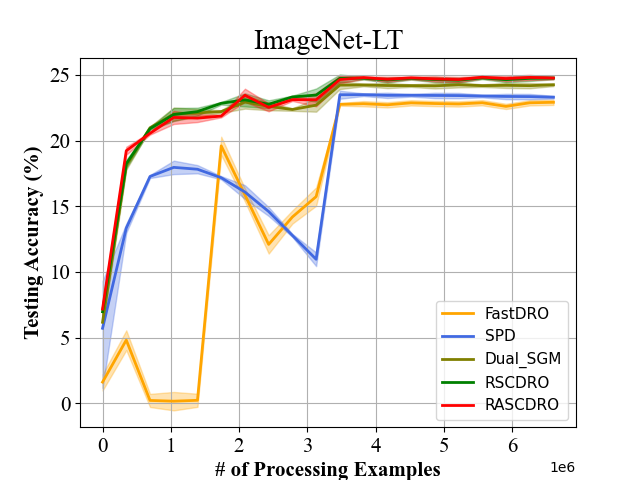}  \hspace*{-0.1in}   \
   \includegraphics[width =0.23\linewidth]{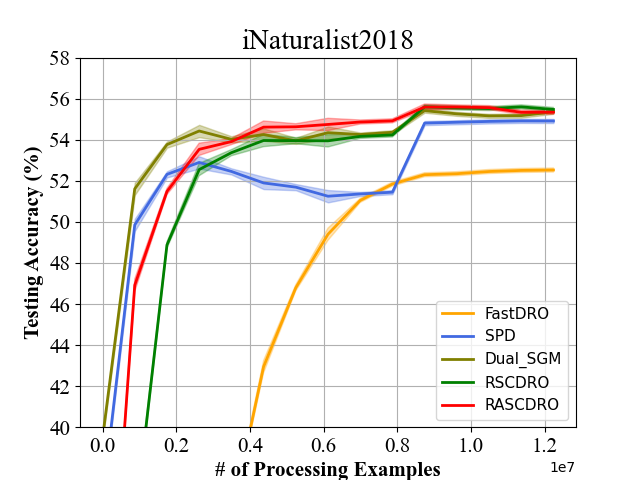} \hspace*{-0.1in}  \
    \caption{Training accuracy (\%) , Testing accuracy (\%)   vs $\#$ of processed training samples for the convex setting. $\rho$ is fixed to 0.5 on CIFAR10-ST and CIFAR100-ST, and 0.1 on ImageNet-LT and iNaturalist2018. The results are averaged over 5 independent runs.} 
    \label{fig:convex-figures}
     %\vspace{-0.05in}
%\end{figure*}
%\begin{figure*}[t]
\centering
  \includegraphics[width =0.23\linewidth]{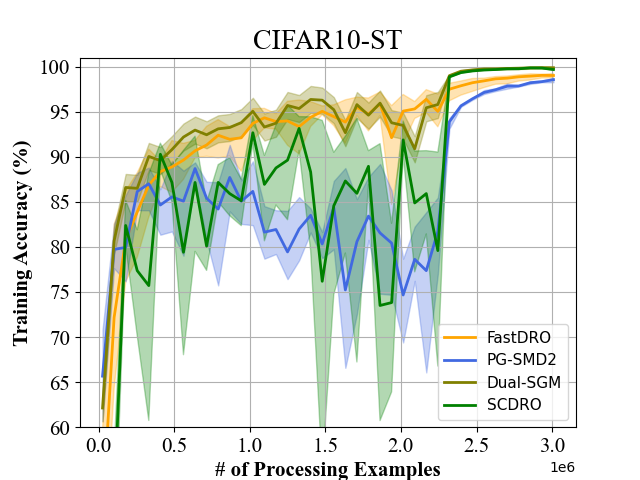}  \hspace*{-0.1in}  \
    \includegraphics[width =0.23\linewidth]{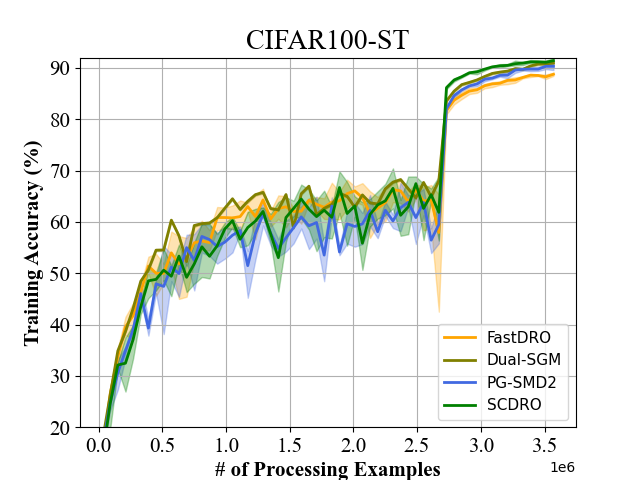}  \hspace*{-0.1in}  \
    \includegraphics[width =0.23\linewidth]{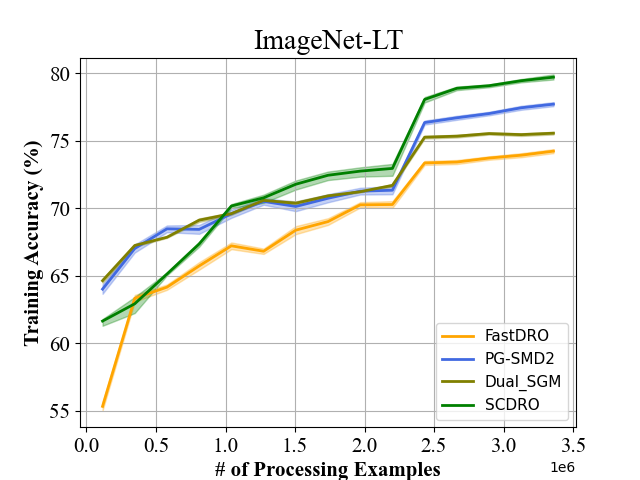}  \hspace*{-0.1in}  \
    \includegraphics[width =0.23\linewidth]{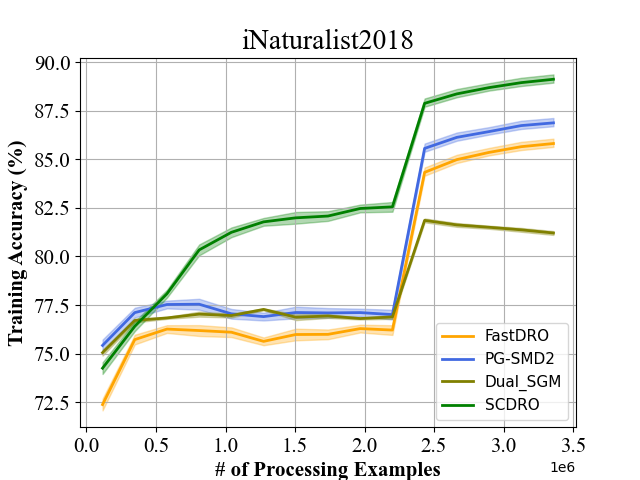}  \hspace*{-0.1in}  \
    
 \includegraphics[width =0.23\linewidth]{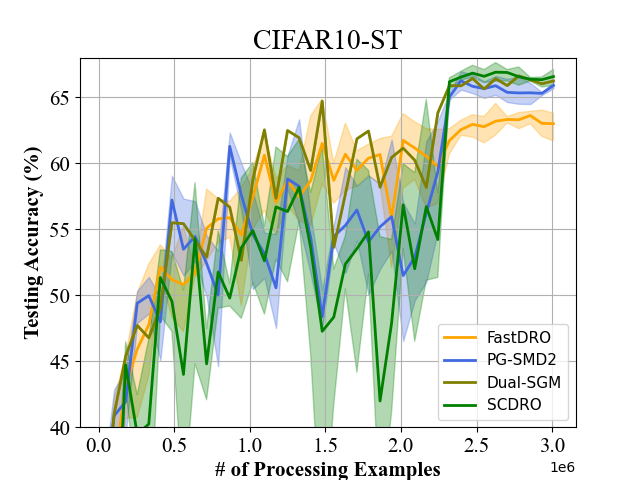}  \hspace*{-0.1in}  \
  \includegraphics[width =0.23\linewidth]{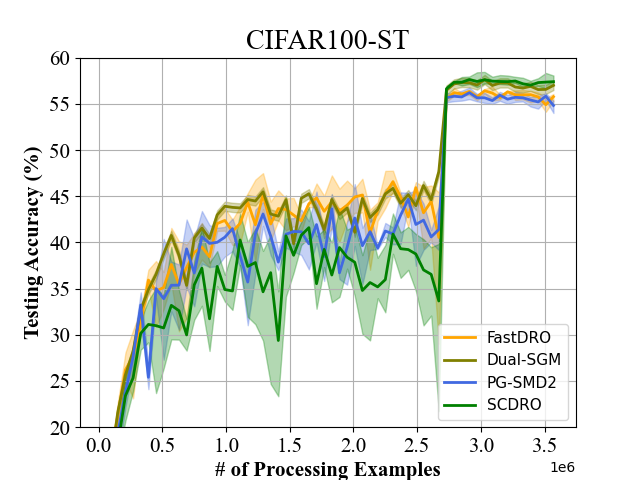}  \hspace*{-0.1in}  \
  \includegraphics[width =0.23\linewidth]{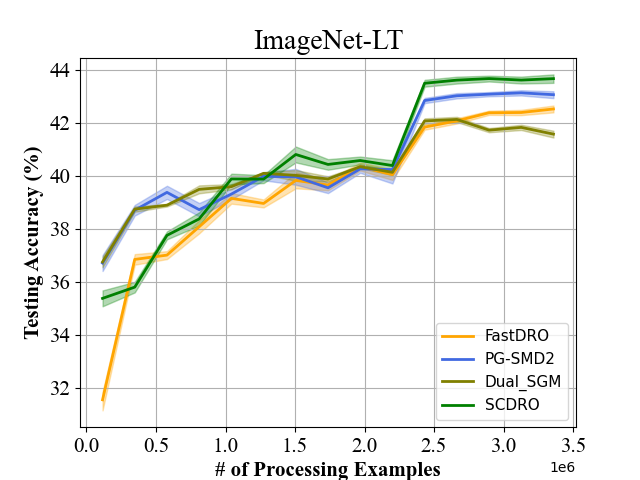}  \hspace*{-0.1in}  \
   \includegraphics[width =0.23\linewidth]{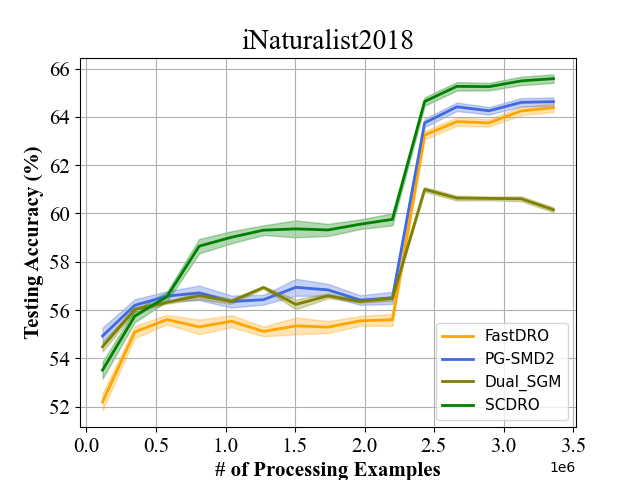}  \hspace*{-0.1in}  \

    \caption{Training accuracy (\%), Testing accuracy (\%)   vs $\#$ of processed training samples for the non-convex setting. $\rho$ is fixed to 0.5 on all datasets. The results are averaged over 5 independent runs. } 
    \label{fig:non-convex-figures}
    %\vspace{0.15in}
\end{figure*}
\noindent
\textbf{Parameters and Settings.}
For all experiments, the batch size is 128 for CIFAR10-ST and CIFAR100-ST, and 512 for ImageNet-LT and iNaturalist2018. The loss function is the CE loss. The $\lambda_0$ is set to $1e$-$3$. The (primal) learning rates for all methods are tuned in $\{0.01, 0.05, 0.1, 0.5, 1 \}$. The learning rate for updating the dual variable in PG$\_$SMD2 and SPD is tuned in $\{1e$-$5, 5e$-$5, 1e$-$4, 5e$-$4)\}$. The momentum parameter $\beta$ in our proposed algorithms and RECOVER are tuned $ \{0.1:0.1:0.9\}$. For RECOVER, the hyper-parameter $\lambda$ is tuned in $\{1, 50, 100\}$. The constrained parameter $\rho$ is tuned in $\{0.1,  0.5,  1\}$ for the comparison of generalization performance unless specified otherwise. The initial $\lambda$ and Larange multiplier in Dual SGM are both tuned in $\{0.1, 1, 10\}$. All our results are conducted on Tesla V100.

\noindent
\textbf{Convergence comparison between different baselines.}
In the convex setting,  we compare RSCDRO and RASCDRO with SPD, FastDRO and Dual SGM baselines. We report the training accuracy and testing accuracy in terms of the number ($\#$) of processing samples. We denote 1 pass of training data by 1 epoch. We run a total of 3 epochs for CIFAR10-ST and CIFAR100-ST and decay the learning rate by a factor of 10 at the end of 2nd epoch. Similarly, we run 60 epochs and decay the learning rate at the 30th epochs for the ImageNet-LT, and run 30 epochs and decay the learning rate at the 20th epoch for iNaturalist2018.  
In the nonconvex setting, we compare SCDRO with two baselines, PG-SMD2 and FastDRO. We run 120 epochs for CIFAR10-ST and CIFAR100-ST, and decay the learning rate by a factor of 10 at the 90th epoch. And we run 30 epochs for ImageNet-LT and iNaturalist2018, and decay the learning rate at the 20th epoch.

\noindent
\textbf{Results.} We first report the results for convex setting in Figures~\ref{fig:convex-figures}. It is obvious to see that RSCDRO and RASCDRO are consistently better than baselines on CIFAR10-ST, CIFAR100-ST, and ImageNet-LT. PD-SMD2 and Dual SGM have comparable results with our proposed algorithms on the iNaturalist2018 in terms of training accuracy, but is worse in terms of testing accuracy.
FastDRO has the worst performance on all the datasets. RSCDRO and RASCDRO achieve comparable results on all datasets, however, the stochastic estimator in RASCDRO requires two gradient computations per iteration, which incurs more computational cost than RSCDRO. Hence, in the non-convex setting, we focus on SCDRO. Figure~\ref{fig:non-convex-figures} reports the results for non-convex setting. We can see that SCDRO achieves the best performance on all the datasets. The margin increases on the large scale ImageNet-LT and iNaturalist2018 datasets. 
 For the three baselines, Dual SGM has better testing performance than FastDRO and PD-SGM2 on CIFAR10-ST and CIFAR100-ST. On the large scale data ImageNet-LT and iNaturalist2018, however, Dual SGM has the worst performance in terms of the testing accuracy. Furthermore, SCDRO is more stable than FastDRO and Dual SGM in different settings as the training of Dual SGM and FastDRO is comparable to SCDRO in convex settings and much worse than SCDRO in non-convex settings.

\noindent
{\bf Comparison with ERM and KL-regularized DRO.} Next, we compare our method for solving KL-constrained DRO (KL-CDRO) with 1) ERM+SGDM, and KL-regularized DRO (KL-RDRO) optimized by RECOVER, ABSGD in the non-convex setting 2) CVaR-constrained DRO, $\chi^2$-regularized DRO $\chi^2$-constrained DRO optimized by FastDRO in the convex setting. We conduct the experiments on the large-scale ImageNet-LT and iNaturalist2018 datasets. The results shown in Table~\ref{tab:convex-general} and~\ref{tab:non-convex-general} vividly demonstrate that our method for constrained DRO outperforms the ERM-based method and other popular $f$-divergence constrained/regularized DRO in different settings.

\begin{table}[t]
\centering
\caption{Testing Accuracy in Convex Setting } 
\label{tab:convex-general}
% \resizebox{0.9\textwidth}{!}{%
\begin{tabular}{c|c|c}
\toprule
        & ImageNet-LT & iNaturalist2018 \\ \midrule
%pretrained  & 41.84 &\\ \midrule
KL-Constraint + SCDRO     &  {\bf 24.08} ($\pm$ 0.01)           &     {\bf 55.63}  ($\pm$ 0.03)      \\  \midrule
CVaR-Constraint + FastDRO    &  17.23   ($\pm$ 0.03)      & 54.52     ($\pm$ 0.11)           \\  \midrule
$\chi^2$-Regularization  + FastDRO &        23.98 ($\pm$ 0.01)  &     55.03 ($\pm$ 0.03)   \\         \midrule  
$\chi^2$-Constraint + FastDRO  &    23.61 ($\pm$ 0.01)        & 53.71 ($\pm$ 0.05)    \\ \bottomrule        
\end{tabular}
%\end{table}
%\begin{table}[htbp]  
\centering
\caption{Testing Accuracy in Non-Convex Setting } 
\label{tab:non-convex-general}
% \resizebox{0.9\textwidth}{!}{
\begin{tabular}{c|c|c}
\toprule
        & ImageNet-LT & iNaturalist2018 \\ \midrule
        KL-Constraint + SCDRO   &    \textbf{43.74}        & \textbf{65.59} \\         \midrule  
%pretrained  & 41.84 &\\ \midrule
ERM+SGDM     &  43.36           & 64.42                \\  \midrule
KL-Regularization + RECOVER &        42.68    &     64.57  \\ \midrule
KL-Regularization + ABSGD &        43.44    &     65.01 \\
\bottomrule
\end{tabular}
%}
%\end{table}
\end{table}

\noindent{\bf Sensitivity to $\rho$.} We study the sensitivity of different methods to $\rho$. The results on CIFAR10-ST and CIFAR100-ST are shown in Table~\ref{tab:rho-sensitivity} in the supplement, which demonstrates that the testing performance is sensitive to $\rho$. However, our method SCDRO is better than baselines PG-SMD2 and FastDRO for different values of $\rho$.

\noindent
\begin{table}[ht]
\centering
\caption{Test accuracy (\%) of different methods for different constraint parameter $\rho$ in the non-convex setting. The results are averaged over 5 independent runs.}

\resizebox{1\textwidth}{!}{%
\begin{tabular}{c|c|ccccc}\toprule
                          &    $\rho$     & 0.01 & 0.05 & 0.1 & 0.5 & 1 \\ \hline
\multirow{3}{*}{CIFAR10-ST}  & PG-SMD2   &   67.09 ($\pm$ 0.59)     & 66.96 ($\pm$ 0.71)     & 67.12 ($\pm$ 0.61)    &  67.36 ($\pm$ 0.36)   &  67.10 ($\pm$ 0.61) \\
                          & FastDRO &   65.41 ($\pm$ 0.33)   &   66.15 ($\pm$ 0.09)   &   66.24 ($\pm$ 0.63)  &   65.98 ($\pm$ 0.45)  & 65.68 ($\pm$ 0.52)  \\
                          & SCDRO   &  \textbf{67.73} ($\pm$ 0.39)    &   \textbf{67.58} ($\pm$ 0.48)   &  \textbf{67.71} ($\pm$ 0.43)  &   \textbf{67.57} ($\pm$ 0.28)  &  \textbf{67.96} ($\pm$ 0.50) \\ \midrule
\multirow{3}{*}{CIFAR100-ST} & PG-SMD2   & 57.31 ($\pm$ 0.09)     &  56.44 ($\pm$ 0.17)    &   55.85 ($\pm$ 0.19)  & 52.68 ($\pm$ 0.40)    & 48.72 ($\pm$ 0.25)  \\
                          & FastDRO &  57.60 ($\pm$ 0.32)    &  57.20 ($\pm$ 0.42)    &  56.78 ($\pm$ 0.40)  &  55.58 ($\pm$ 0.62)   &52.39 ($\pm$ 0.31) \\
                          & SCDRO   &  \textbf{57.84} ($\pm$ 0.15)   &   \textbf{57.60} ($\pm$ 0.15)   &  \textbf{58.32} ($\pm$ 0.43)  &   \textbf{57.90} ($\pm$ 0.26)  & \textbf{57.71} ($\pm$ 0.24) \\
                          \bottomrule
\end{tabular}%
}
\label{tab:rho-sensitivity}
\end{table}

\section{Conclusions}
In this paper, we proposed dual-free stochastic algorithms for solving KL-constrained distributionally robust optimization problems for both convex and non-convex losses. The proposed algorithms have nearly optimal complexity in both settings. Empirical studies vividly demonstrate the effectiveness of the proposed algorithm for solving non-convex and convex constrained DRO problems.

\subsubsection*{Acknowledgments}
Q. Qi and T. Yang are partially supported by NSF Career Award \#1844403, NSF Grant \#2110545, and NSF-Amazon Joint Grant \#2147253.

% Please add the following required packages to your document preamble:
% \usepackage{graphicx}

% \bibliography{cdro.bib}
% \bibliographystyle{icml_2022}
% 

%\bibliographystyle{}
\bibliographystyle{tmlr}
\bibliography{cdro.bib}
\newpage
\appendix
\onecolumn

\section{Preliminary Lemmas}
\begin{lemma}
\label{lem:smooth_f_lambda}
For $q \geq 1$, $f_\lambda (q) = \lambda \log(q) +\lambda\rho$ is $L_{f_\lambda}$-Lipschitz continuous and $L_{\nabla f_\lambda}$-smooth, where $L_{\nabla f_\lambda} = L_{f_\lambda} = \lambda$.
\end{lemma}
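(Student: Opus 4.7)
The plan is to prove both bounds directly from the analytic form of $f_\lambda$, since $f_\lambda$ is a smooth one-dimensional function on the scalar domain $q \geq 1$. First I would compute the first two derivatives explicitly: $\nabla f_\lambda(q) = \lambda/q$ and $\nabla^2 f_\lambda(q) = -\lambda/q^2$. These are the only analytic facts needed; everything else reduces to bounding these quantities on $[1,\infty)$.

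For the Lipschitz continuity claim, I would observe that for any $q \geq 1$,
\[
|\nabla f_\lambda(q)| \;=\; \frac{\lambda}{q} \;\leq\; \lambda,
\]
and then apply the mean value theorem to conclude $|f_\lambda(q_1) - f_\lambda(q_2)| \leq \lambda |q_1 - q_2|$ for all $q_1, q_2 \geq 1$. This gives $L_{f_\lambda} = \lambda$. For the smoothness claim, I would repeat the same reasoning one level up: for $q \geq 1$,
\[
|\nabla^2 f_\lambda(q)| \;=\; \frac{\lambda}{q^2} \;\leq\; \lambda,
\]
and again invoke the mean value theorem on $\nabla f_\lambda$ to conclude $|\nabla f_\lambda(q_1) - \nabla f_\lambda(q_2)| \leq \lambda |q_1 - q_2|$, i.e.\ $L_{\nabla f_\lambda} = \lambda$.

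There is no real obstacle here, because the domain restriction $q \geq 1$ is exactly what keeps both $1/q$ and $1/q^2$ uniformly bounded by $1$; the restriction is used in precisely one inequality in each part. The only subtlety worth flagging is the role of this restriction in the surrounding analysis: in the paper's usage $q$ will be instantiated at quantities like $g(\x) = \E_i[\exp(\ell_i(\w)/\lambda)]$, and since $\ell_i(\w) \geq 0$ (or more generally since the expectation of an exponential of a nonnegative quantity is $\geq 1$), the hypothesis $q \geq 1$ will indeed hold, justifying the later use of this lemma when establishing the smoothness of the outer function in the compositional formulation.
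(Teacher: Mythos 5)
Your proof is correct and takes essentially the same route as the paper: bound $|\nabla f_\lambda(q)| = \lambda/q \leq \lambda$ for Lipschitz continuity, and use $q \geq 1$ to control the variation of $\nabla f_\lambda$ for smoothness (the paper bounds the difference $\lambda/q_1 - \lambda/q_2 = \lambda(q_2-q_1)/(q_1 q_2)$ directly rather than via the second derivative and the mean value theorem, but this is the same one-line argument in slightly different clothing). Your remark about why $q \geq 1$ holds in the paper's usage matches the remark following the lemma in the paper.
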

\textbf{Remark: } $g_i(\w,\lambda) = \exp(\frac{\ell_i(\w)}{\lambda}) \geq 1$ as $\lambda \geq \lambda_0 \in \R^+ $ and $\ell_i(\w) \geq 0$ in problem~(\ref{eqn:prob1}). Thus $ g(\x)=\frac{1}{n}\sum_{i=1}^n  g_i(\w,\lambda)\geq 1$. Then by this lemma we have $ \|\nabla f_\lambda (g(\x))\|\leq \lambda$ and $\|\nabla f_{\lambda}(g(\x_1)) - \nabla f_{\lambda}(g(\x_2))\|  \leq \lambda \|g(\x_1) -g(\x_2)\|$ for $\x, \x_1, \x_2 \in \X$.
\begin{proof}
For any $q\geq1$, we have
    $$\nabla f_\lambda (q) = \frac{\lambda}{q}\leq \lambda$$
And for any $q_1,q_2\geq1$, we have
    $$\|\nabla f_{\lambda}(q_1) - \nabla f_{\lambda}(q_2)\| \leq \left\|\frac{\lambda}{q_1} - \frac{\lambda}{q_2}\right\| \leq \left\|\frac{(q_1-q_2)\lambda}{q_1q_2}\right\| \leq \lambda \|q_1 -q_2\|$$
This complete the proof.

\end{proof}

\begin{lemma}
\label{lem:g_pty}
Let $L_A =  \exp(\frac{C}{\lambda_0})(\frac{G^2}{\lambda_0^2} + \frac{L}{\lambda_0} )$, $L_B= \exp(\frac{C}{\lambda_0})(\frac{CG}{\lambda_0^3} + \frac{G}{\lambda^2_0} )$, $L_C=  \exp(\frac{C}{\lambda_0})(\frac{CG+\lambda_0G}{\lambda_0^3})$ and $L_D= \exp(\frac{C}{\lambda_0})(\frac{C^2+2\lambda_0C}{\lambda_0^4})$. $g_i(\w,\lambda)$ is $L_{g}$-Lipschtz continuous and $L_{\nabla g}$-smooth in terms of $(\w,\lambda)$, where $L_{g} = \exp(\frac{C}{\lambda_0})(\frac{G}{\lambda_0} + \frac{C}{\lambda_0^2})$ and $L_{\nabla_g} =\sqrt{L_A^2+L_B^2 +L_C^2+L_D^2}$, 
\end{lemma}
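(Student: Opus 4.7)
The plan is to verify the Lipschitz and smoothness constants by direct computation of the first and second partial derivatives of $g_i(\w,\lambda)=\exp(\ell_i(\w)/\lambda)$, then bound each block of the gradient and Hessian using Assumption~\ref{ass:1}(b)--(c) together with the fact that $\lambda\in[\lambda_0,\tilde\lambda]$ (so $\lambda\geq\lambda_0>0$ and $|\ell_i(\w)/\lambda|\leq C/\lambda_0$). Since we only need upper bounds, throughout I would replace every occurrence of $\exp(\ell_i(\w)/\lambda)$ with $\exp(C/\lambda_0)$, every $|\ell_i(\w)|$ with $C$, every $\|\nabla\ell_i\|$ with $G$, and every $\|\nabla^2\ell_i\|$ (or the Lipschitz slope of $\nabla\ell_i$) with $L$.

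For the Lipschitz constant, I would first compute
\[
\nabla_\w g_i(\w,\lambda)=\frac{\exp(\ell_i(\w)/\lambda)}{\lambda}\nabla\ell_i(\w),\qquad \partial_\lambda g_i(\w,\lambda)=-\frac{\ell_i(\w)}{\lambda^2}\exp(\ell_i(\w)/\lambda),
\]
so that $\|\nabla_\w g_i\|\leq \exp(C/\lambda_0)\,G/\lambda_0$ and $|\partial_\lambda g_i|\leq \exp(C/\lambda_0)\,C/\lambda_0^2$. Using $\|(\a,b)\|\leq \|\a\|+|b|$ on the full gradient $\nabla g_i=(\nabla_\w g_i^\top,\partial_\lambda g_i)^\top$ gives exactly $L_g=\exp(C/\lambda_0)(G/\lambda_0+C/\lambda_0^2)$ as claimed.

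For smoothness I would compute the four Hessian blocks. Product/chain rule yields
\[
\partial_\w^2 g_i=\frac{\exp(\ell_i/\lambda)}{\lambda^2}\nabla\ell_i\nabla\ell_i^\top+\frac{\exp(\ell_i/\lambda)}{\lambda}\nabla^2\ell_i,
\]
\[
\partial_\w\partial_\lambda g_i=-\frac{\exp(\ell_i/\lambda)}{\lambda^2}\nabla\ell_i-\frac{\ell_i\exp(\ell_i/\lambda)}{\lambda^3}\nabla\ell_i,
\]
\[
\partial_\lambda^2 g_i=\frac{\ell_i^2\exp(\ell_i/\lambda)}{\lambda^4}+\frac{2\ell_i\exp(\ell_i/\lambda)}{\lambda^3}.
\]
Substituting the uniform bounds gives $\|\partial_\w^2 g_i\|\leq L_A$, $\|\partial_\w\partial_\lambda g_i\|\leq L_B$ (and by Schwarz's theorem the transposed block $\partial_\lambda\partial_\w g_i$ obeys the same bound $L_C=L_B$), and $|\partial_\lambda^2 g_i|\leq L_D$. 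The spectral norm of the full Hessian is then bounded by its Frobenius norm, which in turn is bounded by the Frobenius norm of the $2\times 2$ matrix of block-norms, namely $\sqrt{L_A^2+L_B^2+L_C^2+L_D^2}=L_{\nabla g}$. Since this bounds $\|\nabla^2 g_i(\x)\|$ uniformly over $\x\in\X$, the mean value inequality delivers $L_{\nabla g}$-smoothness.

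The proof is mechanical rather than subtle; the only mild obstacle is bookkeeping, in particular keeping track of the signs and of which factor of $1/\lambda$ comes from differentiating $\exp(\ell_i/\lambda)$ versus the explicit $1/\lambda$ prefactors, and then recognizing that $G/\lambda_0^2=\lambda_0 G/\lambda_0^3$ so the two off-diagonal bounds coincide ($L_B=L_C$). No other ingredients are needed beyond the assumed bounds on $\ell_i$ and the lower bound $\lambda\geq\lambda_0$ from Lemma~\ref{lem:lambda_upper}.
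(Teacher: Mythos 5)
Your proposal is correct and matches the paper's proof in substance: the paper computes the same gradient bound for $L_g$ and bounds exactly the same four partial derivatives of the two gradient components (yielding the same $L_A,L_B,L_C,L_D$), combining them through an add-and-subtract bound on the gradient increment, which is just the integrated form of your Hessian-block argument. One small caveat: your intermediate step ``spectral norm $\leq$ Frobenius norm of the Hessian $\leq \sqrt{L_A^2+L_B^2+L_C^2+L_D^2}$'' is not literally valid, since Assumption~\ref{ass:1}(c) controls only the spectral (not Frobenius) norm of $\nabla^2\ell_i$; instead invoke directly the standard fact that the spectral norm of a block matrix is at most the root-sum-of-squares of the spectral norms of its blocks (a short Cauchy--Schwarz computation), which yields the same conclusion.
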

\begin{proof}
The gradient of $g_i(\w,\lambda)$ is given as
\begin{align*}
    \nabla_{\w,\lambda} g_i(\w,\lambda)^\top &= (\nabla_{\w} g_i(\w,\lambda)^\top,\nabla_{\lambda} g_i(\w,\lambda)) \\&= \left(\exp\left(\frac{\ell_i(\w)}{\lambda}\right) \frac{\nabla_\w \ell_{i}(\w)}{\lambda}^\top, -\exp\left(\frac{\ell_i(\w)}{\lambda}\right)\frac{\ell_{i}(\w)}{\lambda^2}\right).
\end{align*}
Then by Assumption~\ref{ass:1}, we have 
\begin{align*}
    \|\nabla_{\w,\lambda} g_i(\w,\lambda)\| &\leq \exp\left(\frac{\ell_i(\w)}{\lambda}\right)\left(\left\| \frac{\nabla_\w \ell_{i}(\w)}{\lambda}\right\| + \frac{\ell_i(\w)}{\lambda^2}\right) \\&\overset{\lambda\geq \lambda_0}{\leq}\exp\left(\frac{C}{\lambda_0}\right)\left(\frac{G}{\lambda_0} + \frac{C}{\lambda_0^2}\right).
\end{align*}
Thus, $L_g =\exp\left(\frac{C}{\lambda_0}\right)\left(\frac{G}{\lambda_0} + \frac{C}{\lambda_0^2}\right) $. 

\noindent
For for all $(\w,\lambda), (\w',\lambda') \in \X$, we have
\begin{align*}
   & \left\|\nabla_{\w,\lambda} g_i(\w,\lambda) - \nabla_{\w,\lambda} g_i(\w',\lambda')\right\|^2 \\
    &\quad\leq  \left\|\exp\left(\frac{\ell_i(\w)}{\lambda}\right) \frac{\nabla_\w \ell_{i}(\w)}{\lambda} +\exp\left(\frac{\ell_i(\w')}{\lambda'}\right) \frac{\nabla_{\w} \ell_{i}(\w')}{\lambda'}\right\|^2\\\nonumber&\quad\quad+ 
    \left\|\exp\left(\frac{\ell_i(\w)}{\lambda}\right)\frac{\ell_{i}(\w)}{\lambda^2} -\exp\left(\frac{\ell_i(\w')}{\lambda'}\right)\frac{\ell_{i}(\w')}{\lambda'^2}\right\|^2 
    %\\&\quad=\left\|\exp\left(\frac{\ell_i(\w)}{\lambda}\right) \frac{\nabla_\w \ell_{i}(\w)}{\lambda} - \exp\left(\frac{\ell_i(\w')}{\lambda}\right) \frac{\nabla_{\w} \ell_{i}(\w')}{\lambda}  + \exp\left(\frac{\ell_i(\w')}{\lambda}\right) \frac{\nabla_{\w} \ell_{i}(\w')}{\lambda}  - \exp\left(\frac{\ell_i(\w')}{\lambda'}\right) \frac{\nabla_{\w} \ell_{i}(\w')}{\lambda'}\right\|^2 \\
    % \\&\quad\quad+  \left\|\exp\left(\frac{\ell_i(\w)}{\lambda}\right)\frac{\ell_{i}(\w)}{\lambda^2} - \exp\left(\frac{\ell_i(\w')}{\lambda}\right)\frac{\ell_{i}(\w')}{\lambda^2} + \exp\left(\frac{\ell_i(\w')}{\lambda}\right)\frac{\ell_{i}(\w'
    % )}{\lambda^2} - \exp\left(\frac{\ell_i(\w')}{\lambda'}\right)\frac{\ell_{i}(\w')}{\lambda'^2}\right\|^2
    \\
    &\quad\leq  \left\|\exp\left(\frac{\ell_i(\w)}{\lambda}\right) \frac{\nabla_\w \ell_{i}(\w)}{\lambda} - \exp\left(\frac{\ell_i(\w')}{\lambda}\right) \frac{\nabla_{\w} \ell_{i}(\w')}{\lambda} \right\|^2 \\\nonumber&\quad\quad+ \left\|\exp\left(\frac{\ell_i(\w')}{\lambda}\right) \frac{\nabla_{\w} \ell_{i}(\w')}{\lambda}  - \exp\left(\frac{\ell_i(\w')}{\lambda'}\right) \frac{\nabla_{\w} \ell_{i}(\w')}{\lambda'}\right\|^2 
    \\
    &\quad\quad+\left\|\exp\left(\frac{\ell_i(\w)}{\lambda}\right)\frac{\ell_{i}(\w)}{\lambda^2} - \exp\left(\frac{\ell_i(\w')}{\lambda}\right)\frac{\ell_{i}(\w')}{\lambda^2} \right\|^2\\\nonumber&\quad\quad+\left\| \exp\left(\frac{\ell_i(\w')}{\lambda}\right)\frac{\ell_{i}(\w')}{\lambda^2} - \exp\left(\frac{\ell_i(\w')}{\lambda'}\right)\frac{\ell_{i}(\w')}{\lambda'^2}\right\|^2.
 \end{align*}
To bound the first term, we first check the Lipschitz continuous of $\exp(\frac{\ell_i(\w)}{\lambda}) \frac{\nabla_\w \ell_{i}(\w)}{\lambda}$ with respect to $\w$,
 \begin{align*}
     &\left\|\frac{\nabla\left(\exp\left(\frac{\ell_i(\w)}{\lambda}\right) \frac{\nabla_\w \ell_{i}(\w)}{\lambda}\right)}{\nabla \w}\right\|  \\\nonumber&\leq \left\|\exp\left(\frac{\ell_i(\w)}{\lambda}\right)\left( \frac{\nabla_\w \ell_{i}(\w)}{\lambda}\right)\left( \frac{\nabla_\w \ell_{i}(\w)}{\lambda}\right)^{\top}\right\| +\left\|\exp\left(\frac{\ell_i(\w)}{\lambda}\right)\frac{\nabla^2_\w \ell_{i}(\w)}{\lambda}\right\|  \\&\overset{(a)}{=} \left\|\exp\left(\frac{\ell_i(\w)}{\lambda}\right)\left( \frac{\nabla_\w \ell_{i}(\w)}{\lambda}\right)^{\top}\left( \frac{\nabla_\w \ell_{i}(\w)}{\lambda}\right)\right\| +\left\|\exp\left(\frac{\ell_i(\w)}{\lambda}\right)\frac{\nabla^2_\w \ell_{i}(\w)}{\lambda}\right\| \\&\overset{(b)}{\leq} \exp\left(\frac{\ell_i(\w)}{\lambda}\right)\left\|\left( \frac{\nabla_\w \ell_{i}(\w)}{\lambda}\right)\right\|^2 +\left\|\exp\left(\frac{\ell_i(\w)}{\lambda}\right)\frac{\nabla^2_\w \ell_{i}(\w)}{\lambda}\right\|
    \\&\leq \exp\left(\frac{C}{\lambda_0}\right)\left(\frac{G^2}{\lambda_0^2} + \frac{L}{\lambda_0} \right):= L_A. 
 \end{align*}
 where equality (a) is due to the property of the norm of rank-one symmetric matrix and inequality (b) is due to Cauchy-Schwarz inequality.
 
 Therefore, we have
 $$\left\|\exp(\frac{\ell_i(\w)}{\lambda}) \frac{\nabla_\w \ell_{i}(\w)}{\lambda} - \exp(\frac{\ell_i(\w')}{\lambda}) \frac{\nabla_{\w} \ell_{i}(\w')}{\lambda} \right\|^2\leq L_A \left\|\w-\w'\right\|^2$$
 Furthermore, it holds that
  \begin{align*}
    \left\|\frac{\nabla\left(\exp\left(\frac{\ell_i(\w)}{\lambda}\right) \frac{\nabla_\w \ell_{i}(\w)}{\lambda}\right)}{\nabla \lambda}\right\|& = \left\| \exp\left(\frac{\ell_i(\w)}{\lambda}\right)\frac{\ell_i(\w)\nabla_\w \ell_i(\w) }{\lambda^3} +\exp\left(\frac{\ell_i(\w)}{\lambda}\right)\left(\frac{\nabla_{\w} \ell_i(\w)}{\lambda^2}\right) \right\| \\&\leq \exp\left(\frac{C}{\lambda_0}\right)\left(\frac{CG}{\lambda_0^3} + \frac{G}{\lambda^2_0} \right):= L_B \\
       \left\| \frac{\nabla\left(\exp\left(\frac{\ell_i(\w)}{\lambda}\right)\frac{\ell_i(\w) }{\lambda^2}\right)}{\nabla\w}\right\| &= \left\| \exp\left(\frac{\ell_i(\w)}{\lambda}\right)\frac{\ell_i(\w)\nabla_{\w} \ell_i(\w)}{\lambda^3} + \exp\left(\frac{\ell_i(\w)}{\lambda}\right)\frac{\nabla_{\w} \ell_i(\w)}{\lambda^2}\right\|\\&\leq \exp\left(\frac{C}{\lambda_0}\right)\left(\frac{CG+\lambda_0G}{\lambda_0^3}\right):= L_C\\
\left\| \frac{\nabla\left(\exp\left(\frac{\ell_i(\w)}{\lambda}\right)\frac{\ell_i(\w) }{\lambda^2}\right)}{\nabla\lambda}\right\| &= \left\| \exp\left(\frac{\ell_i(\w)}{\lambda}\right)\frac{ \ell^2_i(\w)}{\lambda^4} + \exp\left(\frac{\ell_i(\w)}{\lambda}\right)\frac{2 \ell_i(\w)}{\lambda^3}\right\|\\&\leq \exp\left(\frac{C}{\lambda_0}\right)\left(\frac{C^2+2\lambda_0C}{\lambda_0^4}\right) := L_D.
 \end{align*}
 As a result, we obtain
 \begin{align*}
  & \left\|\nabla_{\w,\lambda} g_i(\w,\lambda) - \nabla_{\w,\lambda} g_i(\w',\lambda')\right\|^2 \\&\leq L_A^2 \left\|\w-\w'\right\|^2 + L_B^2 \left\|\lambda - \lambda'\right\| ^2+ L_C^2\left\|\w-\w'\right\|^2 +L_D^2\left\|\lambda - \lambda'\right\|^2 \\
    &=   (L_A^2+L_C^2)\left\| \w-\w'\right\|^2 + (L_B^2+L_D^2)\left\|\lambda - \lambda'\right\|^2  \\
    &\leq (L_A^2+L_B^2 +L_C^2+L_D^2)\left\|(\w^\top,\lambda) - (\w'^\top,\lambda') \right\|^2.
 \end{align*}

 Thus $L_{\nabla_g} =\sqrt{L_A^2+L_B^2 +L_C^2+L_D^2}$.
 
\end{proof}

\begin{lemma}
\label{lem:F_pty}
% $\widetilde{\lambda}L_g^2+L_g^2+\widetilde{\lambda}L_{\nabla_g} + 1)$
 $F(\w,\lambda)$ is $L_F$-smooth, where $L_F=\tilde{\lambda}L_g^2+2L_g+\tilde{\lambda}L_{\nabla_g} + 1 +\tilde{\lambda}$.
\end{lemma}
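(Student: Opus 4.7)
The plan is to compute $\nabla F$ in closed form from $F(\x) = \lambda \log g(\x) + \lambda\rho$ and then bound $\|\nabla F(\x_1) - \nabla F(\x_2)\|$ by a sequence of add-and-subtract triangle inequalities, invoking Lemmas~\ref{lem:smooth_f_lambda} and~\ref{lem:g_pty} at each step together with the box constraint $\lambda\in[\lambda_0,\tilde\lambda]$ and the lower bound $g(\x)\ge 1$ (which holds because $\ell_i(\w)\ge 0$ and $\lambda>0$, hence each summand $\exp(\ell_i(\w)/\lambda)\ge 1$). The two gradient components are
\begin{align*}
\partial_{\w} F(\x) &= \nabla f_\lambda(g(\x))\,\nabla_\w g(\x), \\
\partial_\lambda F(\x) &= \log g(\x) + \nabla f_\lambda(g(\x))\,\nabla_\lambda g(\x) + \rho,
\end{align*}
and the target is to show each is Lipschitz in $\x=(\w^\top,\lambda)^\top$, whence the combined Lipschitz constant gives the claimed $L_F$.

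For $\partial_\w F$, I would use the standard product-rule splitting
\[
\partial_\w F(\x_1)-\partial_\w F(\x_2)=\nabla f_{\lambda_1}(g(\x_1))\bigl(\nabla_\w g(\x_1)-\nabla_\w g(\x_2)\bigr)+\bigl(\nabla f_{\lambda_1}(g(\x_1))-\nabla f_{\lambda_2}(g(\x_2))\bigr)\nabla_\w g(\x_2).
\]
The first piece is controlled by $\|\nabla f_{\lambda_1}(g(\x_1))\|\le \tilde\lambda$ (Lemma~\ref{lem:smooth_f_lambda}) and the smoothness $L_{\nabla g}$ of $g$ (Lemma~\ref{lem:g_pty}). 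For the second, I further decompose $\nabla f_{\lambda_1}(g(\x_1))-\nabla f_{\lambda_2}(g(\x_2))$ into $(\nabla f_{\lambda_1}(g(\x_1))-\nabla f_{\lambda_1}(g(\x_2)))+(\nabla f_{\lambda_1}(g(\x_2))-\nabla f_{\lambda_2}(g(\x_2)))$: the first handled by the $\tilde\lambda$-smoothness of $f_\lambda$ combined with the $L_g$-Lipschitz continuity of $g$, and the second is the elementary $|\lambda_1/g(\x_2)-\lambda_2/g(\x_2)|\le |\lambda_1-\lambda_2|$ using $g(\x_2)\ge 1$. The same template, applied to $\partial_\lambda F$, yields a bound on the compositional piece; the remaining $\log g(\x)$ term is handled via $|\log g(\x_1)-\log g(\x_2)|\le L_g\|\x_1-\x_2\|$, which follows from $\|\nabla \log g(\x)\|=\|\nabla g(\x)\|/g(\x)\le L_g$. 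Summing (or taking the Euclidean norm of) the $\w$- and $\lambda$-component bounds and collecting the constants gives the stated $L_F=\tilde\lambda L_g^2+2L_g+\tilde\lambda L_{\nabla g}+1+\tilde\lambda$.

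The argument has no single hard step, but the main subtlety is bookkeeping: the variable $\lambda$ enters in three distinct ways—as the prefactor in $f_\lambda$, inside the exponent defining $g(\x)$, and as a separate coordinate whose derivative appears in $\nabla_\lambda g$—so each "add-and-subtract" split must be done in the right order to avoid doubling up a factor of $\tilde\lambda L_g$ or losing track of which denominator is being bounded below by $1$. The essential reason the whole calculation closes with a finite constant is precisely the box constraint $\lambda\le\tilde\lambda$ guaranteed by Lemma~\ref{lem:lambda_upper}; without it, $\nabla f_\lambda(g)=\lambda/g$ would be unbounded and $f_\lambda$ would not be uniformly smooth on the range of $g$, so the compositional Lipschitz estimate would fail.
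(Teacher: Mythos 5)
Your proposal is correct and follows essentially the same route as the paper: both bound $\|\nabla F(\x_1)-\nabla F(\x_2)\|$ by add-and-subtract decompositions that isolate the change in $g$, the change in $\lambda$ inside $\nabla f_\lambda(\cdot)=\lambda/(\cdot)$ (using $g\ge 1$), and the change in $\nabla g$, invoking Lemmas~\ref{lem:smooth_f_lambda} and~\ref{lem:g_pty} together with the box constraint $\lambda\le\tilde\lambda$, and handle the extra $\log g(\x)$ term via its $L_g$-Lipschitzness. The only differences are cosmetic (the order of the splits and whether the log term is bounded through $|\log a-\log b|\le|a-b|$ for $a,b\ge 1$ or through $\|\nabla g\|/g\le L_g$), and your collected constants fit within the paper's stated $L_F$, which already contains slack.
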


\noindent
\textbf{Remark: }Lemma~\ref{lem:smooth_f_lambda}, \ref{lem:g_pty} and Lemma~\ref{lem:F_pty} imply that $L_{\nabla f_\lambda} =L_{ f_\lambda}\leq L_F, L_g\leq L_F $ and $L_F\geq 1$. 

\begin{proof}
For all $\x_1 = (\w_1^\top,\lambda_1)^\top, \x_2 = (\w_2^\top,\lambda_2)^\top \in \X$, and let
 $\d(\x) = (0,\cdots,0,\log(g(\x))+\rho)^\top \in\R^{d+1}$, by expansion we have
\begin{align*}
 &\|\nabla F(\x_1) -\nabla F(\x_2)\| \\&=  \|\nabla f_{\lambda_1}(g(\x_1))\nabla g(\x_1)+\d(\x_1)-\nabla f_{\lambda_2}(g(\x_2))\nabla g(\x_2) -\d(\x_2) \| \\
 &\leq \|\nabla f_{\lambda_1}(g(\x_1))\nabla g(\x_1)-\nabla f_{\lambda_2}(g(\x_2))\nabla g(\x_2) \| + |\log(g(\x_1)) - \log(g(\x_2))|\\
 &\leq  \|\nabla f_{\lambda_1}(g(\x_1))\nabla g(\x_1)-\nabla f_{\lambda_1}(g(\x_2))\nabla g(\x_1)\|+ \|\nabla f_{\lambda_1}(g(\x_2))\nabla g(\x_1)-\nabla f_{\lambda_2}(g(\x_2))\nabla g(\x_1)\|\\
 &\quad+\|\nabla f_{\lambda_2}(g(\x_2))\nabla g(\x_1)-\nabla f_{\lambda_2}(g(\x_2))\nabla g(\x_2) \| +|g(\x_1) - g(\x_2)|.
\end{align*}
Noting the Lipschtiz continuous of $g(x)$ and $\nabla g(x)$, we obtain
\begin{align*}
 &\|\nabla F(\x_1) -\nabla F(\x_2)\|\\&\leq (L_{\nabla_{f_{\lambda_1}}}L_g+1) | g(\x_1)-g(\x_2)| + \frac{\|\nabla g(\x_1)\|}{g(\x_2)}\| \lambda_1 - \lambda_2\| + L_{f_{\lambda_2}}\|\nabla g(\x_1) -\nabla g(\x_2)  \| \\
 &\overset{(a)}{\leq}  (L_{\nabla_{f_{\lambda_1}}}L^2_g+L_g)\|\x_1 - \x_2\| + \|\nabla g(\x_1)\|\|\lambda_1 - \lambda_2\| + L_{f_{\lambda_2}}L_{\nabla_g}\| \x_1 - \x_2\|  \\
 &\leq (L_{\nabla_{f_{\lambda_1}}}L^2_g + 2L_g+L_{f_{\lambda_2}}L_{\nabla_g})\|\x_1-\x_2\| \\
 &\overset{(b)}{\leq}(\tilde{\lambda}L_g^2+2L_g+\tilde{\lambda}L_{\nabla_g} + 1 +\tilde{\lambda})\| \x_1 - \x_2\|.
\end{align*}
where the inequality (a) is due to $g(\x_2)\geq 1$ and the inequality (b) is due to the upper bound of $\lambda$.
Thus, $L_F=\tilde{\lambda}L_g^2+2L_g+\tilde{\lambda}L_{\nabla_g} + 1 +\tilde{\lambda}$.
\end{proof}

\subsection{Proof of Lemma~\ref{lem:lambda_upper}}
\begin{proof}
Recall the primal problem:
\begin{align*}
p^*= \max_{\{\p\in\Delta_n, D(\p, 1/n)\leq \rho\}}\sum_{i=1}^n  p_i \ell_i(\w) +  \lambda_0 D(\p, 1/n).
 \end{align*}
 Invoking dual variable $\bar\lambda$, we obtain the dual problem:
 \begin{align}\label{le3:eq0}
q^*= \min_{\bar\lambda\geq 0}\max_{\p\in\Delta_n}\sum_{i=1}^n  p_i \ell_i(\w) - \bar\lambda (D(\p, 1/n) -\rho) - \lambda_0 D(\p, 1/n).
 \end{align}
 Set $\bar\p=(1/n,\dots,1/n)$, which is a Slater vector satisfying $D(\bar \p, 1/n) -\rho<0$.
Applying Lemma 3 in \citep{nedic2009subgradient}, we have 
  \begin{align*}
|\bar\lambda^*|\leq\frac{1}{\rho}\left(q^*-\sum_{i=1}^n  \bar p_i \ell_i(\w) -  \lambda_0 D(\bar\p, 1/n)\right).
 \end{align*}
 Since the primal problem is concave in term of $\p$ given $\w$, we have $p^*=q^*$. Therefore,
   \begin{align}\nonumber
|\bar\lambda^*|&\leq\frac{1}{\rho}\left(p^*-\sum_{i=1}^n  \bar p_i \ell_i(\w)\right)\\\nonumber&=\frac{1}{\rho}\left(\sum_{i=1}^n  \bar p^*_i \ell_i(\w) -  \lambda_0 D(\p^*, 1/n)-\sum_{i=1}^n \bar p_i \ell_i(\w)\right)\\ \label{le3:eq1}&\leq \frac{C}{\rho},
 \end{align}
where the last inequality is because $|\ell_i(\w)|\leq C$ for $\w \in \W$. 
Let $\lambda=\bar\lambda+\lambda_0$, we have $$q^*= \min_{\lambda\geq \lambda_0}\max_{\p\in\Delta_n}\sum_{i=1}^n p_i \ell_i(\w) - \lambda (D(\p, 1/n) -\rho) - \lambda_0 \rho.$$
Section~\ref{sec:derivation} will also show $$q^*= \min_{\lambda\geq \lambda_0}\lambda \log\left(\frac{1}{n}\sum_{i=1}^n  \exp\left(\frac{\ell_i(\w)}{\lambda}\right)\right) +\lambda (\rho-\rho_0).$$
By Eq.~(\ref{le3:eq1}), we have the optimal solution of above optimization problem 
$|\lambda^*|\leq|\bar\lambda^*|+\lambda_0\leq\lambda_0+\frac{C}{\rho}$, which complete the proof

 \end{proof}
 \section{Proofs in Section~\ref{sec:basic}}
 \subsection{Technical Lemmas}
 \begin{lemma}
\label{lem:recur-g}
Suppose Assumption~\ref{ass:2} holds and $i\sim \D$ and $s$ are initialized with $s_1 = \exp(\frac{\ell_i(\w_{1})}{\lambda_{1}})$. Then for every $ t\in\{1,\cdots T\}$ we have
\begin{align*}
    \E[\|g(\x_{t+1})-s_{t+1} \|^2]\leq \E\left[ (1-\beta)\|g(\x_t) -s_t \|^2 + \frac{2L_g^2\|\x_{t+1} - \x_t \|^2}{\beta} + \beta^2\sigma^2\right].
\end{align*}
Taking summation of $\E[\|g(\x_{t+1})-s_{t+1} \|^2]$ from $1$ to $T$, we have
\begin{equation}
\label{eqn:lemma9-2}
    \begin{aligned}
         \sum\limits_{t=1}^{T}\E[\|g(\x_{t})-s_{t} \|^2]\leq \E\left[ \frac{\|g(\x_1) -s_1 \|^2}{\beta} + \frac{2L_g^2}{\beta^2}\sum\limits_{t=1}^{T}\|\x_{t+1} - \x_t \|^2 + \beta T\sigma^2\right]. 
    \end{aligned}
\end{equation}
\end{lemma}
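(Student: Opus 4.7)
The plan is to exploit the moving-average recursion $s_{t+1} = (1-\beta)s_t + \beta g_i(\x_{t+1})$ directly, by rewriting the error $g(\x_{t+1}) - s_{t+1}$ as a sum of (i) the previous error, (ii) a drift coming from updating $\x$, and (iii) a zero-mean noise from the fresh sample. First I would algebraically split
\[
g(\x_{t+1}) - s_{t+1} = (1-\beta)\bigl(g(\x_t) - s_t\bigr) + (1-\beta)\bigl(g(\x_{t+1}) - g(\x_t)\bigr) + \beta\bigl(g(\x_{t+1}) - g_i(\x_{t+1})\bigr),
\]
which is obtained by adding and subtracting $(1-\beta)g(\x_{t+1})$ and $(1-\beta)g(\x_t)$.

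Next I would invoke the sampling order in Algorithm~\ref{alg:SCCMA}: the sample $\xi_i$ at iteration $t$ is drawn \emph{after} $\x_{t+1}$ is computed from $(\v_t,u_t)$, so $\x_{t+1}$ is measurable with respect to the filtration prior to $\xi_i$. Conditioning on that filtration, $\E[g_i(\x_{t+1})-g(\x_{t+1})]=0$ and, by Assumption~\ref{ass:2}, $\E[\|g_i(\x_{t+1})-g(\x_{t+1})\|^2]\leq \sigma_g^2\leq \sigma^2$. Consequently the cross-terms involving the noise drop out upon taking expectation, leaving
\[
\E[\|g(\x_{t+1}) - s_{t+1}\|^2] \leq (1-\beta)^2\,\E[\|(g(\x_t)-s_t) + (g(\x_{t+1})-g(\x_t))\|^2] + \beta^2 \sigma^2.
\]

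Then I would apply Young's inequality $\|a+b\|^2 \leq (1+\beta)\|a\|^2 + (1+1/\beta)\|b\|^2$ with $a = g(\x_t)-s_t$ and $b = g(\x_{t+1})-g(\x_t)$. Using $(1-\beta)^2(1+\beta)\leq 1-\beta$ and $(1-\beta)^2(1+1/\beta)\leq 2/\beta$, together with the $L_g$-Lipschitz continuity of $g$ from Lemma~\ref{lem:g_pty} so that $\|g(\x_{t+1})-g(\x_t)\|^2 \leq L_g^2\|\x_{t+1}-\x_t\|^2$, yields the claimed one-step recursion
\[
\E[\|g(\x_{t+1})-s_{t+1}\|^2] \leq \E\left[(1-\beta)\|g(\x_t)-s_t\|^2 + \tfrac{2L_g^2}{\beta}\|\x_{t+1}-\x_t\|^2 + \beta^2\sigma^2\right].
\]

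For the telescoped bound~(\ref{eqn:lemma9-2}), I would denote $a_t = \E[\|g(\x_t)-s_t\|^2]$ and sum the recursion $a_{t+1} \leq (1-\beta)a_t + \tfrac{2L_g^2}{\beta}\E\|\x_{t+1}-\x_t\|^2 + \beta^2\sigma^2$ from $t=1$ to $T$; shifting the index on the left side and dropping the non-negative $a_{T+1}$ rearranges the bound to $\beta\sum_{t=1}^T a_t \leq a_1 + \tfrac{2L_g^2}{\beta}\sum_{t=1}^T \E\|\x_{t+1}-\x_t\|^2 + T\beta^2\sigma^2$; dividing by $\beta$ completes the proof. The main subtlety is the independence step that kills the cross-term with the fresh sample, plus carefully tracking Young's constants to land on exactly the coefficient $2L_g^2/\beta$; the remaining manipulations are routine.
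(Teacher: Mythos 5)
Your proposal is correct and follows essentially the same route as the paper's proof: the same decomposition of $g(\x_{t+1})-s_{t+1}$ into the previous error, the drift $g(\x_{t+1})-g(\x_t)$, and the zero-mean noise $\beta(g(\x_{t+1})-g_i(\x_{t+1}))$, the same use of the unbiasedness to kill the cross term, and the same Young's inequality with the constants $(1-\beta)^2(1+\beta)\leq 1-\beta$ and $(1-\beta)^2(1+1/\beta)\leq 2/\beta$ together with the $L_g$-Lipschitzness of $g$. Your explicit treatment of the telescoping step and of the measurability of $\x_{t+1}$ with respect to the pre-sample filtration only spells out details the paper leaves implicit.
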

\begin{proof}
Note that $s_{t+1}=(1-\beta)s_t + \beta g_i(\x_{t+1})$ and $\E[g(\x_{t+1}) - g_i(\x_{t+1})]$=0, then by simple expansion we have
\begin{align}\nonumber
  & \E[\|g(\x_{t+1})-s_{t+1} \|^2] \\\nonumber
   &\quad= \E[\|\beta(g(\x_{t+1})-g_i(\x_{t+1}))+(1-\beta)(g(\x_{t+1})-s_t)\|^2] \\\nonumber
   &\quad= \E[\beta^2\|g(\x_{t+1}) - g_i{(\x_{t+1})}\|^2 + (1-\beta)^2\| g(\x_{t+1})-s_t\|^2 ] \\\nonumber
   &\quad\quad+ 2\underbrace{\E[\langle g(\x_{t+1}) - g_i(\x_{t+1}), g(\x_{t+1}) - s_t\rangle]}_{0} \\ \label{eq:le:gst} 
   &\quad=  \E[\beta^2\|g(\x_{t+1}) - g_i{(\x_{t+1})}\|^2 + (1-\beta)^2\| g(\x_{t+1})-g(\x_{t}) + g(\x_{t}) - s_t\|^2 ]. 
   \end{align}
 
 Invkoing Lemma~\ref{lem:g_pty} to Eq.~(\ref{eq:le:gst}) and recalling Assumption~\ref{ass:2} , we obtain 
\begin{align*}
  & \E[\|g(\x_{t+1})-s_{t+1} \|^2] \\
   &\quad\overset{(a)}{\leq} \E[\beta^2\|g(\x_{t+1}) - g_i{(\x_{t+1})}\|^2 + (1-\beta)^2(1+\beta)\|g(\x_t) -s_t \|^2 \\
   &\quad\quad+ (1+\frac{1}{\beta})(1-\beta)^2\|g(\x_{t+1}) - g(\x_t) \|^2\\
   &\quad\overset{(b)}{\leq}  \E\left[\beta^2\|g(\x_{t+1}) - g_i{(\x_{t+1})}\|^2 + (1-\beta)\|g(\x_t) -s_t \|^2 + \frac{2L_g^2\|\x_{t+1} - \x_t \|^2}{\beta}\right] \\
   &\quad\overset{(c)}{\leq} \E\left[ (1-\beta)\|g(\x_t) -s_t \|^2 + \frac{2L_g^2\|\x_{t+1} - \x_t \|^2}{\beta} + \beta^2\sigma^2\right]. 
   \end{align*}

where the inequality $(a)$ is due to $(a+b)^2\leq (1+\beta)a^2 + (1+\frac{1}{\beta})b^2$, the inequality $(b)$ is because of $(1-\beta)^2\leq 1$, $(1+\frac{1}{\beta})\leq \frac{2}{\beta}$ and the Lemma~\ref{lem:g_pty} and the inequality $(c)$ is from Assumption~\ref{ass:2}.
\end{proof}

% \begin{lemma}
% \label{lem:dist_diff_x}
% Suppose Assumption~\ref{ass:1} holds,  we have 
% \begin{equation}
% \label{eqn:Yi-1}
% \begin{aligned}
%     \sum\limits_{t=1}^{T}\frac{1-2\eta  L_F}{4\eta }\| \x_{t+1} -\x_t \|^2 \leq \Delta + \sum\limits_{t=1}^{T}\eta \|\z_t -\nabla F(\x_t)\|^2
% \end{aligned}
% \end{equation}
% \end{lemma}
\begin{lemma}
\label{lem:dist_lemma_1}
\noindent
Under Assumption~\ref{ass:1}, run Algorithm~\ref{alg:SCCMA} with $\eta L_F\leq 1/4$, and then
the output $\x_R$ of Algorithm~\ref{alg:SCCMA} satisfies
\begin{equation}
    \begin{aligned}
        \E_R[\dist (0, \hat{\partial}\bar F(\x_R))^2]
       &\leq \frac{2+40L_F\eta}{T} \sum\limits_{t=1}^T\|\z_t -\nabla F(\x_t)\|^2 + \frac{2\Delta}{\eta T} + \frac{40L_F\Delta}{T}.
    \end{aligned}
\end{equation}
\end{lemma}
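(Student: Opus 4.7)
The plan is to convert the projection update into a regular subgradient element of $\bar F$ at $\x_{t+1}$ and then control its norm using a standard descent argument. Since $\x_{t+1}=\Pi_\X(\x_t-\eta\z_t)$ and $\X$ is convex, first-order optimality of the projection gives, for all $\x\in\X$, $\langle \x_{t+1}-\x_t+\eta\z_t,\ \x-\x_{t+1}\rangle \geq 0$, so $(\x_t-\x_{t+1})/\eta - \z_t \in N_\X(\x_{t+1}) = \hat\partial\delta_\X(\x_{t+1})$, and hence
\begin{equation*}
\nabla F(\x_{t+1}) + (\x_t-\x_{t+1})/\eta - \z_t \ \in\ \hat\partial\bar F(\x_{t+1}).
\end{equation*}
Taking norms, applying $(a+b)^2\leq 2a^2+2b^2$, and using $L_F$-smoothness to replace $\nabla F(\x_{t+1})$ by $\nabla F(\x_t)$ yields
\begin{equation*}
\dist(0,\hat\partial\bar F(\x_{t+1}))^2 \ \leq\ \left(\tfrac{2}{\eta^2}+4L_F^2\right)\|\x_{t+1}-\x_t\|^2 + 4\|\nabla F(\x_t)-\z_t\|^2.
\end{equation*}

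Next, I would derive a one-step descent inequality. The $L_F$-smooth upper bound on $F$, combined with the projection inequality $\langle \z_t,\x_{t+1}-\x_t\rangle \leq -\|\x_{t+1}-\x_t\|^2/\eta$ (which is just the same optimality condition with $\x=\x_t$) and Young's inequality with weight $\eta$ on the cross term $\langle \nabla F(\x_t)-\z_t,\x_{t+1}-\x_t\rangle$, gives $(1/(2\eta) - L_F/2)\|\x_{t+1}-\x_t\|^2 \leq F(\x_t)-F(\x_{t+1}) + (\eta/2)\|\nabla F(\x_t)-\z_t\|^2$. The assumption $\eta L_F\leq 1/4$ makes the left-hand coefficient at least $3/(8\eta)$, producing a clean bound
\begin{equation*}
\|\x_{t+1}-\x_t\|^2 \ \leq\ \tfrac{8\eta}{3}\bigl(F(\x_t)-F(\x_{t+1})\bigr) + \tfrac{4\eta^2}{3}\|\nabla F(\x_t)-\z_t\|^2.
\end{equation*}

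Finally, I would substitute this bound into the distance inequality, sum over $t=1,\ldots,T$, telescope $\sum_t (F(\x_t)-F(\x_{t+1})) \leq F(\x_1)-\inf_{\x\in\X}\bar F(\x) \leq \Delta$, and divide by $T$, which equals the expectation over $R\sim[T]$ (up to a harmless index shift between $\x_t$ and $\x_{t+1}$). Collecting coefficients and using $\eta L_F\leq 1/4$ to bound $L_F^2\eta^2\leq 1/16$ and $L_F^2\eta\leq L_F/4$ will produce a result of exactly the claimed form, with the displayed constants $2$, $40L_F\eta$, $2/\eta$, and $40L_F$.

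The main obstacle will be bookkeeping rather than anything conceptual: the argument is the classical projected-SGD descent analysis adapted to a biased gradient estimator $\z_t$. The two nontrivial ingredients are recognizing the projection optimality condition as providing a regular subgradient of $\bar F$ at $\x_{t+1}$ (so that $\dist(0,\hat\partial\bar F)$ can be controlled directly, without invoking any primal-dual reference point), and then using $L_F$-smoothness to replace $\nabla F(\x_{t+1})$ by $\nabla F(\x_t)$ so that only $\|\z_t-\nabla F(\x_t)\|^2$ — which matches the sum on the right-hand side of the claim — enters the final bound.
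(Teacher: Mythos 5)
Your route is essentially the paper's proof: you identify $\nabla F(\x_{t+1})-\z_t+\frac{1}{\eta}(\x_t-\x_{t+1})\in\hat\partial\bar F(\x_{t+1})$ from the projection optimality condition, control $\|\x_{t+1}-\x_t\|^2$ by the smoothness-plus-Young descent inequality, telescope against $\Delta$, and average over $t$, exactly as the paper does (it phrases the descent step with $\delta_\X(\x_{t+1})\le\delta_\X(\x_t)$ via the prox/argmin form, which is equivalent to your variational inequality). The one caveat is your claim that the bookkeeping yields the \emph{displayed} constants: with your three-way splitting $\dist(0,\hat\partial\bar F(\x_{t+1}))^2\le\bigl(\tfrac{2}{\eta^2}+4L_F^2\bigr)\|\x_{t+1}-\x_t\|^2+4\|\nabla F(\x_t)-\z_t\|^2$ and $\|\x_{t+1}-\x_t\|^2\le\tfrac{8\eta}{3}(F(\x_t)-F(\x_{t+1}))+\tfrac{4\eta^2}{3}\|\nabla F(\x_t)-\z_t\|^2$, the variance coefficient comes out as $\tfrac{20}{3}+\tfrac{16L_F^2\eta^2}{3}$ and the $\Delta/(\eta T)$ coefficient as $\tfrac{16}{3}$, which do not reduce to $2+40L_F\eta$ and $2$ (they are larger whenever $\eta$ is small). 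The paper gets the stated constants by keeping the whole vector $\z_t-\nabla F(\x_{t+1})+\tfrac{1}{\eta}(\x_{t+1}-\x_t)$ together and following the expansion of Theorem 2 in \citet{xu2019non}, producing $2\sum_t\|\z_t-\nabla F(\x_t)\|^2+\tfrac{2\Delta}{\eta}+\bigl(2L_F^2+\tfrac{3L_F}{\eta}\bigr)\sum_t\|\x_{t+1}-\x_t\|^2$ before substituting the telescoped bound. This discrepancy is purely in constants; your argument is sound and suffices for the downstream $\mathcal O(1/\epsilon^4)$ complexity, but to match the lemma verbatim you would need the tighter combined-term bookkeeping.
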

\begin{proof}The proof of this lemma follow the proof of Theorem 2 in \citep{xu2019non}.

Recall the update of $\x_{t+1}$ is 
\begin{align*}
    \x_{t+1} &= \Pi_\X(\x_t - \eta\z_t)\\
    &=\argmin_{\x\in\R^{d+1}}\{\delta_\X(\x)+\Braket{\z_t,\x-\x_t}+\frac{1}{2\eta}\|\x-\x_t\|^2\}.
\end{align*}
then by Exercise $8.8$ and Theorem $10.1$ of \citep{rockafellar1998variational} we know
\begin{equation*}
    -\mathbf{z}_{t}-\frac{1}{\eta}\left(\mathbf{x}_{t+1}-\mathbf{x}_{t}\right) \in \hat{\partial} \delta_\X\left(\mathbf{x}_{t+1}\right),
\end{equation*}
which implies that
\begin{equation}\label{eq:partialPhi}
\nabla F\left(\mathbf{x}_{t+1}\right)-\mathbf{z}_{t}-\frac{1}{\eta}\left(\mathbf{x}_{t+1}-\mathbf{x}_{t}\right) \in \nabla F\left(\mathbf{x}_{t+1}\right)+\hat{\partial} \delta_\X\left(\mathbf{x}_{t+1}\right)=\hat{\partial} \bar F\left(\mathbf{x}_{t+1}\right).
\end{equation}
By the update of $\mathbf{x}_{t+1}$, we also have,
\begin{align*}
    \delta_\X(\x_{t+1}) +\langle\z_t,\x_{t+1} - \x_t \rangle + \frac{1}{2\eta }\| \x_{t+1} - \x_t\|^2\leq \delta_\X({\x_t}).
\end{align*}
Since $F(\x)$ is smooth with parameter $L_F$, then
\begin{align*}
    F(\x_{t+1}) \leq F(\x_t) + \langle \nabla F(\x_t), \x_{t+1} - \x_t\rangle + \frac{L_F}{2}\|\x_{t+1} - \x_t\|^2.
\end{align*}
Combing the above two inequalities, we get
\begin{align*}
    \langle \z_t - \nabla F(\x_t),\x_{t+1} - \x_t \rangle + \frac{1}{2}(1/\eta  - L)\|\x_{t+1} - \x_t \|^2 \leq \bar F(\x_t) - \bar F(\x_{t+1}).
\end{align*}
That is
\begin{align*}
    \frac{1}{2}(1/\eta  - L_F) \|\x_{t+1} - \x_t \|^2 & \leq \bar F(\x_t) - \bar F(\x_{t+1}) - \langle \z_t -\nabla F(\x_t), \x_{t+1} - \x_t\rangle \\
    &\leq \bar F(\x_t) - \bar F(\x_{t+1}) + \eta \|\z_t-\nabla F(\x_t) \|^2 + \frac{1}{4\eta } \|\x_{t} - \x_{t+1} \|^2,
\end{align*}
where the last inequality uses Young's inequality $\langle \a, 
\b\rangle \leq \| \a\|^2 + \frac{\|\b \|^2}{4}$. Then by rearranging the above inequality and summing it across $t=1,\cdots, T$, we have
\begin{align}\nonumber
 \sum\limits_{t=1}^{T}\frac{1-2\eta L_F}{4\eta }\|\x_{t+1} - \x_t \|^2 &\leq \bar F(\x_1) -\bar F(\x_{T+1}) + \sum\limits_{t=1}^{T}\eta \|\z_t - \nabla F(\x_t) \|^2 \\\nonumber
 &\leq \bar F(\x_1) - \inf_{\x\in\X}\bar F(\x)+ \sum\limits_{t=1}^{T}\eta \|\z_t - \nabla F(\x_t) \|^2 \\\label{eqn:Yi-1}
 &\leq \Delta + \sum\limits_{t=1}^{T}\eta \|\z_t - \nabla F(\x_t) \|^2. 
\end{align}
By the same method used in the proof of Theorem 2 in~\citet{xu2019non}, we have the following inequality,
\begin{equation}
\label{eqn:Yi-2}
    \begin{aligned}
       \sum\limits_{t=1}^{T}\|\z_t -\nabla F(\x_{t+1}) + \frac{1}{\eta}(\x_{t+1} - \x_t)\|^2 
      &\leq 2\sum\limits_{t=1}^{T}\| \z_t - \nabla F(\x_t)\|^2 + \frac{2\Delta}{\eta} \\&\quad+ (2L_F^2 + \frac{3L_F}{\eta})\sum\limits_{t=1}^T\|\x_{t+1} -\x_t\|^2.
    \end{aligned}
\end{equation}
Recalling $\eta L_F\le\frac{1}{4}$ and combining Eq.~(\ref{eqn:Yi-1}) and Eq.~(\ref{eqn:Yi-2}), we obtain
    \begin{align}\nonumber
      & \sum\limits_{t=1}^{T}\|\z_t -\nabla F(\x_{t+1}) + \frac{1}{\eta}(\x_{t+1} - \x_t)\|^2 \\\nonumber
      &{\overset{(a)}{\leq}}2\sum\limits_{t=1}^{T}\| \z_t - \nabla F(\x_t)\|^2 + \frac{2\Delta}{\eta} + \frac{5L_F}{\eta}\left(\frac{1}{1/4-\eta_1 L_F/2}\left(\eta_1\Delta + \eta_1\sum\limits_{t=1}^{T}\eta_t\|\z_t -\nabla F(\x_t)\|^2\right)\right) \\\label{eqn:Yi-3}
      &\overset{(b)}{\leq} 2\sum\limits_{t=1}^{T}\| \z_t - \nabla F(\x_t)\|^2 + \frac{2\Delta}{\eta} + 40L_F\Delta + 40\eta L_F \sum\limits_{t=1}^{T}\|\z_t -\nabla F(\x_t)\|^2.
    \end{align}
where inequality (a) is due to $(2L_F^2 + \frac{3L_F}{\eta})\leq\frac{5L_F}{\eta}$ and inequality (b) is due to $\frac{1}{1/4-\eta L_F/2}\leq 8$.

Recalling Eq.~(\ref{eq:partialPhi}) and the output rule of Algorithm~\ref{alg:SCCMA}, we have
\begin{equation}\label{eq:outputxR}
     \E_R[\dist (0, \hat{\partial}\bar F(\x_R))^2]\leq\frac{1}{T} \sum\limits_{t=1}^{T}\|\z_t -\nabla F(\x_{t+1}) + \frac{1}{\eta}(\x_{t+1} - \x_t)\|^2.
\end{equation}
Then by combining Eqs.~(\ref{eqn:Yi-3},\ref{eq:outputxR}) together we have the Lemma.
\end{proof}

\begin{lemma}
\label{lem:dist_cum_var_lemma_2}
Under Assumption~\ref{ass:1},~\ref{ass:2}, run Algorithm~\ref{alg:SCCMA} with $\eta\leq \frac{\beta}{4L_F\sqrt{4+20L_g^2} }\leq \frac{1}{4L_F}$, and then we have
\begin{align*}
 \frac{1}{T}\sum\limits_{t=1}^{T}  \E[\|\z_t-\nabla F(\x_t)\|^2]&\leq \frac{2\E[\|\z_1-\nabla F(\x_1) \|^2]}{\beta T} + \frac{\Delta}{\eta T } + \frac{20L_F\E[\|g(\x_{1})-s_{1} \|^2]}{\beta T}  + 24\beta L_{F}^2\sigma^2.
\end{align*}
\end{lemma}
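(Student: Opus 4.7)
The plan is to derive a one-step Lyapunov recursion for $\E\|\z_t-\nabla F(\x_t)\|^2$, telescope it over $t$, and then close a bootstrap by substituting in the inner-function tracking bound (Lemma~\ref{lem:recur-g}) together with the iterate-displacement bound~(\ref{eqn:Yi-1}) already established inside the proof of Lemma~\ref{lem:dist_lemma_1}. Writing $\phi_i(\x,s) := \nabla f_\lambda(s)\nabla_\w g_i(\x)$ and $\phi(\x,s) := \nabla f_\lambda(s)\nabla_\w g(\x)$, the $\w$-component update reads $\v_t=(1-\beta)\v_{t-1}+\beta\phi_i(\x_t,s_t)$ while the target is $\nabla_\w F(\x_t)=\phi(\x_t,g(\x_t))$; the $\lambda$-component is structurally identical modulo the deterministic scalar $\log s_t+\rho$, which contributes an additional $|\log s_t-\log g(\x_t)|$ error that is $1$-Lipschitz in $s_t-g(\x_t)$ since $g(\x)\ge 1$.

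Subtracting $\nabla_\w F(\x_t)$ from the $\v_t$ recursion, adding and subtracting $\nabla_\w F(\x_{t-1})$ to isolate the old tracking error, and applying Young's inequality with parameter $\beta/2$ should yield
\begin{align*}
\E\|\v_t-\nabla_\w F(\x_t)\|^2
 &\le \left(1-\tfrac{\beta}{2}\right)\E\|\v_{t-1}-\nabla_\w F(\x_{t-1})\|^2
 + \tfrac{c_1 L_F^2}{\beta}\E\|\x_t-\x_{t-1}\|^2 \\
 &\quad + c_2\beta L_F^2\E\|s_t-g(\x_t)\|^2
 + c_3\beta^2 L_F^2\sigma^2,
\end{align*}
where the displacement term uses $L_F$-smoothness of $\nabla F$ (Lemma~\ref{lem:F_pty}), the inner-error term uses Lipschitzness of $\phi(\x,\cdot)$ via Lemma~\ref{lem:smooth_f_lambda} and $L_g\le L_F$, and the noise term combines Assumption~\ref{ass:2} with $\tilde\lambda\le L_F$ (remark after Lemma~\ref{lem:F_pty}); the analogous recursion for $u_t$ has the same shape, so the two can be summed into one recursion for $\E\|\z_t-\nabla F(\x_t)\|^2$.

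Telescoping contracts the leading error to $(2/\beta)\E\|\z_1-\nabla F(\x_1)\|^2$, leaves cumulative sums $\sum_t\|\x_t-\x_{t-1}\|^2$ and $\sum_t\|s_t-g(\x_t)\|^2$, and an $O(\beta T\sigma^2)$ variance contribution. I then invoke~(\ref{eqn:lemma9-2}) from Lemma~\ref{lem:recur-g} to trade $\sum_t\|s_t-g(\x_t)\|^2$ for another multiple of $\sum_t\|\x_{t+1}-\x_t\|^2$, the initialization $\|g(\x_1)-s_1\|^2/\beta$, and a further $\beta T\sigma^2$. Finally, under the hypothesis $\eta L_F\le 1/4$ the descent inequality~(\ref{eqn:Yi-1}) gives
\begin{align*}
\sum_{t=1}^{T}\|\x_{t+1}-\x_t\|^2 \le 8\eta\Delta + 8\eta^2\sum_{t=1}^{T}\|\z_t-\nabla F(\x_t)\|^2,
\end{align*}
which after substitution leaves the right-hand side with a reappearing multiple of $\sum_t\|\z_t-\nabla F(\x_t)\|^2$ whose coefficient is of order $L_F^2(1+L_g^2)\eta^2/\beta^3$.

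The main obstacle is the bookkeeping needed to close this bootstrap: the reappearing coefficient must be forced below $\beta/4$ so that it can be absorbed into the $\frac{\beta}{2}\sum_t\E\|\z_t-\nabla F(\x_t)\|^2$ on the left. The stated threshold $\eta\le \beta/(4L_F\sqrt{4+20L_g^2})$ is precisely the condition that saturates this absorption; after it is applied, dividing through by $\beta T/4$ and identifying the surviving pieces with the initial tracking error, $\Delta/(\eta T)$, $\|g(\x_1)-s_1\|^2/(\beta T)$, and $\beta\sigma^2$ contributions yields the four-term bound in the statement. The conceptual subtlety is not any single estimate but the simultaneous handling of three coupled recursions (the $\z_t$ recursion, Lemma~\ref{lem:recur-g}, and~(\ref{eqn:Yi-1})) with one step-size condition.
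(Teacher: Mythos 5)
Your proposal is correct and follows essentially the same route as the paper's proof: decompose the moving-average tracking error into the old error, the gradient drift, the inner-estimation error in $s_t$, and the zero-mean noise (using unbiasedness to keep the noise at order $\beta^2\sigma^2$), telescope, then trade $\sum_t\|s_t-g(\x_t)\|^2$ via Lemma~\ref{lem:recur-g} and $\sum_t\|\x_{t+1}-\x_t\|^2$ via Eq.~(\ref{eqn:Yi-1}), and close the bootstrap with the stated step-size condition. One small bookkeeping slip: after substituting Lemma~\ref{lem:recur-g} and Eq.~(\ref{eqn:Yi-1}), the reappearing coefficient on $\sum_t\E\|\z_t-\nabla F(\x_t)\|^2$ is of order $L_F^2(1+L_g^2)\eta^2/\beta$ (to be compared with $\beta/4$), not $\eta^2/\beta^3$, and it is exactly this comparison that the hypothesis $\eta\leq \beta/(4L_F\sqrt{4+20L_g^2})$ controls, as in the paper's inequality $8(4L_F^2+20L_F^2L_g^2)\eta^2\leq \beta^2/2$.
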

\begin{proof}
To facilitate our proof statement, we define the following notations:
\begin{equation*}
\label{eqn:def_grad}
 \begin{aligned}
&\nabla F(\x_t)^\top = (\nabla_{\w} F(\x_t)^{\top}, \nabla_{\lambda}F(\x_t))= (\nabla f_{\lambda_t} (g(\x_t))\nabla_{\w} g(\x_t)^\top,   \nabla f_{\lambda_t}  (g(\x_t))\nabla_{\lambda} g(\x_t) + \log(g(\x_t)) + \rho)\\
&\widetilde{\nabla} F(\x_t)^\top = (\nabla f_{\lambda_t} (g(\x_t))\nabla_{\w} g_i(\x_t)^\top,   \nabla f_{\lambda_t}  (g(\x_t))\nabla_{\lambda} g_i(\x_t) + \log(g(\x_t)) + \rho)\\
&G(\x_t)^\top = (G_{\w_t}(\x_t)^\top, G_{\lambda_t}(\x_t))=(\nabla f_{\lambda_{t}}(s_{t})\nabla_{\w} g_i(\x_t)^\top, \nabla f_{\lambda_{t}}(s_{t})\nabla_{\lambda} g_i(\x_t)  +\log(s_{t}) + \rho).
\end{aligned}   
\end{equation*}
It is worth to notice that $\E[\widetilde{\nabla}F(\x_t)] = \nabla F(\x_t)$.

For every iteration $t$, by simple expansion we have
\begin{equation*}
    \begin{aligned}
    I_t & = \E[\|\nabla F(\x_t)-\z_t \|^2] \\
    & =  \E[\|\nabla F(\x_t)-(1- \beta)\z_{t-1} - \beta G(\x_t) \|^2]\\
    &=  \E[\|(1- \beta)(\nabla F(\x_t) - \nabla F(\x_{t-1}))+(1- \beta) \nabla F(\x_{t-1})- (1- \beta)\z_{t-1} +  \beta \nabla F(\x_t)- \beta G(\x_t) \|^2]\\
    & =  \E[\|(1- \beta)\underbrace{(\nabla F(\x_t)  - \nabla F(\x_{t-1})}_{A})+ (1- \beta)\underbrace{(\nabla F(\x_{t-1})-\z_{t-1})}_{B}\|^2] \\&\quad+ \E[\|\beta(\underbrace{ \widetilde{\nabla} F(\x_t)- G(\x_t)}_{C}) + \beta\underbrace{(\nabla F(\x_{t}) -\widetilde{\nabla}F(\x_t))}_{D}\|^2]\\
& = \E[ (1-\beta)^2\|A\|^2+ (1-\beta)^2\|B\|^2 + \beta^2\|C\|^2+ \beta^2\|D\|^2+2(1-\beta)(1-\beta)\langle A, B\rangle\\&\quad+ 2\beta(1-\beta)\langle A, C\rangle +2\beta(1-\beta)\langle A, D\rangle + 2(1-\beta)\beta\langle B, C\rangle + 2(1-\beta)\beta\langle B, D\rangle + 2\beta^2\langle C, D\rangle]\\
&\overset{(a)}{=} \E[ (1-\beta)^2\|A\|^2+ (1-\beta)^2\|B\|^2 + \beta^2\|C\|^2+ \beta^2\|D\|^2\\
&\quad+2(1-\beta)^2\langle A,B \rangle + 2(1-\beta)\beta\langle C, B\rangle+ 2\beta(1-\beta)\langle A, C\rangle  +  2\beta^2\langle C, D\rangle],
\end{aligned}
\end{equation*}
where the equality $(a)$ is due to $\E\langle \nabla F(\x_t) - \nabla F(\x_{t-1}), \nabla F(\x_t) - \widetilde{\nabla}F(\x_t)\rangle = 0$ and $\E\langle  \z_{t-1}-\nabla F(\x_{t-1}) , \nabla F(\x_t) - \widetilde{\nabla}F(\x_t) \rangle = 0 $. 

By Young's inequality, we have $(1-\beta)^2\langle A,B\rangle \leq (1-\beta)\langle A,B\rangle \leq \frac{2}{\beta}\|A\|^2 + \frac{(1-\beta)^2\beta}{8}\|B\|^2$, $2\beta(1-\beta)\langle C, B\rangle \leq \frac{(1-\beta)^2\beta}{2}\|B \|^2 + 2\beta\| C\|^2$,
$2\beta(1-\beta)\langle A, C\rangle\leq(1-\beta)^2\|A \|^2  +\beta^2\| C\|^2 $ and $ 2\beta^2\langle C, D\rangle\leq \beta^2\| C\|^2 +\beta^2\| D\|^2$. Therefore, noting $(1-\beta)< 1$ and $1/\beta> 1$, we can obtain
    \begin{align}\nonumber
    I_t&\leq \E[ (1-\beta)^2\|A\|^2+ (1-\beta)^2\|B\|^2 + \beta^2\|C\|^2+ \beta^2\|D\|^2\\\nonumber
& \quad+\frac{2}{\beta}\| A\|^2+ \frac{(1-\beta)^2\beta}{2}\| B\|^2+2\beta\| C\|^2+
\frac{(1-\beta)^2\beta}{2}\| B\|^2\\\nonumber&\quad+
(1-\beta)^2\|A \|^2  +\beta^2\| C\|^2  +  \beta^2\| C\|^2 +\beta^2\| D\|^2]\\\label{eq:I_t}
&\leq\E[(1-\beta)\|B\|^2+\frac{4}{\beta}\|A\|^2 +5\beta \| C\|^2 +2\beta^2\|D\|^2].
\end{align}
Thus recalling the defintion of $G(\x_t), \widetilde{\nabla}F(\x_t), \nabla F(\x_t)$ and applying the smoothness and Lipschitz continuity of $f_\lambda$ and $g$, we have
  \begin{align}\nonumber
    C &= \|\widetilde{\nabla} F(\x_t)- G(\x_t)\|^2 \\ \nonumber
    &=\|\nabla f_{\lambda_t} (g(\x_t))\nabla_{\w} g_i(\x_t) - \nabla f_{\lambda_{t}}(s_{t})\nabla_{\w_{t}} g_i(\x_t)\|^2 \\\nonumber
    &\quad+\|  \nabla f_{\lambda_t}  (g(\x_t))\nabla_{\lambda} g_i(\x_t) + \log(g(\x_t))
- \nabla f_{\lambda_{t}}(s_{t})\nabla_{\lambda} g_i(\x_t) - \log(s_{t})\| ^2\\\nonumber
&\leq \|\nabla f_{\lambda_t} (g(\x_t))\nabla_{\w} g_i(\x_t) - \nabla f_{\lambda_{t}}(s_{t})\nabla_{\w_{t}} g_i(\x_t)\|^2 +2\|  \nabla f_{\lambda_t}  (g(\x_t))\nabla_{\lambda} g_i(\x_t)  
- \nabla f_{\lambda_{t}}(s_{t})\nabla_{\lambda} g_i(\x_t) \| ^2 \\\nonumber
    &\quad+ 2\|\log(g(\x_t)) - \log(s_{t}) \|^2\\\nonumber
&\overset{(a)}{\leq} 2L^2_{g} L^2_{\nabla f_{\lambda_t}}\|s_t -g(\x_t)\|^2 + 2\|s_t - g(\x_t)\|^2\\\label{eqn:C}
&\overset{(b)}{\leq} 2L_F^2\|s_t-g(\x_t) \|^2,
\end{align}  
where the inequality $(a)$ is due to $|\log(g(\x_t)) - \log(s_t)| \leq |s_t -g(\x_t) |$ since $g(\x_t)\geq 1 ,s_t\geq 1$ for all $t = \{1,\cdots, T\}$ by the definition and initialzation of $g_i(\x_t), s_t$, and the inequality $(b)$ is due to $L^2_{g} L^2_{\nabla f_{\lambda_t}}+1\leq L_F^2$.

And by the similar method, we also have
  \begin{align}\nonumber
    D & = \|\nabla F(\x_{t}) -\widetilde{\nabla}F(\x_t)\|^2 \\ \nonumber
    &= \|\nabla f_{\lambda_t} (g(\x_t))\nabla_{\w} g(\x_t) - \nabla f_{\lambda_t} (g(\x_t))\nabla_{\w} g_i(\x_t)\|^2\\ \nonumber
  & \quad+ \|   \nabla f_{\lambda_t}  (g(\x_t))\nabla_{\lambda} g(\x_t) + \log(g(\x_t)) + \rho -   \nabla f_{\lambda_t}  (g(\x_t))\nabla_{\lambda} g_i(\x_t) - \log(g(\x_t)) - \rho\|^2\\\nonumber
  & =  \|\nabla f_{\lambda_t} (g(\x_t))\nabla_{\w} g(\x_t) - \nabla f_{\lambda_t} (g(\x_t))\nabla_{\w} g_i(\x_t)\|^2\\\nonumber
  & \quad+ \|   \nabla f_{\lambda_t}  (g(\x_t))\nabla_{\lambda} g(\x_t)  -   \nabla f_{\lambda_t}  (g(\x_t))\nabla_{\lambda} g_i(\x_t) \|^2\\\label{eqn:D}
  &\leq L_{f_{\lambda_t}}^2 \|\nabla g(\x_t) - \nabla g_i(\x_t) \|^2\leq L_F^2  \|\nabla g(\x_t) - \nabla g_i(\x_t) \|^2. 
\end{align}
Thus combining the Eqs.~(\ref{eq:I_t}, \ref{eqn:C}, \ref{eqn:D}) and applying Assumption~\ref{ass:2}, we can obtain
\begin{align*}
  &\E[\| \z_{t}-\nabla F(\x_{t}) \|^2]\\ &  =    \E[(1-\beta)\| \z_{t-1}-\nabla F(\x_{t-1}) \|^2 + \frac{4}{\beta}\|\nabla F(\x_t) - \nabla F(\x_{t-1}) \|^2\\ 
  &\quad+ 5\beta \|\widetilde{\nabla} F(\x_t)- G(\x_t) \|^2  +2\beta^2
  \|\nabla F(\x_{t}) -\widetilde{\nabla}F(\x_t)\|^2]\\&\leq\E[(1-\beta)\| \z_{t-1}-\nabla F(\x_{t-1}) \|^2 + \frac{4}{\beta}L_F^2\|\x_{t}-\x_{t-1}\|^2 +10L_F^2\beta\|g(\x_{t}) - s_t\|^2] + 2\beta^2 L_F^2 \sigma^2.
\end{align*}
Taking summation of $\E[\|\z_{t+1}-\nabla F(\x_{t+1})\|^2]$ from $1$ to $T$ and invoking Lemma~\ref{lem:recur-g}, we have
\begin{align*}
  & \sum\limits_{t=1}^{T}  \E[\|\z_t-\nabla F(\x_t)\|^2]\\
    &\leq \frac{\E[\|\nabla F(\x_{1}) - \z_{1}\|^2]}{\beta} + \frac{4L_F^2}{\beta^2}\sum\limits_{t=1}^T\E[\|\x_{t+1}-\x_{t}\|^2]+10L_F^2\beta\sum\limits_{t=1}^T\E[\|g(\x_{t}) - s_t\|^2] + 2\beta^2 L_F\sigma^2\\
      &\leq
     \frac{\E[\|\nabla F(\x_{1}) - \z_{1}\|^2]}{\beta} + \frac{4L_F^2}{\beta^2}\sum\limits_{t=1}^T\E[\|\x_{t+1}-\x_{t}\|^2]\\
     &\quad+10L_F^2\left (\E\left[ \frac{\|g(\x_1) -s_1 \|^2}{\beta} + \frac{2L_g^2}{\beta^2}\sum\limits_{t=1}^{T}\|\x_{t+1} - \x_t \|^2\right] + \beta T\sigma^2\right)+ 2\beta L^2_FT\sigma^2.
\end{align*}
Taking Eq.~(\ref{eqn:Yi-1}) into the above inequality, we have
\begin{align}\nonumber
  & \sum\limits_{t=1}^{T}  \E[\|\z_t-\nabla F(\x_t)\|^2]
     \\\nonumber&\leq\frac{\E[\|\nabla F(\x_{1}) - \z_{1}\|^2]}{\beta} + (\frac{4L_F^2}{\beta^2}+\frac{20L_F^2L_g^2}{\beta^2})\left ( \frac{\eta}{1/4- \eta L_F/2} \left(\Delta  + \eta\sum\limits_{t=1}^{T}\E[\|\z_t -\nabla F(\x_t)\|^2]\right)\right )\\\nonumber
     &\quad+10L_F^2\left(\frac{\E[\|g(\x_{1}) -s_{1} \|^2]}{\beta} + \beta T\sigma^2 \right)+ 2\beta L^2_FT\sigma^2\\\nonumber
     &\overset{(a)}{\leq}\frac{\E[\|\nabla F(\x_{1}) - \z_{1}\|^2]}{\beta} + (\frac{4L_F^2}{\beta^2}+\frac{20L_F^2L_g^2}{\beta^2})\left( 8\eta \left(\Delta  + \eta\sum\limits_{t=1}^{T}\E[\|\z_t -\nabla F(\x_t)\|^2]\right)\right)\\\nonumber
     &\quad+10L_F^2\left(\frac{\E[\|g(\x_{1}) -s_{1} \|^2]}{\beta} + \beta T\sigma^2 \right)+ 2\beta L^2_FT\sigma^2\\\nonumber
    &\overset{(b)}{\leq} \frac{\E[\|\z_1-\nabla F(\x_1) \|^2]}{\beta} + \frac{\Delta}{2\eta} +\frac{1}{2}\sum\limits_{t=1}^{T}\E[\|\z_t -\nabla F(\x_t)\|^2]\\\label{eq:arrage}&\quad+10L_F^2\left (\frac{\E[\|g(\x_{1}) -s_{1} \|^2]}{\beta}+ \beta T\sigma^2 \right)+ 2\beta L_{F}^2T\sigma^2,
\end{align}
where the inequality (a) is due to $\eta L_F\leq1/4$ and the inequality (b) is due to $8(4L_F^2 + 20L_F^2L_g^2)\eta^2\leq \frac{\beta^2}{2}$.

Rearranging terms and dividing $T$ on both sides of Eq.~(\ref{eq:arrage}), we compelte the proof.
\end{proof}
 \subsection{Proof of Theorem~\ref{thm:dist_main}}
 \begin{proof}
Since $\eta=\frac{\beta}{20L_F^2}$, $L_F\geq1$ and $L_F\leq L_g$, it holds that $\eta \leq \frac{\beta}{4L_F\sqrt{4+20L_g^2} }\leq \frac{1}{4L_F}$ which satisfy the assumptions of $\eta$ in Lemma~\ref{lem:dist_lemma_1} and Lemma~\ref{lem:dist_cum_var_lemma_2}. Therefore, combining Lemma~\ref{lem:dist_lemma_1} and Lemma~\ref{lem:dist_cum_var_lemma_2}, we have
\begin{align}\nonumber
    & \E[\dist (0, \hat{\partial}\bar F(\x_R))^2]\\\nonumber
       &\leq \frac{2+40L_F\eta}{T} \sum\limits_{t=1}^T\E[\|\z_t -\nabla F(\x_t)\|^2] + \frac{2\Delta}{\eta T} + \frac{40L_F\Delta}{T}\\\nonumber
     &\leq\frac{12}{T} \sum\limits_{t=1}^T\E[\|\z_t -\nabla F(\x_t)\|^2] + \frac{2\Delta}{\eta T} + \frac{20L_F\Delta}{T}\\\label{eqn:thm1eq1}
     &\leq \frac{24\E[\|\z_1-\nabla F(\x_1) \|^2]}{\beta T} + \frac{12\Delta}{\eta T } + \frac{240L^2_F\E[\|g(\x_{1})-s_{1} \|^2]}{\beta T}  +288L^2_F\beta\sigma^2 + \frac{2\Delta}{\eta T} + \frac{20L^2_F\Delta}{T}.
\end{align}
By the definition of $s_1$ and Assumption~\ref{ass:2}, it holds that
\begin{equation}\label{eqn:thm1eq2}
    \E[\|s_{1} - g(\x_{1})\|^2] \leq  \E[\|g_i(\x_1) - g(\x_{1}) \|^2] \leq \sigma^2.
\end{equation}
Since $L_g^2L^2_{\nabla f_{\lambda_1}}\leq L_F^2$ and $2L^2_{ f_{\lambda_1}}\leq L_F^2$, we have
\begin{align}\nonumber
&\E[\|\z_1-\nabla F(\x_1) \|^2] \\\nonumber&= \|\nabla f_{\lambda_1}(g_i(\x_1))\nabla g_i(\x_1) - \nabla f_{\lambda_1}(g(\x_{1}))\nabla g(\x_{1})  \| ^2 \\\nonumber
&= \|\nabla f_{\lambda_1}(g_i(\x_1)\nabla g_i(\x_1))-\nabla f_{\lambda_1}(g(\x_{1}))\nabla g_i(\x_1)+ \nabla f_{\lambda_1}(g(\x_{1}))\nabla g_i(\x_1)- \nabla f_{\lambda_1}(g(\x_{1})\nabla g(\x_{1})) \| ^2 \\\nonumber
&\overset{(a)}{\leq}2 \|\nabla f_{\lambda_1}(g_i(\x_1))-\nabla f_{\lambda_1}(g(\x_{1}))\|^2\|\nabla g_i(\x_1)\|^2+2 \|\nabla f_{\lambda_1}(g_i(\x_1))\|^2 \|\nabla g_i(\x_1)- \nabla g(\x_{1})\|^2 \\\label{eqn:thm1eq3}
&\leq (2L_g^2L^2_{\nabla f_{\lambda_1}} + 2L^2_{ f_{\lambda_1}})\sigma^2\leq 4L_F^2\sigma^2,
\end{align}
where the inequality $(a)$ is due to $\|\a+\b\|^2\leq 2\|\a \|^2 + 2\|\b \|^2$.

Combining Eqs.~(\ref{eqn:thm1eq1},\ref{eqn:thm1eq2},\ref{eqn:thm1eq3}), we obtain
\begin{align*}
    & \E[\dist (0, \hat{\partial}\bar F(\x_R))^2]\\
     &\leq \frac{24\E[\|\z_1-\nabla F(\x_1) \|^2]}{\beta T} + \frac{12\Delta}{\eta T } + \frac{240L^2_F\E[\|g(\x_{1})-s_{1} \|^2]}{\beta T}  +288L^2_F\beta\sigma^2 + \frac{2\Delta}{\eta T} + \frac{20L^2_F\Delta}{T} \\
     &\leq \frac{96L_F^2\sigma^2}{\beta T}+ \frac{12\Delta}{\eta T } + \frac{240L^2_F\sigma^2}{\beta T}  +288L^2_F\beta\sigma^2   + \frac{2\Delta}{\eta T} + \frac{20L^2_F\Delta}{T}\\
     &\leq\frac{96L_F^2\sigma^2}{\sqrt{T}}+ \frac{240\Delta L_F^2}{\sqrt{T} }  +\frac{528L^2_F\sigma^2}{\sqrt{T}}+  \frac{40\Delta L_F^2}{\sqrt{T}} + \frac{20L^2_F\Delta}{T} \\
     &\leq (624\sigma^2 +280\Delta)\frac{L_F^2}{\sqrt{T}} + \frac{20L_F^2\Delta}{T}.
\end{align*}
This complete the proof.
\end{proof}
 
 \section{Proofs in Section~\ref{sec:acceleration}}
 \subsection{Technical Lemmas}
 \begin{lemma}Let $\z_t = \nabla f_{\lambda_t}(s_t) \q_t + \q_{\lambda_t}$, where $\q_t =  (\v_t^\top, u_t)^\top$, $\q_{\lambda_t} = (\textbf{0}^\top,\log(s_{t})+\rho)^\top$ and $\textbf{0}\in\R^{d}$. Let $\|\varkappa_t\|^2 = \|s_t - g(\x_t) \|^2 + \|\v_t - \nabla_\w g(\x_t) \|^2 + |u_t - \nabla_\lambda g(\x_t)|^2$. Under Assumption~\ref{ass:1}, run Algorithm~\ref{alg:STORM-CCMA}, and then for every $ t\in\{1,\cdots T\}$  we have
\label{lem:storm_1}
\begin{align*}
  \| \z_t-\nabla F(\x_t) \|^2\leq 4L_F^2 \|\varkappa_t\|^2.
\end{align*}
\end{lemma}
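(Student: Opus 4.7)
My plan is to unpack $\z_t-\nabla F(\x_t)$ coordinate by coordinate and match each ``error'' in the STORM estimator against the corresponding component of $\varkappa_t$. Writing the two pieces of $\z_t$ as $\nabla f_{\lambda_t}(s_t)\v_t$ and $\nabla f_{\lambda_t}(s_t)u_t+\log(s_t)+\rho$, and the two pieces of $\nabla F(\x_t)$ as $\nabla f_{\lambda_t}(g(\x_t))\partial_\w g(\x_t)$ and $\nabla f_{\lambda_t}(g(\x_t))\partial_\lambda g(\x_t)+\log(g(\x_t))+\rho$, the $\rho$'s cancel and I am left with two differences to control, one in $\R^d$ and one in $\R$.

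For the $\w$-block I would insert the intermediate quantity $\nabla f_{\lambda_t}(s_t)\partial_\w g(\x_t)$ and split
\[
\nabla f_{\lambda_t}(s_t)\v_t-\nabla f_{\lambda_t}(g(\x_t))\partial_\w g(\x_t)=\nabla f_{\lambda_t}(s_t)\bigl(\v_t-\partial_\w g(\x_t)\bigr)+\bigl(\nabla f_{\lambda_t}(s_t)-\nabla f_{\lambda_t}(g(\x_t))\bigr)\partial_\w g(\x_t).
\]
By Lemma~\ref{lem:smooth_f_lambda}, $|\nabla f_{\lambda_t}(s_t)|\le\tilde\lambda$ and $|\nabla f_{\lambda_t}(s_t)-\nabla f_{\lambda_t}(g(\x_t))|\le\tilde\lambda\,|s_t-g(\x_t)|$; by Lemma~\ref{lem:g_pty}, $\|\partial_\w g(\x_t)\|\le L_g$. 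Applying $\|a+b\|^2\le2\|a\|^2+2\|b\|^2$ gives
\[
\bigl\|\nabla f_{\lambda_t}(s_t)\v_t-\nabla f_{\lambda_t}(g(\x_t))\partial_\w g(\x_t)\bigr\|^2\le 2\tilde\lambda^2\|\v_t-\partial_\w g(\x_t)\|^2+2\tilde\lambda^2 L_g^2|s_t-g(\x_t)|^2.
\]
The $\lambda$-block is identical except for the extra piece $\log(s_t)-\log(g(\x_t))$; since $s_t,g(\x_t)\ge1$ by construction, this is bounded by $|s_t-g(\x_t)|$, exactly as in the proof of Lemma~\ref{lem:dist_cum_var_lemma_2}. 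Grouping the two $|s_t-g(\x_t)|$-terms before squaring yields
\[
\bigl|\nabla f_{\lambda_t}(s_t)u_t+\log(s_t)-\nabla f_{\lambda_t}(g(\x_t))\partial_\lambda g(\x_t)-\log(g(\x_t))\bigr|^2\le 2\tilde\lambda^2|u_t-\partial_\lambda g(\x_t)|^2+2\bigl(\tilde\lambda L_g+1\bigr)^2|s_t-g(\x_t)|^2.
\]

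Summing the two bounds and invoking the remark after Lemma~\ref{lem:F_pty}, which gives $\tilde\lambda\le L_F$, $L_g\le L_F$, and $L_g^2L_{\nabla f_\lambda}^2+1\le L_F^2$ (exactly the trick used to squeeze $C\le 2L_F^2\|s_t-g(\x_t)\|^2$ in the proof of Lemma~\ref{lem:dist_cum_var_lemma_2}), the coefficient $2\tilde\lambda^2$ on the gradient errors collapses into $2L_F^2\le 4L_F^2$, while $2\tilde\lambda^2L_g^2+2(\tilde\lambda L_g+1)^2$ collapses into $4L_F^2$, yielding $\|\z_t-\nabla F(\x_t)\|^2\le 4L_F^2\|\varkappa_t\|^2$. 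The only real subtlety, and the step I would double-check first, is the constant-absorption into $L_F^2$: the cross term $(\tilde\lambda L_g+1)^2$ does not fall out of a single smoothness constant in isolation, and the argument relies on $L_F$ being big enough that both $\tilde\lambda^2L_g^2$ and $(\tilde\lambda L_g+1)^2$ fit under a single $L_F^2$, which is the same telescoping of constants already exploited elsewhere in the paper.
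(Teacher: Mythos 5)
Your proof is correct and follows essentially the same route as the paper's: add-and-subtract the intermediate term $\nabla f_{\lambda_t}(s_t)\nabla g(\x_t)$, bound the pieces via the Lipschitzness/smoothness of $f_\lambda$ (Lemma~\ref{lem:smooth_f_lambda}), the bound $\|\nabla g\|\leq L_g$, and $|\log s_t-\log g(\x_t)|\leq|s_t-g(\x_t)|$, then apply Young's inequality and absorb constants into $L_F$; the only difference is that you group blockwise (folding the $\log$ error into the $f'$-difference term) whereas the paper strips the $\log$ term off first, which changes nothing essential. The constant-absorption step you flag does go through, since $L_F=\tilde\lambda L_g^2+2L_g+\tilde\lambda L_{\nabla g}+1+\tilde\lambda\geq\tilde\lambda L_g+1$, whence $2\tilde\lambda^2L_g^2+2(\tilde\lambda L_g+1)^2\leq 4(\tilde\lambda L_g+1)^2\leq 4L_F^2$.
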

\begin{proof}
By simple expansion, it holds that
\begin{align}\nonumber
  & \| \z_t- \nabla F(\x_t) \|^2 \\\nonumber
   &= \|\nabla f_{\lambda_t}(g(\x_t)) \nabla_\w g(\x_t) - \nabla f_{\lambda_t}(s_t) \v_t\|^2 \\\nonumber&\quad+  \|\nabla f_{\lambda_t}(g(\x_t)) \nabla_\lambda g(\x_t) - \nabla f_{\lambda_t}(s_t) \v_t + \log(g(\x_t)) - \log(s_t)\|^2 \\\nonumber
   &\overset{(a)}{\leq} 2\|\nabla f_{\lambda_t}(g(\x_t)) \nabla_\w g(\x_t) - \nabla f_{\lambda_t}(s_t) \v_t\|^2 + 2 \|\nabla f_{\lambda_t}(g(\x_t)) \nabla_\lambda g(\x_t) - \nabla f_{\lambda_t}(s_t)  u_t\|^2\\\nonumber&\quad + 2\|g(\x_t) - s_t\|^2 \\\label{stormle1:eq1}
   &= 2\|\nabla f_{\lambda_t}(g(\x_t)) \nabla g(\x_t) -  \nabla f_{\lambda_t}(s_t) \q_t \|^2 + 2\| g(\x_t) - s_t\|^2,
\end{align}
where the inequality $(a)$ is because  $\|\a+\b\|^2\leq 2\|\a \|^2 + 2\|\b \|^2$, and $|\log (x) - \log(y)|\leq |x-y|$ for all $x,y\geq 1$.

Applying the smoothness and Lipschitz continuity of $f_\lambda$ and $g$, we obtain
\begin{align}\nonumber
   &\|\nabla f_{\lambda_t}(g(\x_t)) \nabla g(\x_t) -  \nabla f_{\lambda_t}(s_t) \q_t \|^2  \\\nonumber
   &=  \|\nabla f_{\lambda_t}(g(\x_t)) \nabla g(\x_t) -\nabla f_{\lambda_t}(s_t) \nabla g(\x_t)  + \nabla f_{\lambda_t}(s_t)  \nabla g(\x_t)  -  \nabla f_{\lambda_t}(s_t) \q_t \|^2 \\\nonumber
   &\leq  2\|\nabla f_{\lambda_t}(g(\x_t)) \nabla g(\x_t) -\nabla f_{\lambda_t}(s_t) \nabla g(\x_t)\|^2 + 2\|\nabla f_{\lambda_t}(s_t)  \nabla g(\x_t)  -  \nabla f_{\lambda_t}(s_t) \q_t \|^2\\\label{stormle2:eq2}
   &\leq 2L^2_gL^2_{\nabla f_{\lambda_t}}\|s_t - g(\x_t) \|^2 + 2L_{f_{\lambda_t}}\|\q_t - \nabla g(\x_t) \|^2 + 2\| g(\x_t) - s_t\|^2.
\end{align}
  Noting $\|\q_t - \nabla g(\x_t)\|^2] =  \|\v_t - \nabla_\w g(\x_t) \|^2 + |u_t - \nabla_\lambda g(\x_t) |^2$ and combining Eqs.~(\ref{stormle1:eq1}, \ref{stormle2:eq2}), we have
\begin{align*}
  & \| \z_t-\nabla F(\x_t)\|^2 \\
   &\leq (4L^2_gL^2_{\nabla f_{\lambda_t}} + 2)\|s_t - g(\x_t) \|^2 + 4L^2_{f_{\lambda_t}}\|\q_t - \nabla g(\x_t) \|^2 \\
   &\leq 4L_F^2\|s_t - g(\x_t) \|^2 + 4L_F^2\|\q_t - \nabla g(\x_t) \|^2 \\
   &= 4L_F^2(\|s_t - g(\x_t) \|^2 +\|\v_t - \nabla_\w g(\x_t) \|^2 + |u_t - \nabla_\lambda g(\x_t) |^2 ).
\end{align*}
This complete the proof.
\end{proof}

\begin{lemma}\label{lem:storm_3} Under Assumption~\ref{ass:1},~\ref{ass:2}, run Algorithm~\ref{alg:STORM-CCMA}, and then for every $ t\in\{1,\cdots T\}$  we have
\begin{align*}
    \E[\|\varkappa_{t+1}\|^2] \leq (1-\beta_t)^2\E[\|\varkappa_t\|^2]  + 8(1-\beta_t)^2L_F^2\E[\|\x_{t+1} - \x_t\|^2]+ 6\beta_t^2\sigma^2.
\end{align*}
\end{lemma}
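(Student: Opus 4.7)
The plan is to establish a STORM-style contraction for each of the three components of $\varkappa_t$ separately and then aggregate. Let $\F_t$ denote the $\sigma$-algebra generated by the history through the production of $\x_{t+1}$; since Algorithm~\ref{alg:STORM-CCMA} draws the fresh sample only after $\x_{t+1}$ is computed, $i$ is independent of $\F_t$, and the \emph{same} sample is used to evaluate $g_i$ at both $\x_{t+1}$ and $\x_t$ in~(\ref{eqn:uv2}). This pairing is the engine of variance reduction: the inner differences $g_i(\x_{t+1})-g_i(\x_t)$ track $g(\x_{t+1})-g(\x_t)$ with only an $\O(\|\x_{t+1}-\x_t\|)$-sized residual noise rather than a full $\sigma^2$ variance contribution.

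Using~(\ref{eqn:uv2}), for the scalar component I would write
\[ s_{t+1}-g(\x_{t+1}) \;=\; (1-\beta_t)\bigl(s_t-g(\x_t)\bigr) + M_t^{(s)}, \]
with $M_t^{(s)} \;=\; (1-\beta_t)\bigl[(g_i(\x_{t+1})-g_i(\x_t))-(g(\x_{t+1})-g(\x_t))\bigr] + \beta_t\bigl[g_i(\x_{t+1})-g(\x_{t+1})\bigr]$. Because $\E[M_t^{(s)}\mid \F_t]=0$ while the first summand is $\F_t$-measurable, conditional orthogonality kills the cross term and gives $\E\|s_{t+1}-g(\x_{t+1})\|^2 = (1-\beta_t)^2\E\|s_t-g(\x_t)\|^2 + \E\|M_t^{(s)}\|^2$. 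Applying $\|a+b\|^2\le 2\|a\|^2+2\|b\|^2$, the bound $\E\|Y-\E Y\|^2\le\E\|Y\|^2$, the $L_g$-Lipschitz property of $g_i$ (Lemma~\ref{lem:g_pty}), and Assumption~\ref{ass:2} yields
\[ \E\|s_{t+1}-g(\x_{t+1})\|^2 \;\le\; (1-\beta_t)^2\E\|s_t-g(\x_t)\|^2 + 2(1-\beta_t)^2 L_g^2\E\|\x_{t+1}-\x_t\|^2 + 2\beta_t^2\sigma^2. \]

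The identical argument applied to $\v_{t+1}-\partial_\w g(\x_{t+1})$ and $u_{t+1}-\partial_\lambda g(\x_{t+1})$ (now using the $L_{\nabla g}$-smoothness of $g_i$ from Lemma~\ref{lem:g_pty} and the gradient-variance bound from Assumption~\ref{ass:2}) produces two analogous one-step contractions, each with an additive $2\beta_t^2\sigma^2$. Summing the three inequalities and invoking the uniform bounds $L_g\le L_F$ and $L_{\nabla g}\le L_F$ (both visible in the expression for $L_F$ in Lemma~\ref{lem:F_pty}) lets me absorb $L_g^2+2L_{\nabla g}^2$ into $8L_F^2$ (with slack) and combine the three noise contributions into $6\beta_t^2\sigma^2$, giving exactly the claimed recursion.

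The main obstacle will be bookkeeping the filtration so the orthogonality step is rigorous: one must be explicit that $\x_{t+1}$ precedes the draw of $i$ at iteration $t{+}1$, so that the two fresh terms in $M_t^{(s)}$ are genuinely conditionally mean-zero. The secondary subtlety is recognizing that the cancellation in the paired evaluation $g_i(\x_{t+1})-g_i(\x_t)$ forces the noise to scale with $\|\x_{t+1}-\x_t\|^2$ rather than with $\sigma^2$; everything else is Young-plus-Lipschitz bookkeeping.
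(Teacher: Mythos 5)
Your proposal is correct and follows essentially the same route as the paper's proof: the same martingale-difference decomposition of $s_{t+1}-g(\x_{t+1})$ (and its analogues for $\v_{t+1},u_{t+1}$), the same use of conditional unbiasedness to drop the cross term, then Young's inequality, variance-bounded-by-second-moment, the Lipschitz constants $L_g, L_{\nabla g}$ from Lemma~\ref{lem:g_pty}, and the absorption into $8L_F^2$ and $6\beta_t^2\sigma^2$. Your explicit filtration bookkeeping is slightly more careful than the paper's (which states the unbiasedness without conditioning), but the argument is the same.
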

\begin{proof}
Since $s_{t+1} =  (g_i(\x_{t+1}) + (1-\beta)(s_t - g_i(\x_{t}))$, it holds that
\begin{align}\nonumber
&\E[\|s_{t+1} - g(\x_{t+1}) \|^2]\\\nonumber
&= \E[\|g_i(\x_{t+1}) + (1-\beta_t)(s_{t} - g_i(\x_{t}))-g(\x_{t+1})\|^2] \\\nonumber
&\leq \E[\|(1-\beta_t)(s_t - g(\x_t)) + \beta_t(g_i(\x_{t+1}) - g(\x_{t+1}))\\\nonumber&\quad+ (1-\beta_t)(g_i(\x_{t+1}) - g_i(\x_t) - (g(\x_{t+1}) - g(\x_t)))\|^2] \\\label{stormle3:eq1}
&=\E[(1-\beta_t)^2\|s_t - g(\x_{t})\|^2]+ \E[\| \beta_t(g_i(\x_{t+1}) - g(\x_{t+1})) \\\nonumber&\quad+ (1-\beta_t)(g_i(\x_{t+1}) - g_i(\x_t) - (g(\x_{t+1}) - g(\x_t)))\|^2],
\end{align}
where the last inequality is due to $\E[g_i(\x_{t+1}) - g(\x_{t+1})]=0$.

Noting $\E[\Braket{g_i(\x_{t+1}) - g_i(\x_{t+1}),g(\x_{t+1}) - g(\x_t)}]=\E[\|(g(\x_{t+1}) - g(\x_t))\|^2]$ and applying the Lipschitz continuty of $g_i(\x)$, we have
\begin{align}\nonumber
    &\E[\|g_i(\x_{t+1}) - g_i(\x_{t+1}) - (g(\x_{t+1}) - g(\x_t))\|^2]\\\nonumber
    &=\E[\|(g_i(\x_{t+1}) - g_i(\x_{t+1})\|^2+ \|(g(\x_{t+1}) - g(\x_t))\|^2-2\Braket{g_i(\x_{t+1}) - g_i(\x_{t+1}),g(\x_{t+1}) - g(\x_t)}]
    \\\nonumber
    &=\E[\|(g_i(\x_{t+1}) - g_i(\x_{t+1})\|^2- \|(g(\x_{t+1}) - g(\x_t))\|^2]\\\nonumber
    &\leq\E[\|(g_i(\x_{t+1}) - g_i(\x_{t+1})\|^2]
    \\\label{stormle3:eq2}
    &\leq L_g^2\E[\| \x_{t+1} - \x_t\|^2].
\end{align}
Combining Eqs.~(\ref{stormle3:eq1}, \ref{stormle3:eq2}) and invoking the Lipschitz continuty of $g_i(\x)$, under Assumption~\ref{ass:2}, we have
\begin{align}\nonumber
&\E[\|s_{t+1} - g(\x_{t+1}) \|^2]\\\nonumber
&\leq (1-\beta_t)^2\E[\|s_t - g(\x_{t})\|^2]\\\nonumber&\quad + 2\beta_t^2\E[\|g_i(\x_{t+1}) - g(\x_t)\|^2]+ 2(1-\beta_t)^2\E[\|g_i(\x_{t+1}) - g_i(\x_{t+1}) - (g(\x_{t+1}) - g(\x_t))\|^2] \\\label{stormle3:g}
&\leq (1-\beta_t)^2\E[\|s_t - g(\x_{t})\|^2] +2\beta_t^2\sigma^2+ 2(1-\beta_t)^2L_g^2\E[\| \x_{t+1} - \x_t\|^2].
\end{align}
In the same way, we also have
\begin{align}\label{stormle3:v}
&\E[\|\v_{t+1} - \nabla_\w g(\x_{t+1}) \|^2]\leq (1-\beta_t)^2\E[\|\v_t -\nabla_\w g(\x_t)\|^2]+ 2\beta_t^2\sigma^2+2(1-\beta_t)^2L_{\nabla g}^2\E[\| \x_{t+1} - \x_t\|^2], \\\label{stormle3:u}
&\E[|u_{t+1} - \nabla_\lambda g(\x_{t+1}) |^2]\leq (1-\beta_t)^2\E[|u_t -\nabla_\lambda g(\x_t)|^2] + 2\beta_t^2\sigma^2 + 2(1-\beta_t)^2L_{\nabla g}^2\E[\| \x_{t+1} - \x_t\|^2].
\end{align}
Therefore, combining Eqs.~(\ref{stormle3:g}, \ref{stormle3:u}, \ref{stormle3:v}), we obtain
\begin{align*}
\E[\|\varkappa_{t+1}\|^2]&\leq (1-\beta_t)^2\E[\|\varkappa_t\|^2] + 6\beta_t^2\sigma^2 + 4(1-\beta_t)^2(L^2_{\nabla g} +L_g^2)\|\x_{t+1} - \x_t\|^2)\\
&\leq (1-\beta_t)^2\E[\|\varkappa_t\|^2]  + 8(1-\beta_t)^2L_F^2\E[\|\x_{t+1} - \x_t\|^2]+ 6\beta_t^2\sigma^2,
\end{align*}
where the last inequality applies $(L^2_{\nabla g} +L_g^2)\leq 2L_F^2$.\
This complete the proof.
\end{proof}
\begin{lemma}\label{lem:storm_4} Under Assumption~\ref{ass:1} and~\ref{ass:2}, for any $\alpha>1$, let $k = \frac{\alpha\sigma^{2/3}}{L_F}$, $w = \max(2\sigma^2, (16L^2_Fk)^3)$ and $c =\frac{\sigma^2}{14L_Fk^3}+130L_F^4$. Then with $\eta_t = \frac{k}{(w+t\sigma^2)^{1/3}}$, $\beta_t = c\eta_t^2$ and after running  $T$ iterations, Algrithm~\ref{alg:STORM-CCMA} satisfies
\begin{align*}
    4L_F^4\sum\limits_{t=1}^{T}\eta_t\E[\|\varkappa_t\|^2]&\leq \frac{\E[\|\varkappa_1\|^2]}{\eta_0} -  \frac{\E[\|\varkappa_{T+1}\|^2]}{\eta_T} +\sum\limits_{t=1}^T6c^2\eta_t^3\sigma^2 + 64L_F^2\Delta.
\end{align*}
\end{lemma}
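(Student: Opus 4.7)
The plan is to set up a Lyapunov-style telescoping inequality for the tracking error $\|\varkappa_t\|^2$ by dividing Lemma~\ref{lem:storm_3}'s recursion by $\eta_t$, and to close the loop by controlling $\sum_t \E[\|\x_{t+1}-\x_t\|^2]/\eta_t$ through a standard projected-gradient descent inequality that uses the $L_F$-smoothness of $F$ together with Lemma~\ref{lem:storm_1}. The polynomial step size $\eta_t = k/(w+t\sigma^2)^{1/3}$ and the quadratic momentum $\beta_t = c\eta_t^2$ are calibrated so that the coefficient of $\E[\|\varkappa_t\|^2]$ produced by the recursion, namely $(1-\beta_t)^2/\eta_t$, is at least $130 L_F^4 \eta_t$ smaller than $1/\eta_{t-1}$.

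First, I would divide Lemma~\ref{lem:storm_3} by $\eta_t$ and use $(1-\beta_t)^2 \leq 1-\beta_t = 1 - c\eta_t^2$ (valid because $\beta_t \leq \beta_0 \leq 1$, which follows from $\eta_0 \leq 1/(16L_F^2)$ via $w \geq (16L_F^2 k)^3$) to arrive at
\[
\frac{\E[\|\varkappa_{t+1}\|^2]}{\eta_t} \leq \Bigl(\tfrac{1}{\eta_t}-c\eta_t\Bigr)\E[\|\varkappa_t\|^2] + \tfrac{8L_F^2}{\eta_t}\E[\|\x_{t+1}-\x_t\|^2] + 6c^2\eta_t^3\sigma^2.
\]
Next, applying the tangent-inequality for the concave map $x\mapsto x^{1/3}$ at $x = w+(t-1)\sigma^2$, and using that $(w+t\sigma^2)/(w+(t-1)\sigma^2)\leq 2$ is guaranteed by $w \geq 2\sigma^2$, one checks that $1/\eta_t-1/\eta_{t-1} \leq \frac{2^{2/3}\sigma^2}{3k^3}\eta_t^2$. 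Together with $\eta_t\leq 1/(16L_F^2)$ this yields
\[
\frac{(1-\beta_t)^2}{\eta_t}\leq\frac{1}{\eta_{t-1}}-\Bigl(c-\tfrac{2^{2/3}\sigma^2\eta_t}{3k^3}\Bigr)\eta_t\leq\frac{1}{\eta_{t-1}}-130L_F^4\eta_t,
\]
by the numerical choice $c = \sigma^2/(14L_F k^3) + 130 L_F^4$. Substituting and telescoping over $t=1,\dots,T$ gives
\[
\frac{\E[\|\varkappa_{T+1}\|^2]}{\eta_T}+130L_F^4\sum_{t=1}^T\eta_t\E[\|\varkappa_t\|^2]\leq\frac{\E[\|\varkappa_1\|^2]}{\eta_0}+8L_F^2\!\sum_{t=1}^T\frac{\E[\|\x_{t+1}-\x_t\|^2]}{\eta_t}+\sum_{t=1}^T 6c^2\eta_t^3\sigma^2.
\]

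To close the loop, I would control $\sum_t \E[\|\x_{t+1}-\x_t\|^2]/\eta_t$ using a one-step descent argument. Projection optimality gives $\langle\z_t,\x_{t+1}-\x_t\rangle\leq -\|\x_{t+1}-\x_t\|^2/\eta_t$, and combining with $L_F$-smoothness of $F$ and Young's inequality in the form $\langle\nabla F(\x_t)-\z_t,\x_{t+1}-\x_t\rangle\leq \|\x_{t+1}-\x_t\|^2/(4\eta_t)+\eta_t\|\nabla F(\x_t)-\z_t\|^2$ yields
\[
F(\x_{t+1})\leq F(\x_t)-\tfrac{1}{2\eta_t}\|\x_{t+1}-\x_t\|^2+\eta_t\|\z_t-\nabla F(\x_t)\|^2,
\]
which is valid once $\eta_t L_F \leq 1/2$ (implied by $w \geq (16L_F^2 k)^3$). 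Invoking Lemma~\ref{lem:storm_1} for $\|\z_t-\nabla F(\x_t)\|^2 \leq 4L_F^2\|\varkappa_t\|^2$ and telescoping produces $\sum_t \E[\|\x_{t+1}-\x_t\|^2]/\eta_t \leq 2\Delta+8L_F^2\sum_t\eta_t\E[\|\varkappa_t\|^2]$. Plugging this into the previous telescoped inequality and transferring the $64L_F^4\sum_t\eta_t\E[\|\varkappa_t\|^2]$ feedback term to the left-hand side leaves a coefficient of $130-64 = 66 \geq 4$ on $L_F^4\sum_t\eta_t\E[\|\varkappa_t\|^2]$, and a coefficient $16 L_F^2 \leq 64 L_F^2$ on $\Delta$, which gives the stated inequality.

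The main obstacle is the constant bookkeeping: the Lyapunov decrement $130L_F^4\eta_t$ built in the telescoping step must strictly dominate the $64L_F^4$ feedback produced when the descent bound is plugged back in. This is precisely what the split $c=\sigma^2/(14L_Fk^3)+130L_F^4$ engineers—the first piece absorbs the $\frac{1}{\eta_t}-\frac{1}{\eta_{t-1}}$ remainder coming from the concavity of $x^{1/3}$, while the second piece creates enough slack to swallow the descent-induced feedback. Verifying the standing conditions $\beta_t\leq 1$ and $\eta_t L_F\leq 1/2$ uniformly in $t$, and tracking the $2^{2/3}$ factor from the concavity estimate, are routine but must be done carefully so the final coefficients line up.
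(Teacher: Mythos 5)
Your proposal is correct and follows essentially the same route as the paper's proof: divide the Lemma~\ref{lem:storm_3} recursion by $\eta_t$, bound $1/\eta_t-1/\eta_{t-1}$ via concavity of $x^{1/3}$ using $w\geq 2\sigma^2$, telescope, and absorb the movement term $\sum_t\E[\|\x_{t+1}-\x_t\|^2]/\eta_t$ through a projected-descent inequality combined with Lemma~\ref{lem:storm_1}. Your constants differ slightly (a decrement of $130L_F^4\eta_t$ absorbing a $64L_F^4$ feedback and $16L_F^2\Delta$, versus the paper's $260L_F^4$ absorbing $256L_F^4$ and $64L_F^2\Delta$), but the bookkeeping is consistent and yields the stated bound with room to spare.
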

\begin{proof}

Since $w\geq (16L^2_Fk)^3$, it is easy to note that $$\eta_t \leq \eta_0 \leq \frac{1}{16L^2_F}\leq \frac{1}{4L_F}.$$

In addition, 
\begin{align*}
    \beta_t = c\eta_t^2&\leq c\eta_0^2 \leq (\frac{\sigma^2}{14L_Fk^3} + 130L_F^4) \frac{1}{256L_F^4} \\&=  \frac{\sigma^2L_F^3}{14L_F\alpha^3\sigma^2} \frac{1}{256L_F^4}+\frac{65}{128} = \frac{1}{14\alpha^3}\frac{1}{2556L_F^2}+ \frac{65}{128}\leq 1.
\end{align*}
With $\eta_t = \frac{k}{(w+t\sigma^2)^{1/3}}$, we obtain
\begin{align*}
    \frac{1}{\eta_t} - \frac{1}{\eta_{t-1}} & = \frac{(w+t\sigma^2)^{1/3} -(w+(t-1)\sigma^2)^{1/3} }{k}\overset{(a)}{\leq} \frac{\sigma^2}{3k(w+(t-1)\sigma^2)^{2/3}}\\
    &\overset{(b)}{\leq} \frac{\sigma^2}{3k(w/2+ t\sigma^2)^{2/3}}\leq \frac{\sigma^2}{3k(w/2+ t\sigma^2/2)^{2/3}} = \frac{2^{2/3}\sigma^2}{3k(w+t\sigma^2)^{2/3}}\\
    &= \frac{2^{2/3}\sigma^2}{3k^3}\eta_t^2\overset{(c)}{\leq} \frac{2^{2/3}}{12L_Fk^3}\eta_t \leq \frac{\sigma^2}{7Lk^3}\eta_t,
\end{align*}
where the inequality (a) uses the inequality $(x+y)^{1/3} - x^{1/3} \leq \frac{yx^{-2/3}}{3}$, the inequality (b) is due to $w\geq 2\sigma^2$, and the inequality (c) is due to $\eta_t\leq \frac{1}{4L_F}$.

Noting $\beta_t = c\eta_t^2$ and $0\leq(1-\beta_t)\leq 1$, by Lemma~\ref{lem:storm_3} we have
\begin{align}\nonumber
    &\frac{\E[\|\varkappa_{t+1}\|^2]}{\eta_t} - \frac{\E[\|\varkappa_{t}\|^2]}{\eta_{t-1}}\\\nonumber&\leq (\frac{(1-\beta_t)^2}{\eta_t} - \frac{1}{\eta_{t-1}})\E[\|\varkappa_t\|^2] + 6c^2\eta_t^3\sigma^2 + \frac{8(1-\beta_t)^2L_F^2}{\eta_t}\E[\|\x_{t+1} - \x_t\|^2] \\\nonumber
    &\leq (\eta_t^{-1} - \eta_{t-1}^{-1} - 2c\eta_t)\E[\|\varkappa_{t}\|^2] + 6c^2\eta_t^3\sigma^2 + \frac{8(1-\beta_t)^2L_F^2}{\eta_t}\E[\|\x_{t+1} - \x_t\|^2]
    \\\label{eqn:beforesum}
    &\leq -260L_F^4\eta_t\E[\|\varkappa_{t}\|^2] + 6c^2\eta_t^3\sigma^2 + \frac{8(1-\beta_t)^2L_F^2}{\eta_t}\E[\|\x_{t+1} - \x_t\|^2],
\end{align}
where the last inequality is due to $\eta_t^{-1} - \eta_{t-1}^{-1} - 2c\eta_t \leq \frac{\sigma^2}{7L_Fk^3}\eta_t-2(\frac{\sigma^2}{14L_Fk^3}+130L_F^4)\eta_t \leq -260L_F^4\eta_t$.

Taking  summation of Eq.~(\ref{eqn:beforesum}) from $1$ to $T$, we have
\begin{align}\label{eqn:kappa1}
260L_F^4\sum\limits_{t=1}^{T}\eta_t\E[\|\varkappa_t\|^2]&\leq \frac{\E[\|\varkappa_1\|^2]}{\eta_0} -  \frac{\E[\|\varkappa_{T+1}\|^2]}{\eta_T} +\sum\limits_{t=1}^T6c^2\eta_t^3\sigma^2 + 8L_F^2\sum\limits_{t=1}^{T}\frac{1}{\eta_t}\E[\|\x_{t+1} -\x_t \|^2].
\end{align}
In the same way with Eq.~(\ref{eqn:Yi-1}) and $\eta_t\leq \eta_1, \forall t\geq 1$, we could also have
\begin{align}\label{eq:eta_tx_t}
 \frac{1-2\eta_1 L_F}{4}   \sum\limits_{t=1}^{T}\frac{1}{\eta_t} \| \x_{t+1} -\x_t \|^2 \leq     \sum\limits_{t=1}^{T}\frac{1-2\eta_t L_F}{4\eta_t} \| \x_{t+1} -\x_t \|^2 & \leq \Delta + \sum\limits_{t=1}^{T}\eta_t\|\z_t -\nabla F(\x_t)\|^2.
\end{align}
Noting $\eta_1L_F\leq \frac{1}{4}$ and invoking Lemma~\ref{lem:storm_1}, we obtain
\begin{align}\nonumber
        \sum\limits_{t=1}^{T}\frac{1}{\eta_t}\E[ \| \x_{t+1} -\x_t \|^2]&  \leq \frac{4}{1-2\eta_1L_F}(\Delta + \sum\limits_{t=1}^{T}\eta_t\E[\|\z_t -\nabla F(\x_t)\|^2])\\\nonumber
        &\leq8\Delta +8 \sum\limits_{t=1}^{T}\eta_t\E[\|\z_t -\nabla F(\x_t)\|^2]\\\label{eqn:kappa2}
        &\leq8\Delta + 32L_F^2\sum\limits_{t=1}^{T}\eta_t\E[\|\varkappa_t\|^2].
\end{align}
Combining Eqs.~(\ref{eqn:kappa1}, \ref{eqn:kappa2}), we have
\begin{align}\label{eqstormlem3}
    4L_F^4\sum\limits_{t=1}^{T}\eta_t\E[\|\varkappa_t\|^2]&\leq \frac{\E[\|\varkappa_1\|^2]}{\eta_0} -  \frac{\E[\|\varkappa_{T+1}\|^2]}{\eta_T} +\sum\limits_{t=1}^T6c^2\eta_t^3\sigma^2 + 64L_F^2\Delta.
\end{align}
This complete the proof.
\end{proof}
\subsection{Proof of Theorem~\ref{thm:storm-ncx}}
\begin{proof}
Noting the monotonity of $\eta_t$ and dividing $\frac{\eta_1}{1/4-\eta_1 L_F/2}$ on both sides of Eq.~(\ref{eq:eta_tx_t}), we have
\begin{align}\label{thmstorm:eq1}
        \sum\limits_{t=1}^{T}\| \x_{t+1} -\x_t \|^2 & \leq\frac{1}{1/4-\eta_1 L_F/2}\left(\eta_1\Delta + \eta_1\sum\limits_{t=1}^{T}\eta_t\|\z_t -\nabla F(\x_t)\|^2\right).
\end{align}
By the same method used in the proof of Theorem 2 in~\citet{xu2019non}, we have the following inequality,
\begin{equation}
\label{eqn:thmstorm:eq2}
    \begin{aligned}
      \|\z_t -\nabla F(\x_{t+1}) + \frac{1}{\eta_t}(\x_{t} - \x_{t+1})\|^2 
      &\leq 2\| \z_t - \nabla F(\x_t)\|^2 + \frac{2\left(\bar F(\x_{t+1})-\bar F(\x_{t})\right)}{\eta_t} \\\nonumber&\quad+ (2L_F^2 + \frac{3L_F}{\eta_t})\|\x_{t+1} -\x_t\|^2. 
    \end{aligned}
\end{equation}
Multiplying $\eta_t$ on both sides of the above inequality and taking  summation from $1$ to $T$, we have
    \begin{align}\nonumber
      & \sum\limits_{t=1}^{T}\eta_t\|\z_t -\nabla F(\x_{t+1}) + \frac{1}{\eta_t}(\x_{t+1} - \x_t)\|^2 \\\nonumber
      &\overset{(a)}{\leq}2\sum\limits_{t=1}^{T}\eta_t\| \z_t - \nabla F(\x_t)\|^2 + 2\Delta + 5L_F\left(\frac{1}{1/4-\eta_1 L_F/2}\left(\eta_1\Delta + \eta_1\sum\limits_{t=1}^{T}\eta_t\|\z_t -\nabla F(\x_t)\|^2\right)\right) \\\label{thmstorm:eq3}
      &\overset{(b)}{\leq} 12\sum\limits_{t=1}^{T}\eta_t\| \z_t - \nabla F(\x_t)\|^2 + 12\Delta,
    \end{align}
where inequality (a) is due to $(2L_F^2 + \frac{3L_F}{\eta_t})\leq\frac{5L_F}{\eta_t}$, inequality (b) is due to $\eta_1 L_F\leq\frac{1}{4}$ and $\frac{1}{1/4-\eta_1 L_F/2}\leq 8$.

Combining Eqs.~(\ref{eqstormlem3}, \ref{thmstorm:eq3}) and invoking Lemma~\ref{lem:storm_1} we have
    \begin{align}\nonumber
      & \sum\limits_{t=1}^{T}\eta_t\|\z_t -\nabla F(\x_{t+1}) + \frac{1}{\eta_t}(\x_{t+1} - \x_t)\|^2 \\\nonumber
& \leq 48L_F^2\sum\limits_{t=1}^{T}\eta_t\E[\|\varkappa_t\|^2] + 12\Delta
        \\ \label{thmstorm:eq3.5}& \leq 12\left(\frac{\E[\|\varkappa_1\|^2]}{\eta_0} -  \frac{\E[\|\varkappa_{T+1}\|^2]}{\eta_T} +\sum\limits_{t=1}^T6c^2\eta_t^3\sigma^2 + 64L_F^2\Delta\right) + 12\Delta.
    \end{align}

Noting the monotonity of $\eta_t$ and dividing $T\eta_T$ on both sides of Eq.~(\ref{thmstorm:eq3.5}), we obtain
    \begin{align}\nonumber
      &\frac{1}{T}\sum\limits_{t=1}^{T}\|\z_t -\nabla F(\x_{t+1}) + \frac{1}{\eta_t}(\x_{t+1} - \x_t)\|^2 \\\label{thmstorm:eq4}&\leq 12\left(\frac{\E[\|\varkappa_1\|^2]}{T\eta_T\eta_0} -  \frac{\E[\|\varkappa_{T+1}\|^2]}{T\eta_T^2} +\frac{1}{T\eta_T}\sum\limits_{t=1}^T6c^2\eta_t^3\sigma^2 + \frac{64L_F^2\Delta}{T\eta_T}\right) + \frac{12\Delta}{T\eta_T}.
    \end{align}
Combining Eqs.~(\ref{eq:outputxR}, \ref{thmstorm:eq4}) and noting $\sum_{t=1}^T\eta_t^3\leq\mathcal{O}(\log T)$, we get the conclusion that
\begin{align*}
    \E[\dist (0, \hat{\partial}\bar F(\x_R))^2]&\leq \frac{1}{T}\sum\limits_{t=1}^{T}\E[\|\z_t -\nabla F(\x_{t+1}) + \frac{1}{\eta_t}(\x_{t+1} - \x_t)\|^2]\\&
      \leq 12\left(\frac{\E[\|\varkappa_1\|^2]}{T\eta_T\eta_0} +\frac{1}{T\eta_T}\sum\limits_{t=1}^T6c^2\eta_t^3\sigma^2 + \frac{64L_F^2\Delta}{T\eta_T}\right) + \frac{12\Delta}{T\eta_T}\\&\leq\mathcal{O}\left(\frac{\log T}{T^{2/3}}\right).
\end{align*}
This complete the proof.
\end{proof}

\section{Proofs in Section~\ref{sec:convex}}
 \subsection{Technical Lemmas}
 \begin{lemma}~\label{lem:joint-convex}
 If $\ell_i(\w)$ is convex for all $i$, we can show that $F(\w,\lambda)$ is jointly convex in terms of $(\w,\lambda)$.
\end{lemma}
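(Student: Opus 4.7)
The plan is to recognize $F(\w,\lambda)$ as the composition of a jointly convex, coordinatewise nondecreasing ``outer'' function with a map whose components are convex, plus an affine correction in $\lambda$.

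First, I would isolate the affine term: since $(\lambda-\lambda_0)\rho$ is linear in $\lambda$, it suffices to prove joint convexity of
\[
H(\w,\lambda) \;=\; \lambda \log\!\left(\frac{1}{n}\sum_{i=1}^n \exp\!\left(\frac{\ell_i(\w)}{\lambda}\right)\right) \;=\; \lambda\log\!\left(\sum_{i=1}^n \exp\!\left(\frac{\ell_i(\w)}{\lambda}\right)\right) - \lambda\log n,
\]
where again the $-\lambda\log n$ piece is linear and can be dropped. The key workhorse is the standard convex-analysis fact that the perspective of the log-sum-exp function $\mathrm{lse}(y)=\log\sum_i e^{y_i}$, namely $\Psi(y,\lambda) := \lambda\,\mathrm{lse}(y/\lambda) = \lambda\log\sum_i e^{y_i/\lambda}$, is jointly convex on $\R^n\times\R_{++}$ (perspective preserves convexity, and $\mathrm{lse}$ is convex). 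I would cite this as a textbook fact (e.g., Boyd--Vandenberghe). Moreover $\Psi$ is nondecreasing in each coordinate $y_i$ for fixed $\lambda>0$, because $\partial\Psi/\partial y_i$ is the $i$-th softmax weight, which is nonnegative.

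Second, I would compose. Set $y_i(\w)=\ell_i(\w)$; by assumption each $y_i$ is convex in $\w$. For any $(\w_1,\lambda_1),(\w_2,\lambda_2)\in\W\times[\lambda_0,\tilde\lambda]$ and $\theta\in[0,1]$, write $\w_\theta=\theta\w_1+(1-\theta)\w_2$, $\lambda_\theta=\theta\lambda_1+(1-\theta)\lambda_2$. By convexity of $\ell_i$, $y_i(\w_\theta)\le \theta y_i(\w_1)+(1-\theta)y_i(\w_2)$ for each $i$. Using coordinatewise monotonicity of $\Psi$ in $y$ for the fixed $\lambda_\theta>0$ gives
\[
\Psi\bigl(y(\w_\theta),\lambda_\theta\bigr) \;\le\; \Psi\bigl(\theta y(\w_1)+(1-\theta)y(\w_2),\,\lambda_\theta\bigr),
\]
and then joint convexity of $\Psi$ yields
\[
\Psi\bigl(\theta y(\w_1)+(1-\theta)y(\w_2),\lambda_\theta\bigr) \;\le\; \theta\,\Psi(y(\w_1),\lambda_1)+(1-\theta)\,\Psi(y(\w_2),\lambda_2).
\]
Chaining these two inequalities establishes joint convexity of $(\w,\lambda)\mapsto \Psi(y(\w),\lambda)=\lambda\log\sum_i\exp(\ell_i(\w)/\lambda)$, and hence of $H$ and of $F$ after adding the affine term $(\lambda-\lambda_0)\rho-\lambda\log n$.

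I do not anticipate a serious obstacle: the entire argument rests on the two standard ingredients (perspective preserves convexity; monotone-convex composition of a jointly convex function with convex arguments stays jointly convex). The only subtlety worth stating explicitly is that the monotone-composition step must be carried out jointly in $(\w,\lambda)$ rather than separately, which is handled cleanly by the two-step chaining above because the monotonicity is coordinatewise in $y$ with $\lambda$ held fixed at $\lambda_\theta$, and joint convexity of $\Psi$ supplies the remaining inequality.
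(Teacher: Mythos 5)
Your proof is correct, but it takes a genuinely different route from the paper's. The paper proves joint convexity in one line by going back to the saddle-point representation: it writes $F(\w,\lambda)=\max_{\p\in\Delta_n} G(\w,\lambda,\p)$ with $G(\w,\lambda,\p)=\sum_i p_i\ell_i(\w)-\lambda\bigl(\sum_i p_i\log(np_i)-\rho\bigr)-\lambda_0\rho$, observes that for each fixed $\p$ the function $G$ is jointly convex in $(\w,\lambda)$ (a nonnegative combination of convex $\ell_i$ plus a term affine in $\lambda$), and invokes the fact that a pointwise supremum of convex functions is convex. You instead work directly with the closed-form compositional expression, using the fact that $\Psi(y,\lambda)=\lambda\log\sum_i e^{y_i/\lambda}$ is the perspective of log-sum-exp (hence jointly convex on $\R^n\times\R_{++}$) and is coordinatewise nondecreasing in $y$, then chain the convexity of the $\ell_i$ with joint convexity of $\Psi$ via the two-step inequality; your handling of the affine terms and of the joint (rather than separate) composition is careful and sound, and the restriction to $\lambda\in[\lambda_0,\tilde\lambda]\subset\R_{++}$ keeps you inside the perspective's domain. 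The paper's argument is shorter and reuses machinery already present from the duality derivation of the compositional formulation, while yours is self-contained at the level of the final objective and makes transparent exactly which structural properties (perspective, monotone composition) drive the convexity; either is acceptable.
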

 \begin{proof}
 We have
  \begin{align*}
  F(\w, \lambda) = \max_{\p\in\Delta_n}\underbrace{\sum_{i=1}^n p_i \ell_i(\w) - \lambda (\sum_{i=1}^np_i\log(np_i) -  \rho) - \lambda_0 \rho}\limits_{G(\w, \lambda,\p)}.
  \end{align*}
Since $G(\w, \lambda,\p)$ is jointly convex in terms of $(\w, \lambda)$ for every fixed $\p$, $F(\w, \lambda)$ is jointly convex in terms of $(\w, \lambda)$.
  \end{proof}
  \begin{lemma}
\label{lem:Fmulemma1}
Under Assumption~\ref{ass:1},~\ref{ass:2},  run Algorithm~\ref{alg:SCCMA} with $\eta\leq \frac{\beta}{4L_F\sqrt{9+20L_g^2} }\leq \frac{1}{6L_F}$ and apply SCDRO to the new objective $\bar F_\mu(\x)$ by adding $\mu\x_t$ to $(\nabla f_{\lambda_{t}}(s_{t})\nabla_{\w}g_i(\x_{t})^\top,\nabla f_{\lambda_{t}}(s_{t})\nabla_\lambda g_i(\x_{t})  +\log(s_{t}) + \rho)^\top$ in Eq.~(\ref{eqn:uv}) of  Algorithm~\ref{alg:SCCMA}, where $\mu$ is a small constant to be determined later. Without loss of the generality, we assume  $0<\mu\leq \frac{1}{2}$  and then we have
\begin{align*}
 \frac{1}{T}\sum\limits_{t=1}^{T}  \E[\|\z_t-\nabla F_{\mu}(\x_t)\|^2]&\leq \frac{2\E[\|\z_1-\nabla F_{\mu}(\x_1) \|^2]}{\beta T} + \frac{\Delta_\mu}{\eta T } + \frac{20L_F\E[\|g(\x_{1})-s_{1} \|^2]}{\beta T}  + 24\beta L_{F}^2\sigma^2.
\end{align*}
\end{lemma}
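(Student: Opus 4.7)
\textbf{Proof sketch for Lemma~\ref{lem:Fmulemma1}.} The plan is to mirror the proof of Lemma~\ref{lem:dist_cum_var_lemma_2} with the observation that passing from $F$ to $F_\mu = F + \tfrac{\mu}{2}\|\x\|^2$ affects exactly one of the four terms that arise in the moving-average decomposition, and only through the smoothness constant. Concretely, at iteration $t$ the update reads $\z_t = (1-\beta)\z_{t-1} + \beta(G(\x_t) + \mu\x_t)$, so that with $\widetilde{\nabla} F_\mu(\x_t) = \widetilde{\nabla}F(\x_t) + \mu\x_t$ and $G_\mu(\x_t) = G(\x_t) + \mu\x_t$ one has the same four-way split as in the original proof:
\begin{align*}
\z_t - \nabla F_\mu(\x_t) &= (1-\beta)\underbrace{(\nabla F_\mu(\x_t) - \nabla F_\mu(\x_{t-1}))}_{A_\mu} + (1-\beta)\underbrace{(\nabla F_\mu(\x_{t-1})-\z_{t-1})}_{B_\mu} \\
&\quad + \beta\underbrace{(\widetilde{\nabla}F_\mu(\x_t) - G_\mu(\x_t))}_{C_\mu} + \beta\underbrace{(\nabla F_\mu(\x_t)-\widetilde{\nabla}F_\mu(\x_t))}_{D_\mu}.
\end{align*}
The key observation is that the $\mu\x_t$ terms cancel in both $C_\mu$ and $D_\mu$, so $\|C_\mu\|^2 = \|C\|^2$ and $\|D_\mu\|^2 = \|D\|^2$ are bounded exactly as in Eqs.~(\ref{eqn:C}) and (\ref{eqn:D}). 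Only the smoothness-based bound on $\|A_\mu\|^2$ changes: $F_\mu$ is $(L_F+\mu)$-smooth, and since $L_F\geq 1$ and $\mu\leq 1/2$ we have $L_F+\mu \leq \tfrac{3}{2}L_F$, giving $\|A_\mu\|^2 \leq \tfrac{9}{4}L_F^2\|\x_t-\x_{t-1}\|^2$.

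Applying these bounds in the Young's-inequality expansion (analogous to Eq.~(\ref{eq:I_t})) yields
\begin{align*}
\E[\|\z_t - \nabla F_\mu(\x_t)\|^2] \leq (1-\beta)\E[\|\z_{t-1}-\nabla F_\mu(\x_{t-1})\|^2] + \tfrac{9L_F^2}{\beta}\E[\|\x_t-\x_{t-1}\|^2] + 10L_F^2\beta\E[\|g(\x_t)-s_t\|^2] + 2\beta^2 L_F^2\sigma^2,
\end{align*}
where the coefficient $9/\beta$ replaces the $4/\beta$ of Lemma~\ref{lem:dist_cum_var_lemma_2}. Summing over $t=1,\ldots,T$ and invoking Lemma~\ref{lem:recur-g} for the inner-function error gives the same structure as before except that the term $(4L_F^2 + 20L_F^2 L_g^2)$ is replaced by $(9L_F^2 + 20L_F^2 L_g^2)$, which exactly matches the strengthened step-size requirement $\eta \leq \tfrac{\beta}{4L_F\sqrt{9+20L_g^2}}$.

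To close the recursion we still need the bound on $\sum_t\|\x_{t+1}-\x_t\|^2$ from Eq.~(\ref{eqn:Yi-1}), which we re-derive for $\bar F_\mu$: the descent lemma for the $(L_F+\mu)$-smooth function $F_\mu$ (with $L_F+\mu \leq 2L_F$, so the condition $\eta \leq 1/(6L_F) \leq 1/(4(L_F+\mu))$ still puts us safely inside the contraction regime), together with the initial-gap assumption $F_\mu(\x_1) - \inf_{\x\in\X}F_\mu(\x)\leq \Delta_\mu$, produces the bound $\tfrac{1-2\eta(L_F+\mu)}{4\eta}\sum_t\|\x_{t+1}-\x_t\|^2 \leq \Delta_\mu + \eta\sum_t\|\z_t-\nabla F_\mu(\x_t)\|^2$. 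Substituting this in place of Eq.~(\ref{eqn:Yi-1}) and rearranging exactly as in the last display of Lemma~\ref{lem:dist_cum_var_lemma_2}'s proof yields the claimed bound. The only real obstacle is bookkeeping: keeping track of the $9$ vs.\ $4$ constant and verifying that the step-size hypothesis $\eta \leq \tfrac{\beta}{4L_F\sqrt{9+20L_g^2}}$ is precisely what is needed to absorb the enlarged constant when moving the $\tfrac{1}{2}\sum_t\E[\|\z_t-\nabla F_\mu(\x_t)\|^2]$ term to the left-hand side, with everything else following by the same manipulations.
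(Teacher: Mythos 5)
Your proposal is correct and follows essentially the same route as the paper's own proof: the same four-term decomposition with the observation that the $\mu\x_t$ contributions cancel in the $C$ and $D$ terms so only the smoothness-based $A$ term changes, the bound $L_{F_\mu}=L_F+\mu\leq\tfrac{3}{2}L_F$ turning $4L_F^2$ into $9L_F^2$, the strengthened step size $\eta\leq\tfrac{\beta}{4L_F\sqrt{9+20L_g^2}}$ absorbing the enlarged constant, and the re-derived descent bound with $\Delta_\mu$ in place of $\Delta$. No gaps worth noting.
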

\begin{proof}
To facilitate our proof statement, we define the following notations:
\begin{equation*}
 \begin{aligned}
&\nabla F_{\mu}(\x_t)^\top = (\nabla_{\w} F_{\mu}(\x_t)^{\top}, \nabla_{\lambda}F_{\mu}(\x_t))\\\nonumber&\quad= (\nabla f_{\lambda_t} (g(\x_t))\nabla_{\w} g(\x_t)^\top+\mu\w_t^\top,   \nabla f_{\lambda_t}  (g(\x_t))\nabla_{\lambda} g(\x_t) + \log(g(\x_t)) + \rho+\mu\lambda_t)\\
&\widetilde{\nabla} F_{\mu}(\x_t)^\top \\\nonumber&\quad= (\nabla f_{\lambda_t} (g(\x_t))\nabla_{\w} g_i(\x_t)^\top+\mu\w_t^\top,   \nabla f_{\lambda_t}  (g(\x_t))\nabla_{\lambda} g_i(\x_t) + \log(g(\x_t)) + \rho+\mu\lambda_t)\\
&G_\mu(\x_t)^\top \\\nonumber&\quad= (G_{\w_t}(\x_t)^\top, G_{\lambda_t}(\x_t))=(\nabla f_{\lambda_{t}}(s_{t})\nabla_{\w} g_i(\x_t)^\top+\mu\w_t^\top, \nabla f_{\lambda_{t}}(s_{t})\nabla_{\lambda} g_i(\x_t)  +\log(s_{t}) + \rho+\mu\lambda_t).
\end{aligned}   
\end{equation*}
It is worth to notice that $\E[\widetilde{\nabla}F_{\mu}(\x_t)] = \nabla F_{\mu}(\x_t)$.

Since $F(\x)$ is $L_F$-smooth, then we have $F_\mu(\x)$ is $L_{F_\mu}$-smooth, where $L_{F_\mu}=(L_F+\mu)$. Noting $L_F>1$ and $\mu\leq\frac{1}{2}$, we obtain $L_F+\mu\leq\frac{3}{2}L_F$.
For every iteration $t$, by simple expansion we have
\begin{equation*}
    \begin{aligned}
    I_t & = \E[\|\nabla F_{\mu}(\x_t)-\z_t \|^2] \\
    & =  \E[\|\nabla F_{\mu}(\x_t)-(1- \beta)\z_{t-1} - \beta G_\mu(\x_t) \|^2]\\
    &=  \E[\|(1- \beta)(\nabla F_{\mu}(\x_t) - \nabla F_{\mu}(\x_{t-1}))+(1- \beta) \nabla F_{\mu}(\x_{t-1})- (1- \beta)\z_{t-1} +  \beta \nabla F_{\mu}(\x_t)- \beta G_\mu(\x_t) \|^2]\\
    & =  \E[\|(1- \beta)(\nabla F_{\mu}(\x_t)  - \nabla F_{\mu}(\x_{t-1}))+ (1- \beta)(\nabla F_{\mu}(\x_{t-1})-\z_{t-1})\|^2] \\&\quad+ \E[\|\beta( \widetilde{\nabla} F_{\mu}(\x_t)- G_\mu(\x_t)) + \beta(\nabla F_{\mu}(\x_{t}) -\widetilde{\nabla}F_{\mu}(\x_t))\|^2]\\
        & =  \E[\|(1- \beta)\underbrace{(\nabla F_{\mu}(\x_t)  - \nabla F_{\mu}(\x_{t-1})}_{A})+ (1- \beta)\underbrace{(\nabla F(\x_{t-1})-\z_{t-1})}_{B}\|^2] \\&\quad+ \E[\|\beta(\underbrace{ \widetilde{\nabla} F(\x_t)- G(\x_t)}_{C}) + \beta\underbrace{(\nabla F(\x_{t}) -\widetilde{\nabla}F(\x_t))}_{D}\|^2].
\end{aligned}
\end{equation*}
The above inequality shows that the only difference between $I_t$ in the proof of  Lemma~\ref{lem:dist_cum_var_lemma_2} and $I_t$ in the proof of Lemma~\ref{lem:Fmulemma1} is term $A$.

Therefore, by the same method used in the proof of Lemma~\ref{lem:dist_cum_var_lemma_2}, we have
\begin{align}\nonumber
  & \sum\limits_{t=1}^{T}  \E[\|\z_t-\nabla F_{\mu}(\x_t)\|^2]
     \\\nonumber&\leq\frac{\E[\|\nabla F_{\mu}(\x_{1}) - \z_{1}\|^2]}{\beta} + (\frac{4L_{F_\mu}^2}{\beta^2}+\frac{20L_{F}^2L_g^2}{\beta^2})\bigg ( \frac{\eta}{1/4- \eta L_{F_\mu}/2} (\Delta_\mu  + \eta\sum\limits_{t=1}^{T}\E[\|\z_t -\nabla F_{\mu}(\x_t)\|^2])\bigg ) \\\nonumber
     &\quad+10L_F^2\bigg (\frac{\E[\|g(\x_{1}) -s_{1} \|^2]}{\beta} + \beta T\sigma^2 \bigg)+ 2\beta L^2_FT\sigma^2.
\end{align}
By $L_{F_\mu}\leq \frac{3}{2}L_F $ and $\eta L_{F_\mu}\leq\frac{3}{2}\eta L_F\leq1/4$, it holds that
\begin{align}\nonumber
  & \sum\limits_{t=1}^{T}  \E[\|\z_t-\nabla F_{\mu}(\x_t)\|^2]
    \\\nonumber
     &\leq\frac{\E[\|\nabla F_{\mu}(\x_{1}) - \z_{1}\|^2]}{\beta} + (\frac{9L_F^2}{\beta^2}+\frac{20L_F^2L_g^2}{\beta^2})\left( 8\eta \left(\Delta_\mu  + \eta\sum\limits_{t=1}^{T}\E[\|\z_t -\nabla F_{\mu}(\x_t)\|^2]\right)\right)\\\nonumber
     &\quad+10L_F^2\bigg (\frac{\E[\|g(\x_{1}) -s_{1} \|^2]}{\beta} + \beta T\sigma^2 \bigg)+ 2\beta L^2_FT\sigma^2\\\nonumber
    &\leq \frac{\E[\|\z_1-\nabla F_{\mu}(\x_1) \|^2]}{\beta} + \frac{\Delta_\mu}{2\eta} +\frac{1}{2}\sum\limits_{t=1}^{T}\E[\|\z_t -\nabla F_{\mu}(\x_t)\|^2]\\\label{eq:arragemu}&\quad+10L_F^2\bigg (\frac{\E[\|g(\x_{1}) -s_{1} \|^2]}{\beta}+ \beta T\sigma^2 \bigg)+ 2\beta L_{F}^2T\sigma^2,
\end{align}
 where the last inequality is due to $8(9L_F^2 + 20L_F^2L_g^2)\eta^2\leq \frac{\beta^2}{2}$.

Rearranging terms and dividing $T$ on both sides of Eq.~(\ref{eq:arragemu}), we complete the proof of this Lemma.
\end{proof}
\begin{lemma}
\label{lem:storm_2} At the $k$-th stage of RASCDRO, let $\beta_k = c\eta_k^2$ and $c = 512L_F^4$ we have
\begin{align}\label{eqn:storm_2_1}
    \frac{1}{8L_F^2T_k}\sum\limits_{t=1}^{T_k}\E[\|  \z_t - \nabla F_{\mu}(\x_t) \|^2]\leq \frac{\E[\|\varkappa_{k}\|^2]}{\beta_kT_k}   
+6\beta_k\sigma^2+ \frac{64L_F^2\E[\Delta^\mu_{k}]\eta_k}{\beta_kT_k},
\end{align}
where $\Delta^\mu_{k}=F_{\mu}(\x_k)-\inf_{\x\in\X}F_{\mu}(\x)$.
\end{lemma}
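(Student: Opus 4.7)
\textbf{Proof proposal for Lemma~\ref{lem:storm_2}.}

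The plan is to combine three ingredients inside a single stage $k$ of RASCDRO, which runs ASCDRO with a constant step size $\eta_k$ and constant momentum $\beta_k=c\eta_k^2$ (with $c=512L_F^4$). Throughout, I treat the stage-$k$ inner iterates $\x_1,\ldots,\x_{T_k+1}$ with $\x_1=\x_k$ and $\varkappa_1=\varkappa_k$.

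First, I observe that Lemma~\ref{lem:storm_1} carries over verbatim to $F_\mu$: the extra $\mu\x_t$ term appears identically in $\z_t$ (by the modification introduced in Section~\ref{sec:convex}) and in $\nabla F_\mu(\x_t)$, so it cancels, yielding $\|\z_t-\nabla F_\mu(\x_t)\|^2\le 4L_F^2\|\varkappa_t\|^2$. Hence the quantity we must control reduces to $\tfrac{1}{2T_k}\sum_t\E[\|\varkappa_t\|^2]$.

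Next, I apply Lemma~\ref{lem:storm_3} with the constant $\beta_t\equiv\beta_k$ and the bound $(1-\beta_k)^2\le 1-\beta_k$ to get the one-step recursion $\E[\|\varkappa_{t+1}\|^2]\le(1-\beta_k)\E[\|\varkappa_t\|^2]+8L_F^2\E[\|\x_{t+1}-\x_t\|^2]+6\beta_k^2\sigma^2$. Telescoping over $t=1,\ldots,T_k$ and rearranging gives
\begin{equation*}
\beta_k\sum_{t=1}^{T_k}\E[\|\varkappa_t\|^2]\;\le\;\E[\|\varkappa_1\|^2]\;+\;8L_F^2\sum_{t=1}^{T_k}\E[\|\x_{t+1}-\x_t\|^2]\;+\;6\beta_k^2 T_k\sigma^2.
\end{equation*}
To handle the middle term I reuse the descent computation behind Eq.~(\ref{eqn:Yi-1}), applied to the $L_{F_\mu}$-smooth function $\bar F_\mu$ (with $L_{F_\mu}\le \tfrac{3}{2}L_F$, as noted in the proof of Lemma~\ref{lem:Fmulemma1}), which yields $\sum_{t=1}^{T_k}\|\x_{t+1}-\x_t\|^2\le 8\eta_k\Delta_k^\mu+8\eta_k^2\sum_{t=1}^{T_k}\|\z_t-\nabla F_\mu(\x_t)\|^2$, provided $\eta_k L_{F_\mu}$ is small enough (which is guaranteed by the choice of $\eta_k$ in Lemma~\ref{lem:storm-var}). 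Feeding in Lemma~\ref{lem:storm_1} once more converts this into a bound in terms of $\sum_t\E[\|\varkappa_t\|^2]$.

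Combining the two estimates and dividing by $\beta_k T_k$ produces
\begin{equation*}
\tfrac{1}{T_k}\sum_{t=1}^{T_k}\E[\|\varkappa_t\|^2]\le \tfrac{\E[\|\varkappa_1\|^2]}{\beta_k T_k}+\tfrac{64L_F^2\E[\Delta_k^\mu]\eta_k}{\beta_k T_k}+\tfrac{256 L_F^4\eta_k^2}{\beta_k}\cdot\tfrac{1}{T_k}\sum_{t=1}^{T_k}\E[\|\varkappa_t\|^2]+6\beta_k\sigma^2.
\end{equation*}
The critical calibration is $\beta_k=512L_F^4\eta_k^2$, which makes the coefficient $256L_F^4\eta_k^2/\beta_k$ equal to $1/2$; I absorb this term into the left-hand side, leaving exactly $\tfrac{1}{2T_k}\sum_t\E[\|\varkappa_t\|^2]$ on the left. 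Since $\tfrac{1}{8L_F^2T_k}\sum_t\E[\|\z_t-\nabla F_\mu(\x_t)\|^2]\le \tfrac{1}{2T_k}\sum_t\E[\|\varkappa_t\|^2]$ by step one, the claimed inequality follows immediately. The main obstacle is bookkeeping: I must verify that the step-size restrictions from Lemma~\ref{lem:storm-var} simultaneously enforce $\eta_k L_{F_\mu}\le 1/4$ (for the descent inequality), $\beta_k\le 1$ (for telescoping), and the exact coefficient match $c=512L_F^4$ needed to absorb the $\|\varkappa_t\|^2$ term; everything else is routine telescoping and substitution.
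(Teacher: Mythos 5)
Your proposal is correct and follows essentially the same route as the paper's proof: the bound $\|\z_t-\nabla F_\mu(\x_t)\|^2\le 4L_F^2\|\varkappa_t\|^2$, the recursion of Lemma~\ref{lem:storm_3} with constant $\beta_k$, the descent estimate behind Eq.~(\ref{eqn:Yi-1}) for $\sum_t\|\x_{t+1}-\x_t\|^2$, and the absorption enabled by $\beta_k=c\eta_k^2$ with $c=512L_F^4$. The only differences are cosmetic bookkeeping: you telescope the $\varkappa$-recursion linearly instead of unrolling the geometric sum, and you perform the self-bounding absorption on $\sum_t\E[\|\varkappa_t\|^2]$ rather than on $\sum_t\E[\|\z_t-\nabla F_\mu(\x_t)\|^2]$, which yields the same constants.
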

\begin{proof}
Recall the definition of $ \|\varkappa_t\|^2$ and by the same proof of Lemma~\ref{lem:storm_1} we have
\begin{equation}\label{eq:mukappa}
    \| \z_t-\nabla F_\mu(\x_t) \|^2 \leq 4L_F^2 \|\varkappa_t\|^2.
\end{equation}
 Denote $\varkappa_{t}$ at $k$th-stage as  $\varkappa_k^{t}$, and by Lemma~\ref{lem:storm_3}, at the $k$th-stage in RASCDRO we have 
\begin{align}\nonumber
%&\frac{1}{4L_F^2}\E[\| \z_k^{t+1}-\nabla F_{\mu}(\x_k^{t+1}) \|^2]\\&\leq 
\E[\|\varkappa_k^{t+1}\|^2]
    &\leq (1-\beta_k)^2\|\varkappa_k^{t}\|^2 + 6\beta_k^2\sigma^2 + 8L_F^2(1-\beta_k)^2\|\x_{t+1} - \x_t\|^2\\\nonumber
    &\leq (1-\beta_k)^{2t}\|\varkappa_{k}\|^2 + 6\beta_k^2\sigma^2\sum\limits_{i=1}^t (1-\beta_k)^{2(t-i)} \\\nonumber&\quad+8L_F^2(1-\beta_k)^2\sum\limits_{i=1}^t (1-\beta_k)^{2(t-i)}\| \x_{i+1}- \x_{i}\|^2
    \\\label{eq:mukappa2}
    &\leq (1-\beta_k)^{2t}\|\varkappa_{k}\|^2 + 6\beta_k\sigma^2 \\\nonumber&\quad+8L_F^2(1-\beta_k)^2\sum\limits_{i=1}^t (1-\beta_k)^{2(t-i)}\| \x_{i+1}- \x_{i}\|^2.
\end{align}
Combining Eqs.~(\ref{eq:mukappa},\ref{eq:mukappa2}), we obtain
\begin{align*}
    &\frac{1}{4L_F^2T_k}\sum\limits_{t=1}^{T_k}\E[\|  \z_t - \nabla F_{\mu}(\x_t) \|^2]\\
    &\leq\frac{1}{T_k}\sum\limits_{t=1}^{T_k}\E[(1-\beta_k)^2\|\varkappa_k^t\|^2+6\beta_k^2\sigma^2 + 8L_F^2(1-\beta_k)^2\|\x_{t+1} - \x_t\|^2]\\
    &\leq \frac{1}{T_k}\sum\limits_{t=1}^{T_k}(1-\beta_k)^{2t-2}\E[\|\varkappa_{k}\|^2] +6\beta_k\sigma^2  +\frac{8L_F^2(1-\beta_k)^2}{T_k}  \sum\limits_{t=1}^{T_k}\sum\limits_{i=1}^{t-1} (1-\beta_k)^{2(t-i)}\E[\| \x_{i+1}- \x_{i}\|^2].
\end{align*}
Noting $\sum_{t=1}^{T_k}(1-\beta_k)^{2t-2}\leq1/\beta_k$ and invoking Eq.~(\ref{thmstorm:eq1}), we have 
\begin{align*}
    &\frac{1}{4L_F^2T_k}\sum\limits_{t=1}^{T_k}\E[\|  \z_t - \nabla F_{\mu}(\x_t) \|^2]\\
    &\leq\frac{\E[\|\varkappa_{k}\|^2]}{\beta_kT_k}   +6\beta_k\sigma^2+ \frac{8L_F^2(1-\beta_k)^2}{\beta_kT_k}\sum\limits_{t=1}^{T_k}\E[\|\x_{t+1}-\x_t \|^2]\\
      &\leq\frac{\E[\|\varkappa_{k}\|^2]}{\beta_kT_k}   
+6\beta_k\sigma^2+ \frac{8L_F^2(1-\beta_k)^2}{\beta_kT_k}\left(\frac{\eta_{k}}{1/4-\eta_{k}L_{F_\mu}/2}\left(\E[\Delta^\mu_{k}] + \eta_{k}\sum\limits_{t=1}^{T_{k}}\E[\|\z_t-\nabla F_{\mu}(\x_t)\|^2]\right)\right)\\
    &\leq \frac{\E[\|\varkappa_{k}\|^2]}{\beta_kT_k}   
+6\beta_k\sigma^2+ \frac{64L_F^2\E[\Delta^\mu_{k}]\eta_k}{\beta_kT_k}+\frac{64L_F^2\eta_k^2}{\beta_kT_k}\sum\limits_{t=1}^{T_k}\|\z_t -\nabla F_{\mu}(\x_t)\|^2],
\end{align*}
where the last inequality is due to $1/(1/4-\eta_{k}L_{F_\mu}/2)\leq 8, (1-\beta_k)^2\leq 1, L^2_{\nabla g} +L_g^2 \leq 2L_F^2$. 

Invoking $\beta_k = c\eta_k^2$ and $c = 576L_F^4$ to above inequality, we  get the conclusion that
\begin{equation*}
    \frac{1}{8L_F^2T_k}\sum\limits_{t=1}^{T_k}\E[\|  \z_t - \nabla F_{\mu}(\x_t) \|^2]\leq \frac{\E[\|\varkappa_{k}\|^2]}{\beta_kT_k}   
+6\beta_k\sigma^2+ \frac{64L_F^2\E[\Delta^\mu_{k}]\eta_k}{\beta_kT_k}.
\end{equation*}
%Then multiply $8L_F^2$ on both sides, we finish the lemma.
\end{proof}

  \subsection{Proof of Lemma~\ref{le:kl}}
  \begin{proof}
 Since $\ell_i(\w)$ is convex for all $i$, by Lemma~\ref{lem:joint-convex} we know $F(\x)$ is convex.
  And thus by the definition of $\bar F_\mu(\mathbf{x})$ we have $\bar F_\mu(\mathbf{x})$ is a strongly convex function. Then by strong convexity, we have
$$
\bar F_\mu(\mathbf{y}) \geq \bar F_\mu(\mathbf{x})+\mathbf{v}^{\top}(\mathbf{y}-\mathbf{x})+\frac{\mu}{2}\|\mathbf{y}-\mathbf{x}\|^{2}, \forall \mathbf{x}, \mathbf{y} \in \X, \mathbf{v} \in \partial \bar F_\mu(\mathbf{x}).
$$
Then
$$
\begin{aligned}
\inf_{\x\in\X}\bar F_\mu\left(\mathbf{x}\right) & \geq \min _{\mathbf{y} \in \X} \bar F_\mu(\mathbf{x})+\mathbf{v}^{\top}(\mathbf{y}-\mathbf{x})+\frac{\mu}{2}\|\mathbf{y}-\mathbf{x}\|^{2} \\&\geq \min _{\mathbf{y}} \bar F_\mu(\mathbf{x})+\mathbf{v}^{\top}(\mathbf{y}-\mathbf{x})+\frac{\mu}{2}\|\mathbf{y}-\mathbf{x}\|^{2} \\
&=\bar F_\mu(\mathbf{x})-\frac{\|\mathbf{v}\|^{2}}{2 \mu}, \quad \forall \mathbf{v} \in \partial \bar F_\mu(\mathbf{x}).
\end{aligned}
$$Hence, $\frac{\|\mathbf{v}\|^{2}}{2 \mu} \geq \bar F_\mu(\mathbf{x})-\inf_{\x\in\X}\bar F_\mu\left(\mathbf{x}\right), \forall \mathbf{v} \in \partial \bar F_\mu(\mathbf{x})$, which implies
$$ 
\operatorname{dist}(0, \partial \bar F_\mu(\mathbf{x}))^{2} \geq 2 \mu\left(\bar F_\mu(\mathbf{x})-\bar F_\mu\left(\mathbf{x}_{*}\right)\right).$$
  \end{proof}
\subsection{Proof of Lemma~\ref{lem:stage_variance}}
\begin{proof}
We use inductions to prove $\E[\|\z_k - \nabla F_{\mu}(\x_k)\|^2]\leq \mu\epsilon_k/4$, $\E[\|g(\x_k) - s_{k}\|^2]\leq \mu\epsilon_k/4$ and $\E[F_{\mu}(\x_{k}) -\inf\limits_{\x \in\X}F_{\mu}(\x)]\leq  \epsilon_{k}$.
Let's consider the first stage in the beginning.

Let $\epsilon_1= \Delta_\mu$, thus $\E[F_{\mu}(\x_{1}) -\inf_{\x \in\X}F_{\mu}(\x)]\leq  \epsilon_{1}$. And we can use a batch size of $4/\mu\epsilon_1$ for initialization.to make sure $\E[\|\nabla F_{\mu}(\x_{1}) - \z_{1}\|^2]\leq \mu\epsilon_{1}/4$,$\E[\|s_{1} - g(\x_{1})\|^2]\leq \mu\epsilon_{1}/4$. 

% Without the loss of generality, we consider the case $\Delta^\mu \geq  \frac{\sigma^2}{\mu }$. The case that $\Delta^\mu \leq \frac{\sigma^2}{\mu }$ can be simply covered by the same analysis as well.
% $\eta_1 = \frac{1}{1300L_F^4}\leq  \frac{1}{1300L_F}\leq\frac{1}{4L_F}$, $L_F\geq 1$, $\beta_1 =\frac{1}{96L_F^2}$, and $8(9L_F^2 + 20L_F^2L_g^2)\eta_1^2\leq 168L_F^4\eta_1^2\leq \beta^2_1$, $T_1 =\max\{\frac{15360 \Delta^\mu L_F^4}{\sigma^2}, \frac{15360 L_F^4}{\mu }\}$

% \begin{equation}
% \label{eqn:thm15-1}
%     \begin{aligned}
% \E[\|\z_{k}-\nabla F_{\mu}(\x_k)\|^2]&= \frac{1}{T_1}\sum\limits_{t=1}^{T_1}  \E[\|\z_t-\nabla F_{\mu}(\x_t)\|^2]\\&\leq \frac{\E[2\|\nabla F_{\mu}(\x_1)-\z_1 \|^2]}{\beta_1 T_1} + \frac{\Delta^\mu }{\eta_1 T_1 } + \frac{20L_F\E[\|g(\x_1)-s_1 \|^2]}{\beta_1 T_1}  + 24\beta_1 L_{F}^2\sigma^2 \\
%  &\leq \frac{2}{\beta_1T_1}\frac{\epsilon_1}{2} + \frac{\Delta^\mu }{\eta_1T_1} + \frac{20L_F}{\beta_1T_1}\cdot\frac{\epsilon_1}{2} + 24\beta_1L_F^2\sigma^2\\
%  &\leq \frac{\mu \epsilon_1}{160L^2_F} + \frac{\sigma^2}{44} + \frac{\mu \epsilon_1}{16L_F} + \frac{\sigma^2}{4}\\
%  &\overset{L_F\geq 1}{\leq}\frac{\mu \epsilon_1}{160} +\frac{\mu\epsilon_1}{44} + \frac{\mu \epsilon_1}{16} + \frac{\mu \epsilon_1}{4}\\
%  &\leq \frac{\mu \epsilon_1}{2}
%     \end{aligned}
% \end{equation}

Suppose that $\E[\|g(\x_{k-1})-s_{k-1}\|^2]\leq \mu \epsilon_{k-1}/4$, $\E[\|\z_{k-1}-\nabla F_{\mu}(\x_{k-1})\|^2]\leq\mu \epsilon_{k-1}/4$ and 
      $\E[F_{\mu}(\x_{k-1}) -\inf_{\x \in\X}F_{\mu}(\x)]\leq  \epsilon_{k-1}$
. By setting $\beta_{k-1} =\min\{ \frac{\mu\epsilon_{k-1}}{384L_F^2\sigma^2}, \frac{1}{384L_F^2}\}, \eta_{k-1} =\min\{ \frac{\mu\epsilon_{k-1}}{4608L_F^4\sigma^2}, \frac{1}{4608L_F^4}\}$ and $T_{k-1} =\max\{\frac{147456L_F^4\sigma^2}{\mu^2\epsilon_{k-1}}, \frac{147456 L_F^4}{\mu}\}$,  it is easy to obtain that $\eta_{k-1}\leq \frac{\beta_{k-1}}{4L_F\sqrt{9+20L_g^2} }$. Therefore, invoking Lemma~\ref{lem:Fmulemma1} we have
\begin{align*}
    &\E[\|\z_{k}-\nabla F_{\mu}(\x_k)\|^2]\\&\leq \frac{1}{T_{k-1}}\sum\limits_{t=1}^{T_{k-1}}  \E[\|\z_t-\nabla F_{\mu}(\x_t)\|^2]\\
    &\leq \frac{\E[2\|\z_{k-1}-\nabla F_{\mu}(\x_{k-1}) \|^2]}{\beta_{k-1} T_{k-1}} + \frac{\E[\Delta^\mu_{k-1}]}{\eta_{k-1} T_{k-1} } + \frac{20L_F\E[\|g(\x_{k-1})-s_{k-1}\|^2]}{\beta_{k-1} T_{k-1}}  + 24\beta_k L_{F}^2\sigma^2 \\
        &\leq \frac{\mu\epsilon_{k-1}}{2\beta_{k-1} T_{k-1}} + \frac{\epsilon_{k-1}}{\eta_{k-1} T_{k-1} } + \frac{5L_F\mu\epsilon_{k-1}}{\beta_{k-1} T_{k-1}}  + 24\beta_{k-1} L_{F}^2\sigma^2.
\end{align*}
 Without  loss of the generality, we consider the case $\mu\epsilon_{k-1}/\sigma^2\leq 1$.  By definition  have $\beta_{k-1}=\mu\epsilon_{k-1}/(384L_F^2\sigma^2)$, $\eta_{k-1}=\mu\epsilon_{k-1}/(4608L_F^4\sigma^2)$ and $T_{k-1}=147456L_F^4\sigma^2/(\mu^2\epsilon_{k-1})$, which imply$$ \frac{1}{\beta_{k-1} T_{k-1}}\leq\frac{\mu}{384L_F^2},\ \ \frac{1}{\eta_{k-1} T_{k-1} }\leq\frac{\mu}{32}\text{ and } 24\beta_{k-1} L_{F}^2\sigma^2\leq\frac{\mu\epsilon_{k-1}}{16}.$$
Then, note $L_F\geq 1$, $\mu<1$ and $\epsilon_k = \epsilon_{k-1}/2$ we have
\begin{align*}
    \E[\|\z_{k}-\nabla F_{\mu}(\x_k)\|^2]&\leq\frac{\mu^2\epsilon_{k-1}}{768L_F^2 } + \frac{\mu\epsilon_{k-1}}{16} + \frac{5\mu^2\epsilon_{k-1}}{384L_F} + \frac{\mu\epsilon_{k-1}}{8} \\
   &\leq\frac{\mu\epsilon_{k-1}}{768} + \frac{\mu\epsilon_{k-1}}{32} + \frac{5\mu\epsilon_{k}}{192} + \frac{\mu\epsilon_{k-1}}{16} \\
   &=\frac{\mu\epsilon_{k}}{192} + \frac{\mu\epsilon_k}{16} + \frac{\mu\epsilon_k}{40} + \frac{\mu\epsilon_k}{8}
   \\&\leq 
   \frac{\mu\epsilon_k}{4}.
\end{align*}
%where the inequality (a) is due to $L_F\geq 1$ and $\mu<1$ and the equality (b) is due to $\epsilon_k = \epsilon_{k-1}/4$.
Next we need to show $\E[\|g(\x_k) - s_{k}\|^2]\leq \mu\epsilon_k/4$ under the assumption that $\E[\|g(\x_{k-1}) - s_{k-1}\|^2]\leq \mu\epsilon_{k-1}/4$.

By Lemma~\ref{lem:recur-g}, we have
\begin{equation*}
    \begin{aligned}
     &\E[\|g(\x_{k}) - s_k\|^2]\\&=\frac{1}{T_{k-1}}    \sum\limits_{t=1}^{T_{k-1}}\E[\|g(\x_{t})-s_{t} \|^2]\\
     &\leq \frac{\E[ \|g(\x_{k-1}) -s_{k-1} \|^2]}{\beta_{k-1}T_{k-1}} + \frac{2L_g^2}{\beta_{k-1}^2T_{k-1}}\sum\limits_{t=1}^{T_{k-1}}\E[\|\x_{t+1} - \x_t \|^2] + \beta_{k-1} \sigma^2\ \\
     &\leq\frac{\mu \epsilon_{k-1}}{4\beta_{k-1}T_{k-1}}  + \frac{2L_g^2}{\beta_{k-1}^2T_{k-1}}\left(\frac{\eta_{k-1}}{1/4-\eta_{k-1}L_{F_\mu}/2}\left(\E[\Delta^\mu_{k-1}] + \eta_{k-1}\sum\limits_{t=1}^{T_{k-1}}\E[\|\z_t-\nabla F_{\mu}(\x_t)\|^2]\right)\right)\\\nonumber&\quad+ \beta_{k-1} \sigma^2,
    \end{aligned}
\end{equation*}
where $\Delta^\mu_{k-1}=F_{\mu}(\x_{k-1})-\inf_{\x\in\X}F_{\mu}(\x)$.
With $1/(1/4-\eta_{k-1}L_{F_\mu}/2)\leq 8 $, $\E[\|g(\x_{k-1})-s_{k-1}\|^2]\leq \mu \epsilon_{k-1}/4$ and 
      $\E[F_{\mu}(\x_{k-1}) -\inf_{\x \in\X}F_{\mu}(\x)]\leq  \epsilon_{k-1}$, it holds that
\begin{equation*}
    \begin{aligned}
     \E[\|g(\x_{k}) - s_k\|^2]
   &\leq\frac{\mu \epsilon_{k-1}}{2\beta_{k-1}T_{k-1}} + \frac{16L_g^2\eta_{k-1} \epsilon_{k-1}}{\beta_{k-1}^2T_{k-1}}+ \frac{4L_g^2\eta^2_{k-1}\mu\epsilon_{k-1} }{\beta_{k-1}^2}+ \beta_{k-1} \sigma^2
    \\&\leq\frac{\mu \epsilon_{k}}{384L_F^2} + \frac{L_g^2\mu \epsilon_{k-1}}{288L_F^4}+ \frac{L_g^2\mu\epsilon_{k-1} }{36L_F^4}+ \frac{\mu\epsilon_{k-1}}{192L_F^2}\\&\leq  \frac{\mu\epsilon_k}{2}.
    \end{aligned}
\end{equation*}

Invoking Lemma~\ref{lem:dist_lemma_1}, at $(k-1)$-th stage ($k> 1$) we have 
\begin{align*}
    & \E[\dist (0, \hat{\partial}\bar F_\mu(\x_k))^2]\\
       &\leq \frac{2+40L_{F_\mu}\eta_{k-1}}{T_{k-1}}\sum\limits_{t=1}^{T_{k-1}}\E[\|\z_t -\nabla F_{\mu}(\x_t)\|^2] + \frac{2\E[\Delta^\mu_{k-1}]}{\eta_{k-1} T_{k-1}} + \frac{40L_{F_\mu}\E[\Delta^\mu_{k-1}]}{T_{k-1}}\\
        &\leq \frac{(2+40L_{F_\mu}\eta_{k-1})\mu\epsilon_{k-1}}{4}+ \frac{2\epsilon_{k-1}}{\eta_{k-1} T_{k-1}} + \frac{40L_{F_\mu}\epsilon_{k-1}}{T_{k-1}}
         \\&\leq \frac{197\mu\epsilon_{k}}{192}+ \frac{\mu\epsilon_{k}}{8} + \frac{40L_{F_\mu}\mu\epsilon_{k-1}}{147456L_F^4}\\&\leq 2\mu\epsilon_k,
\end{align*}
where the second inequality is due to $L_{F_\mu}\eta_{k-1}\leq(3/2)L_F\eta_{k-1}\leq1/1536$.
% \begin{align*}
%     & \E[\dist (0, \hat{\partial}\bar F_\mu(\x_k))^2]\\
%       &\leq \frac{197\eta_{k-1}}{T_{k-1}}\sum\limits_{t=1}^{T_{k-1}}\E[\|\z_t -\nabla F(\x_t)\|^2] + \frac{2\E[\Delta^\mu_{k-1}]}{\eta_{k-1} T_{k-1}} + \frac{40L_{F_\mu}\E[\Delta^\mu_{k-1}]}{T_{k-1}}\\
%         &\leq \frac{(2+40L_{F_\mu}\eta_{k-1})\mu\epsilon_{k}}{2}+ \frac{2\E[\Delta^\mu_{k-1}]}{\eta_{k-1} T_{k-1}} + \frac{40L_{F_\mu}\E[\Delta^\mu_{k-1}]}{T_{k-1}}\\
%      &\overset{\eta_{k-1} L_F \leq 1/1300}{\leq} \frac{21}{10T_{k-1}}\sum\limits_{t=1}^{T_{k-1}}\E[\|\z_t -\nabla F_{\mu}(\x_t)\|^2] + \frac{2\E[\Delta^\mu_{k-1}]}{\eta_{k-1} T_{k-1}} +
%      \frac{20L_F\E[\Delta^\mu_{k-1}]}{T_{k-1}}\\
%      &= \frac{21\E[\|\z_{k} - \nabla F_{\mu}(\x_{k}) \|^2]}{10} + \frac{2\E[\Delta^\mu_{k-1}]}{\eta_{k-1} T_{k-1}} +
%      \frac{20L_F\E[\Delta^\mu_{k-1}]}{T_{k-1}} \\
%      &\overset{Lemma~\ref{lem:stage_variance}}{\leq} \frac{21\mu\epsilon_{k}}{20} + \frac{2\mu\epsilon_{k-1}}{11} + \frac{\mu^2\epsilon_k\epsilon_{k-1}}{768L_F^3}\\
%      &\overset{\epsilon_k = \epsilon_{k-1}/4}{\leq}\frac{21\mu\epsilon_k}{20} + \frac{2\mu\epsilon_k}{5} +\frac{\mu\epsilon_k}{768L_F^2}\leq 2\mu\epsilon_k
% \end{align*}

Since $F_{\mu}(\x_k)\leq \bar F_{\mu}(\x_k)$ and $ \inf_{\x \in\X}F_{\mu}(\x)=\inf_{\x \in\X} \bar F_{\mu}(\x)$, applying Lemma~\ref{le:kl}  we have
\begin{equation*}
    \begin{aligned}
      \E[F_{\mu}(\x_k) - \inf\limits_{\x \in\X}F_{\mu}(\x)]\leq \E[ \bar F_{\mu}(\x_k) -\inf\limits_{\x \in\X}  \bar F_{\mu}(\x)]\leq \frac{1}{2\mu}\E[\dist (0, \hat{\partial} \bar F_{\mu}(\x_k))^2]\leq \frac{2\mu\epsilon_k}{2\mu} = \epsilon_k.
    \end{aligned}
\end{equation*}
This complete the proof of this Lemma.
\end{proof}
\subsection{Proof of Theorem~\ref{thm:KL-RSCCMA}}
\begin{proof}
Invoking Lemma~\ref{lem:stage_variance}, then after $K = \mathcal O(\log_2(\epsilon_1/\epsilon))$ stages, we have
\begin{align*}
    \E[F_{\mu}(\x_K) - \inf\limits_{\x \in\X}F_{\mu}(\x)]\leq \epsilon_K = \frac{\epsilon_1}{2^{K-1}} = \epsilon.
\end{align*}
Since $\sum_{k=1}^K 2^k=\mathcal O (1/\epsilon)$, the overall oracle complexity is
\begin{equation*}
\begin{aligned}
 \sum\limits_{k=1}^{K}T_k +\frac{4}{\mu\epsilon_1}
&\leq36864\sigma^2L_F^4\sum\limits_{k=2}^K\frac{1}{\mu^2\epsilon_k}+\frac{4}{\mu\epsilon_1}\\
&\leq \frac{36864\sigma^2L_F^4}{\mu^2\epsilon}\sum\limits_{k=1}^K \frac{1}{2^k} +\frac{4}{\mu\epsilon_1} \\
&\leq \mathcal O(\frac{1}{\mu^2\epsilon}).
\end{aligned}
\end{equation*}
\end{proof}

\subsection{Proof of Corollary~\ref{cor:CX-RSCCMA}}\label{app:cor1}
It is easy to note that $F_\mu(\x_K) - F_\mu(\x_*)\leq F_\mu(\x_K) - \inf_{\x\in\X} F_\mu(\x)$, where $\x_*=\argmin_{\x\in\X}F(\x)$. Therefore, if after $K$ stages it holds that $\E[F_\mu(\x_K) - \inf_{\x\in\X} F_\mu(\x)]\leq\epsilon/2$ with an oracle complexity of $\mathcal O(1/\mu^2\epsilon)$, we have  $\E[F_\mu(\x_K) - F_\mu(\x_*)]\leq\epsilon/2$ , i.e., $\E[F(\x_K) + \mu\| \x_K\|^2/2 - F(\x_*) - \mu\|\x_* \|^2/2]\leq \epsilon/2$. By Assumption~\ref{ass:1}(a) $\mathcal W$ is bounded by $R$, and then by setting $\mu = \epsilon/(2 (R^2+\tilde\lambda^2))$, with $\|\x\|^2\leq (R^2+\tilde\lambda^2)$ we have
\begin{align*}
  \E[F(\x_K) - F(\x_*)] \leq \frac{\epsilon}{2} + (2 (R^2+\tilde\lambda^2))\frac{\mu}{2} \leq \frac{\epsilon}{2} + \frac{\epsilon}{2} \leq \epsilon
\end{align*}
with an oracle complexity of $\mathcal O(1/\epsilon^3)$.

\subsection{Proof of Lemma~\ref{lem:storm-var}}
\begin{proof}
We use inductions to prove $\E[\|\varkappa_{k}\|^2]\leq \mu\epsilon_k/16L_F^2$ and $\E[F_{\mu}(\x_{k}) -\inf\limits_{\x \in\X}F_{\mu}(\x)]\leq  \epsilon_{k}$.
Let's consider the first stage in the beginning.

Let $\epsilon_1= \Delta_\mu$, thus $\E[F_{\mu}(\x_{1}) -\inf_{\x \in\X}F_{\mu}(\x)]\leq  \epsilon_{1}$. And we can use a batch size of $48L_F^2/\mu\epsilon_1$ for initialization to make sure $\E[\|\varkappa_{1}\|^2] = \E[\|s_1 - g(\x_1) \|^2 + \|\v_1 - \nabla_\w g(\x_1) \|^2 + |u_1 - \nabla_\lambda g(\x_1)|^2]\leq \mu\epsilon_1/16L_F^2$.

Suppose that $\E[\|\varkappa_{k-1}\|^2]\leq \mu\epsilon_{k-1}/16L_F^2$ and 
      $\E[F_{\mu}(\x_{k-1}) -\inf_{\x \in\X}F_{\mu}(\x)]\leq  \epsilon_{k-1}$. By setting $\beta_{k-1} =\min\{ \frac{\mu\epsilon_{k-1}}{768L_F^2\sigma^2}, \frac{1}{768L_F^2}\}, \eta_{k-1} =\min\{ \frac{\sqrt{\mu\epsilon_{k-1}}}{18432L_F^3\sigma^2}, \frac{1}{18432L_F^4}\}$ and $T_{k-1} =\max\{\frac{147456L_F^3\sigma}{\mu^{3/2}\sqrt{\epsilon_{k-1}}},\frac{147456L_F^4\sigma^2}{\mu\epsilon_{k-1}}, \frac{147456 L_F^4}{\mu}\}$. 

Then following the above Lemma~\ref{lem:storm_2}, for $k\geq 1$,
\begin{align*}
    \E[\|\varkappa_{k}\|^2]&\leq\frac{1}{4L_F^2T_{k-1}}\sum\limits_{t=1}^{T_{k-1}}\E[\|  \z_t - \nabla F_{\mu}(\x_t) \|^2]\\&\leq \frac{2\E[\|\varkappa_{k-1}\|^2]}{\beta_{k-1}T_{k-1}}   
+12\beta_{k-1}\sigma^2+ \frac{128L_F^2\E[\Delta^\mu_{k-1}]\eta_{k-1}}{\beta_{k-1}T_{k-1}}\\&\leq \frac{\mu\epsilon_{k-1}}{4L_F^2\beta_{k-1}T_{k-1}}   
+12\beta_{k-1}\sigma^2+ \frac{128L_F^2\epsilon_{k-1}\eta_{k-1}}{\beta_{k-1}T_{k-1}}.
\end{align*}
Without  loss of the generality, we consider the case $\mu\epsilon_{k-1}/\sigma^2\leq 1$.  By definition we have $\beta_{k-1}=\mu\epsilon_{k-1}/(768L_F^2\sigma^2)$, $\eta_{k-1}=\sqrt{\mu\epsilon_{k-1}}/(9216L_F^3\sigma)$, which imply$$ \frac{1}{\beta_{k-1} T_{k-1}}\leq\frac{1}{96L_F^2},\ \ \frac{1}{\eta_{k-1} T_{k-1} }\leq\frac{\mu}{8}\text{ and } 12\beta_k\sigma^2\leq\frac{\mu\epsilon_{k-1}}{64L_F^2}.$$

Then, noting $L_F\geq 1$, $\mu<1$ and $\epsilon_k = \epsilon_{k-1}/2$ we have
\begin{align*}
    \E[\|\varkappa_{k}\|^2]&\leq\frac{\mu\epsilon_{k-1}}{384L_F^4 } + \frac{\mu\epsilon_{k-1}}{64L_F^2}  + \frac{\mu\epsilon_{k-1}}{6912L_F^4} \\
   &\leq\frac{\mu\epsilon_{k}}{192L_F^2 } + \frac{\mu\epsilon_{k}}{32L_F^2}  + \frac{\mu\epsilon_{k}}{3456L_F^2} \\&\leq 
   \frac{\mu\epsilon_k}{16L_F^2}.
\end{align*}
Then by Eq.~(\ref{eq:mukappa}), we have $\|\z_k -\nabla F_{\mu}(\x_k) \|^2\leq 4L_F^2\|\varkappa_{k} \|^2  \leq \mu\epsilon_k/4$.
%where the inequality (a) is due to $L_F\geq 1$ and $\mu<1$ and the equality (b) is due to $\epsilon_k = \epsilon_{k-1}/4$.
Invoking Lemma~\ref{lem:dist_lemma_1}, at $(k-1)$-th stage ($k> 1$) we have 
\begin{align*}
    & \E[\dist (0, \hat{\partial}\bar F_\mu(\x_k))^2]\\
       &\leq \frac{2+40L_{F_\mu}\eta_{k-1}}{T_{k-1}}\sum\limits_{t=1}^{T_{k-1}}\E[\|\z_t -\nabla F_{\mu}(\x_t)\|^2] + \frac{2\E[\Delta^\mu_{k-1}]}{\eta_{k-1} T_{k-1}} + \frac{40L_{F_\mu}\E[\Delta^\mu_{k-1}]}{T_{k-1}}\\
        &\leq \frac{(2+40L_{F_\mu}\eta_{k-1})\mu\epsilon_{k-1}}{2}+ \frac{2\epsilon_{k-1}}{\eta_{k-1} T_{k-1}} + \frac{40L_{F_\mu}\epsilon_{k-1}}{T_{k-1}}
         \\&\leq \frac{773\mu\epsilon_{k}}{768}+ \frac{\mu\epsilon_{k}}{2} + \frac{40L_{F_\mu}\mu\epsilon_{k-1}}{73728L_F^4}\\&\leq 2\mu\epsilon_k,
\end{align*}
where the second inequality is due to $L_{F_\mu}\eta_{k-1}\leq(3/2)L_F\eta_{k-1}\leq1/3072$.
% \begin{align*}
%     & \E[\dist (0, \hat{\partial}\bar F_\mu(\x_k))^2]\\
%       &\leq \frac{197\eta_{k-1}}{T_{k-1}}\sum\limits_{t=1}^{T_{k-1}}\E[\|\z_t -\nabla F(\x_t)\|^2] + \frac{2\E[\Delta^\mu_{k-1}]}{\eta_{k-1} T_{k-1}} + \frac{40L_{F_\mu}\E[\Delta^\mu_{k-1}]}{T_{k-1}}\\
%         &\leq \frac{(2+40L_{F_\mu}\eta_{k-1})\mu\epsilon_{k}}{2}+ \frac{2\E[\Delta^\mu_{k-1}]}{\eta_{k-1} T_{k-1}} + \frac{40L_{F_\mu}\E[\Delta^\mu_{k-1}]}{T_{k-1}}\\
%      &\overset{\eta_{k-1} L_F \leq 1/1300}{\leq} \frac{21}{10T_{k-1}}\sum\limits_{t=1}^{T_{k-1}}\E[\|\z_t -\nabla F_{\mu}(\x_t)\|^2] + \frac{2\E[\Delta^\mu_{k-1}]}{\eta_{k-1} T_{k-1}} +
%      \frac{20L_F\E[\Delta^\mu_{k-1}]}{T_{k-1}}\\
%      &= \frac{21\E[\|\z_{k} - \nabla F_{\mu}(\x_{k}) \|^2]}{10} + \frac{2\E[\Delta^\mu_{k-1}]}{\eta_{k-1} T_{k-1}} +
%      \frac{20L_F\E[\Delta^\mu_{k-1}]}{T_{k-1}} \\
%      &\overset{Lemma~\ref{lem:stage_variance}}{\leq} \frac{21\mu\epsilon_{k}}{20} + \frac{2\mu\epsilon_{k-1}}{11} + \frac{\mu^2\epsilon_k\epsilon_{k-1}}{768L_F^3}\\
%      &\overset{\epsilon_k = \epsilon_{k-1}/4}{\leq}\frac{21\mu\epsilon_k}{20} + \frac{2\mu\epsilon_k}{5} +\frac{\mu\epsilon_k}{768L_F^2}\leq 2\mu\epsilon_k
% \end{align*}

Since $F_{\mu}(\x_k)\leq \bar F_{\mu}(\x_k)$ and $ \inf_{\x \in\X}F_{\mu}(\x)=\inf_{\x \in\X}  \bar F{\mu}(\x)$, applying Lemma~\ref{le:kl}  we have
\begin{equation*}
    \begin{aligned}
      \E[F_{\mu}(\x_k) - \inf\limits_{\x \in\X}F_{\mu}(\x)]\leq \E[ \bar F_{\mu}(\x_k) -\inf\limits_{\x \in\X}  \bar F_{\mu}(\x)]\leq \frac{1}{2\mu}\E[\dist (0, \hat{\partial} \bar F_{\mu}(\x_k))^2]\leq \frac{2\mu\epsilon_k}{2\mu} = \epsilon_k.
    \end{aligned}
\end{equation*}
This complete the proof of this Lemma.
\end{proof}

\subsection{Proof of Theorem~\ref{thm:storm-KL}}
\begin{proof}
Invoking Lemma~\ref{lem:storm-var}, then after $K = \mathcal O(\log_2(\epsilon_1/\epsilon))$ stages, we have
\begin{align*}
    \E[F_{\mu}(\x_K) - \inf\limits_{\x \in\X}F_{\mu}(\x)]\leq \epsilon_K = \frac{\epsilon_1}{2^{K-1}} = \epsilon.
\end{align*}
Since $\sum_{k=1}^K 2^k=\mathcal O (1/\epsilon)$, the overall oracle complexity is
\begin{equation*}
\begin{aligned}
 \sum\limits_{k=1}^{K}T_k +\frac{48L_F^2}{\mu\epsilon_1}
&\leq\mathcal{O}\left(\sum\limits_{k=2}^K\max\left(\frac{1}{\mu\epsilon_k},\frac{1}{\mu^{3/2}\sqrt{\epsilon_k}}\right)\right)+\frac{48L_F^2}{\mu\epsilon_1}\\
&\leq\mathcal{O}\left(\sum\limits_{k=2}^K\max\left(\frac{2^k}{\mu},\frac{\sqrt{2}^k}{\mu^{3/2}\sqrt{\epsilon_k}}\right)\right) +\frac{48L_F^2}{\mu\epsilon_1} \\
&\leq\mathcal{O}\left(\max\left(\frac{1}{\mu\epsilon},\frac{1}{\mu^{3/2}\sqrt{\epsilon}}\right)\right).
\end{aligned}
\end{equation*}
This complete the proof.
\end{proof}
\section{Derivation of the Compositional Formulation}\label{sec:derivation}
Recall the original KL-constrained DRO  problem: \begin{align*}
\min_{\w \in \mathcal W} \max_{\{\p\in\Delta_n:D(\p, \mathbf 1/n)\leq  \rho\} }\sum_{i=1}^np_i\ell_i(\mathbf w) - \lambda_0 D(\p, 1/n),
\end{align*}
where $\Delta_n=\{\mathbf p\in\mathbb R^n: \sum_{i=1}^n p_i=1, 0\leq p_i\leq 1\}$, $D(\p, 1/n)$ is the KL divergence and $\lambda_0$ is a small positive constant.

In order to tackle this problem, let us first consider the robust loss 
\begin{align*}
 \max_{\{\p\in\Delta_n:D(\p, 1/n)\leq \rho\}}\sum_{i=1}^n p_i \ell_i(\w) -  \lambda_0 D(\p, 1/n).
 \end{align*}
And then we invoke the dual variable $\lambda$ to transform this primal problem to the following form
 \begin{align*}
 \max_{\p\in\Delta_n}\min_{\bar\lambda\geq 0}\sum_{i=1}^n p_i \ell_i(\w) - \bar\lambda (D(\p, 1/n) -\rho) - \lambda_0 D(\p, 1/n).
 \end{align*}
Since this problem is concave in term of $\p$ given $\w$, by strong duality theorem, we have
\begin{align*}
 &\max_{\p\in\Delta_n}\min_{\bar\lambda\geq 0}\sum_{i=1}^n p_i \ell_i(\w) - \bar\lambda (D(\p, 1/n) -\rho) - \lambda_0 D(\p, 1/n)\\
 &= \min_{\bar\lambda\geq 0}\max_{\p\in\Delta_n}\sum_{i=1}^n p_i \ell_i(\w) - \bar\lambda (D(\p, 1/n) -\rho) - \lambda_0 D(\p, 1/n).
 \end{align*}
 Let $\lambda=\bar\lambda+\lambda_0$, we have
 \begin{align*}
 &\min_{\bar\lambda\geq 0}\max_{\p\in\Delta_n}\sum_{i=1}^n p_i \ell_i(\w) - \bar\lambda (D(\p, 1/n) -\rho) - \lambda_0 D(\p, 1/n)\\
    &= \min_{\lambda\geq \lambda_0}\max_{\p\in\Delta_n}\sum_{i=1}^n p_i \ell_i(\w) - \lambda (D(\p, 1/n) -\rho) - \lambda_0 \rho.
 \end{align*}
Then the original problem is equivalent to the following problem 
\begin{equation*}
%\label{eqn:prob11}
    \begin{aligned}
     \min_{\w \in \mathcal W} \min\limits_{\lambda \geq \lambda_0}\max_{\p\in\Delta_n}\sum_{i=1}^n p_i \ell_i(\w) - \lambda (D(\p, 1/n) -\rho) - \lambda_0 \rho,
    \end{aligned}
\end{equation*}
Next we  fix $\x=(\w^\top,\lambda)^\top$  and derive an optimal solution $\p^*(\mathbf x)$ which depends on $\x$ and solves the inner maximization problem.  We consider the  following problem 
\begin{align*}
\min_{\p\in\Delta_n}-\sum_{i=1}^n p_i \ell_i(\w) + \lambda D(\p, 1/n).
\end{align*}
which has the same optimal solution $\p^*(\mathbf x)$ with our problem.

 There are three constraints to handle, i.e., $p_i\geq 0, \forall i$ and $p_i\leq 1, \forall i$ and $\sum_{i=1}^n p_i=1$.  Note that the constraint $p_i\geq 0$ is enforced by the term $p_i\log(p_i)$, otherwise the above objective will become infinity. As a result, the constraint $p_i<1$ is automatically satisfied due to $\sum_{i=1}^np_i=1$ and $p_i\geq0$. Hence, we only need to explicitly tackle the constraint $\sum_{i=1}^n p_i=1$. To this end, we define the following Lagrangian function
\begin{align*}
 L_{\mathbf x}(\mathbf p, \mu) = -\sum_{i=1}^n p_i\ell_i(\mathbf w)  +  \lambda\left(\log n + \sum_{i=1}^n  p_i \log (p_i)\right) + \mu(\sum_{i=1}^n  p_i - 1), 
\end{align*}
where $\mu$ is the Lagrangian multiplier for the constraint $\sum_{i=1}^n  p_i=1$. The optimal solutions satisfy the KKT conditions:  
\begin{align*}
    &- \ell_i(\mathbf w)  +  \lambda \left(\log (p^*_i(\mathbf x)) + 1\right) + \mu  = 0 \text{ and }\sum_{i=1}^n  p^*_i(\mathbf x)=1.
\end{align*}
From the first equation, we can derive $p^*_i(\mathbf x) \propto \exp(\ell_i(\mathbf w)/\lambda)$. Due to the second equation, we can conclude that $p^*_i(\mathbf x) = \frac{\exp(\ell_i(\mathbf w)/\lambda)}{\sum_{i=1}^n  \exp(\ell_i(\mathbf w)/\lambda)}$. Plugging this optimal $\p^*(\w)$ into the inner maximization problem, we have 
\begin{align*}
     \sum_{i=1}^np^*_i(\mathbf x)\ell_i(\mathbf w)  - \lambda\left(\log n + \sum_{i=1}^n  p_i^*(\mathbf w) \log (p_i^*(\mathbf w)) \right)  = \lambda \log \left(\frac{1}{n}\sum_{i=1}^n  \exp\left(\frac{\ell_i(\mathbf w)}{\lambda}\right)\right) , 
\end{align*}
% Therefore, combining Lemma~\ref{lem:lambda_upper} we get the following equivalent problem to the original problem
% \begin{equation*}
%     \begin{aligned}
%       %\min_{\w \in \mathcal W}\min\limits_{\lambda_0 \leq \lambda\leq \tilde\lambda}\underbrace{\lambda %\log\frac{1}{n}\sum_{i=1}^n  \exp(\frac{\ell_i(\w)}{\lambda}) +\lambda \rho}_{F(\w,\lambda)}.
%       \min_{\w \in \mathcal W}\min\limits_{\lambda_0 \leq \lambda\leq \tilde\lambda}\lambda \log\left(\frac{1}{n}\sum_{i=1}^n \exp\left(\frac{\ell_i(\w)}{\lambda}\right)\right) +\lambda \rho.
%     \end{aligned}
% \end{equation*}
% which is Eq.~(\ref{eqn:prob1}) in the paper. 
Therefore, we get the following equivalent problem to the original problem
\begin{equation*}
    \begin{aligned}
      %\min_{\w \in \mathcal W}\min\limits_{\lambda_0 \leq \lambda\leq \tilde\lambda}\underbrace{\lambda %\log\frac{1}{n}\sum_{i=1}^n  \exp(\frac{\ell_i(\w)}{\lambda}) +\lambda \rho}_{F(\w,\lambda)}.
      \min_{\w \in \mathcal W}\min\limits_{\lambda \geq \lambda_0}\lambda \log\left(\frac{1}{n}\sum_{i=1}^n \exp\left(\frac{\ell_i(\w)}{\lambda}\right)\right) +\lambda \rho.
    \end{aligned}
\end{equation*}
which is Eq.~(\ref{eq:cdro3}) in the paper. 

\end{document}